\documentclass[11pt]{article}
\usepackage[utf8]{inputenc}
\def\withnotes{1}
\def\withcolors{1}
\usepackage{tex_macros}
\usepackage{graphicx}
\usepackage{subfigure}
\usepackage{bm}
\usepackage{fullpage}
\usepackage{framed}
\usepackage{physics}
\usepackage{nicefrac}
\usepackage{xcolor,cancel}
\usepackage[round]{natbib}
\usepackage{wrapfig}

\newtheorem*{lemma*}{Lemma}
\newenvironment{proof_outline}%
  {%
   \par\noindent{\bfseries\upshape Proof Outline\ }%
  }%
  {\qed}

\newtheorem{assumption}[theorem]{Assumption}

\title{Asymptotic Learning Curves for Diffusion Models with\\ Random Features Score and Manifold Data}

\author{Anand Jerry George
        \and Nicolas Macris}

\date{École Polytechnique Fédérale de Lausanne (EPFL),\\
Lab for Statistical Mechanics of Inference in Large Systems (SMILS),\\ CH-1015 Lausanne,\\ Switzerland}
\begin{document}
\maketitle

\begin{abstract}
We study the theoretical behavior of denoising score matching--the learning task associated to diffusion models--when the data distribution is supported on a low-dimensional manifold and the score is parameterized using a random feature neural network. We derive asymptotically exact expressions for the test, train, and score errors in the high-dimensional limit. Our analysis reveals that, for linear manifolds the sample complexity required to learn the score function scales linearly with the intrinsic dimension of the manifold, rather than with the ambient dimension. Perhaps surprisingly, within our model, the benefits of low-dimensional structure starts to diminish once we have a non-linear manifold. These results indicate that diffusion models can benefit from structured data; however, the dependence on the specific type of structure is subtle and intricate.
\end{abstract}

\section{Introduction}
\subsection{Generative modelling}
In generative modelling, we are concerned with the following problem: Given a set $\cS$ of i.i.d. samples $\{x_i\}_{i=1}^n$ from an unknown probability distribution $P_0$, we want to generate a new sample from $\pi$ independent of $\cS$. Diffusion models have recently emerged as a powerful class of generative models, achieving state-of-the-art performance in high-dimensional data generation tasks such as image, audio, and molecular synthesis. These models are based on learning the \textit{score function}—the gradient of the log-density—of a sequence of progressively noised versions of the data distribution, typically via denoising score matching (DSM). Once the score function is learned, sampling can be performed by simulating a reverse-time stochastic differential equation or its discretizations, providing a flexible framework for generative modeling.

Despite their empirical success, the theoretical understanding of diffusion models remains limited, particularly in regimes that reflect the structure of real-world data. A commonly held hypothesis in machine learning is that high-dimensional data concentrate near low-dimensional structures, often idealized as smooth manifolds embedded in ambient Euclidean space. This manifold property is conjectured to be the primary reason behind the tractability of many high-dimensional problems that are otherwise plagued by the \textit{curse of dimensionality}. However, its implications for diffusion-based generative models and score matching objectives are not yet fully understood. A theoretical study of diffusion models for manifold data first appeared in \cite{pidstrigach_score-based_2022}. They obtained conditions on score for diffusion models to sample from the data manifold. Convergence of the backward dynamics for manifold data was studied in \cite{bortoli_convergence_2022}. In particular, they assumed that the score function is learned to a certain accuracy apriori. More recently, \cite{azangulov_convergence_2025} analyzed diffusion models end-to-end for manifold data. 

In this work, we develop a theoretical analysis of denoising score matching when the data distribution is supported on a smooth manifold and a random feature neural network (RFNN) is used to learn the score function. The key difference of our contributions from prior works is that we rely on an asymptotically precise characterization instead of non-asymptotic bounds.

\subsection{Context of This Work}
A key question in the theory of diffusion models is the sample complexity of devising an approximate score function. When using an \textit{empirical optimal score} (see eqn~\eqref{eqn:emp_opt_score}) function, \cite{biroli_dynamical_2024} showed that the required number of samples grows exponentially with the ambient dimension $d$. Subsequent works \cite{achilli_memorization_2025,achilli_losing_2024, george_analysis_2025} demonstrated that when the data distribution is supported on a low-dimensional manifold, the exponential dependence shifts from the ambient dimension $d$ to the intrinsic dimension $D$. These results provided theoretical evidence that geometric structure alone can reduce the effective sample complexity significantly. Further, \cite{george_denoising_2025} showed that when the score is parameterized using a random feature neural network, the sample complexity scales linearly with the ambient dimension. Clearly, the limited approximation capacity of score parameterizations and the data structure plays key role in the practical success of diffusion models. \textit{The present work takes a step further: we consider the situation when both sources of structure are taken into account—namely, a lower-dimensional manifold model for the data and a random feature parameterization for the score.}

\subsection{Our Contributions}
We use a random feature neural network (RFNN) (see Sec.~\ref{sec:RFNN}) to parameterize the score function, and hidden manifold model (HMM) (see Sec. \ref{sec:HMM}) for data. We work in a regime where the ambient data dimension $d$, intrinsic dimension $D$, number of samples $n$, and number of neurons in RFNN $p$ go to infinity, while the ratios $\psi_D = \frac{D}{d}, \psi_n = \frac{n}{d}, \psi_p = \frac{p}{d}$ stay fixed. In this regime,
\begin{enumerate}
    \item We derive a precise, asymptotic characterization of test and train errors (eqns~\ref{eqn:test_err},\ref{eqn:train_err}) for the minimizer of denoising score matching (DSM) loss \eqref{eqn:dsm_rfnn}.
    \item In addition, we also derive the score error (eqn~\ref{eqn:score_err}) using the test error and the MMSE estimates for Generalized Linear Models obtained in \cite{barbier_optimal_2019}. 
    \item We demonstrate that for `sufficiently linear' manifolds, the number of samples required to learn the score function to a certain accuracy depends linearly on the intrinsic dimension $D$.
    \item Our mathematical analysis also yields a refined characterization of the Gaussian equivalence principle, which may be of independent interest.
\end{enumerate}

\subsection{Related Works}
Diffusion models \cite{sohl-dickstein_deep_2015,song_generative_2019,ho_denoising_2020,song_score-based_2020} are a class of generative models that leverage the non-equilibrium dynamics of diffusion processes to model complex data distributions. Since their introduction, a number of architectural and algorithmic advances \cite{dhariwal_diffusion_2021,rombach_high-resolution_2022,ho_classifier-free_2021,nichol_glide_2022} have established diffusion models as state-of-the-art methods for high-fidelity image generation.

Alongside these empirical successes, a growing body of work has investigated the theoretical foundations of diffusion models. Several studies analyze the accuracy of the sampling procedure by bounding the distance between the generated and target distributions \cite{chen_sampling_2022,benton_nearly_2023,chen_improved_2023,bortoli_convergence_2022}. These results typically assume access to a score function that is learned \emph{a priori} with a prescribed accuracy. Among them, \cite{bortoli_convergence_2022} provides convergence guarantees in the setting where the data distribution is supported on a manifold.

Complementary to sampling analyses, recent works focus on understanding the learning of the score function itself \cite{cui_analysis_2023,shah_learning_2023,han_neural_2023,zeno_when_2025}. End-to-end theoretical studies of diffusion models \cite{kadkhodaie_generalization_2023,chen_score_2023,li_generalization_2023,wang_evaluating_2024,cui_precise_2025} further shed light on generalization and memorization phenomena. In particular, \cite{li_generalization_2023} and \cite{saha_generalization_2025} consider score functions parameterized by random feature neural networks and derive bounds on the KL divergence between the learned and target distributions. The geometric properties of manifolds implicitly learned by diffusion models are analyzed in \cite{pidstrigach_score-based_2022}.

From a statistical physics perspective, memorization in high-dimensional generative models has been studied using the empirical optimal score function in \cite{biroli_dynamical_2024,raya_spontaneous_2023,ambrogioni_search_2024,achilli_losing_2024,achilli_memorization_2025}. A geometric viewpoint on memorization is proposed in \cite{ross_geometric_2024}, while \cite{bonnaire_why_2025} investigates how early stopping can mitigate memorization effects.

\section{Preliminaries}
We briefly discuss some basics of diffusion models, RFNN, and hidden manifold model. 
\paragraph{Diffusion Models}
Consider a set of $n$ i.i.d. samples $\cS = \{x_i\}_{i=1}^n$ from an unknown distribution $P_0$ on $\R^d$. Generative modeling aims to leverage the information in this set to draw new samples from $P_0$. Diffusion models address the problem by time reversing a diffusion process that transports $P_0$ to a known distribution such as a Gaussian. 
In this work, we let the forward process to be a  Ornstein-Uhlenbeck (OU) process. The stochastic differential equation (SDE) corresponding to an OU process is
\begin{equation}\label{eqn:forward_ou}
    dX_t = -X_t dt + \sqrt{2} dB_t\;,\quad X_0\sim P_0\;.
\end{equation}
Here, $B_t$ is a standard $d$-dimensional Brownian motion. The distribution of $X_t$ given $X_0$ can be computed in closed form and is given by $\cN{e^{-t}X_0,(1-e^{-2t})I_d}$. As $t\to\infty$, the distribution of $X_t$ tends to be the $d$-dimensional Gaussian distribution with covariance $I_d$, regardless of $X_0$. 
Let $P_t$ denote the probability distribution of $X_t$:
\begin{equation}\label{eqn:marginal_prob_forward}
    P_t(x) = {(2\pi h(t))^{-d/2}}\int_{\R^d}  d P_0(x_0) \; e^{-\frac{\norm{x-a_tx_0}^2}{2h(t)}}\;,
\end{equation} 
where $a_t=e^{-t}, h_t = 1-e^{-2t}$.
Then, for a fixed $T>0$ and $Y_T\sim P_T$, we define the \textit{time reversal} of the forward process (\ref{eqn:forward_ou}) as 
\begin{equation}\label{eqn:backward_sde_brown}
    -d Y_t = (Y_t+2\nabla\log P_{t}(Y_t)) \;  d t+ \sqrt{2}\; d \tilde{B}_t\quad \;,
\end{equation}
where the SDE runs backward in time starting from $Y_T$, and $\tilde{B}_t$ is a different instance of standard Brownian motion.
The term \emph{time reversal} \citet{anderson_reverse-time_1982} here means that the distributions of $Y_t$ and $X_t$ are identical for every $t$.
If we initiate the backward process with $Y_T\sim P_T$, the distribution of $Y_0$ will be $P_0$. However, since $P_T$ is unknown due to the lack of knowledge of $P_0$, we instead start the reverse process with $Y_T\sim\cN{0,I_d}$ which is a reasonable approximation for large $T$.

The main ingredient required to implement the backward process is $\nabla\log P_t$, known as the \textit{score} function of $P_t$. We call this the \emph{exact score} in order to distinguish it from the learned score used in practice. The learning task is to obtain a reasonable estimate of the exact score using the dataset $\cS$. A possible approach is to minimize the following score matching objective: 
 $\cL_{\text{SM}}(s) = \int_0^T  \dd t \; \shortexpect_{x_t\sim P_t}{\norm{s(t,x_t)-\nabla\log P_t(x_t)}^2}$.
However, the $\cL_{\text{SM}}$ loss function is not practical, as $\nabla\log P_t$ is unknown. Nevertheless, it is possible to construct an equivalent objective, the denoising score matching (DSM) loss \cite{vincent_connection_2011}: $\cL_{\text{DSM}}(s) = \int_0^T  \dd t \; w(t)\shortexpect\norm{s(t,x_t)-\nabla\log P_t(x_t|x_0)}^2 ,$
where $w$ is a weighting function and the expectation is with respect to $x_0$ and $x_t$.
Following \cite{song_score-based_2020}, we choose $w(t) = (\shortexpect_{x_0,x_t}{\norm{\nabla\log P_t(x_t|x_0)}^2})^{-1}$.
For OU process, we can compute $\nabla\log P_t(x_t|x_0)$ in closed form. We can write $x_t\sim P_t$ as  $x_t = a_t x_0+\sqrt{h_t}z$, where $x_0\sim P_0,\; z\sim\cN{0,I_d}$ are independent rvs and $a_t=e^{-t}$,\; $h_t=1-e^{-2t}$. Consequently, $\nabla\log P_t(x_t|x_0) = -\frac{(x_t-a_tx_0)}{h_t}=-\frac{z}{\sqrt{h_t}}$. The weight function is given by $w(t)=\frac{h_t}{d}$. Substituting these, we can write $\cL_{\text{DSM}}(s) = \int_0^T  \dd t  \frac{1}{d}\shortexpect\norm{\sqrt{h_t}s(t,a_tx_0+\sqrt{h_t}z)+z}^2,$
where the expectation is with respect to $x_0$ and $z$. Since $P_0$ is unknown and only samples from it are available, we use an empirical estimate for the expectation with respect to $x_0$. Finally, we get
\begin{align}\label{eqn:dsm_minf}
    \cL(s) &= \int_0^T \dd t \; \frac{1}{dn}\sum_{i=1}^{n}{\shortexpect_{z}{\norm{\sqrt{h_t}s(t, a_tx_i+\sqrt{h_t}z)+z}^2}}, 
\end{align}
which should, in theory, be minimized.

\paragraph{Score model: Random Features Neural Network}\label{sec:RFNN}
In practice, the score function $s$ is typically chosen from a parametric class of functions, and the DSM objective (\ref{eqn:dsm_minf}) is minimized within this class, with an appropriate regularization. In this work, we represent the score function using a \textit{random features} neural network (RFNN) \cite{rahimi_random_2007}. A RFNN is a two-layer neural network in which the first layer weights are randomly chosen and fixed, while the second layer weights are learned during training. It is a function from $\R^d$ to $\R^d$ of the form
$s_A(x|W) = \rat{A}{p}\act{\rat{W}{d}x},$
where $W\in\R^{p\times d}$ is a random matrix with its elements chosen i.i.d. from $\cN{0,1}$, $\varrho$ is an activation function acting element-wise and $A\in\R^{d\times p}$ are the second layer weights that need to be learned. The RFNN
(usually considered for scalar output) is a simple neural network  amenable to theoretical analysis and is able to capture interesting characteristics observed in more complicated neural network models, such as the double descent curve related to overparametrized regimes~\cite{mei_generalization_2022,bodin_model_2021,bodin_gradient_2022}.

\paragraph{Data Model: Hidden Manifold Model}\label{sec:HMM}
We consider a hidden manifold model \cite{goldt_modeling_2020,goldt_gaussian_2022} for data, which we define as follows: let $\{\xi_i\}_{i=1}^n$ be i.i.d. with $\xi_i\sim\cN{0,I_D}$, where $D\le d$. Let $M\in\R^{d\times D}$ be a random matrix with i.i.d. $\cN{0,1}$ entries. Let $\sigma$ be a non-linearity that acts entry-wise. Then, input data $\cS = \{x_i\}_{i=1}^n$ is defined using $x_i=\man{\rat{M}{D}\xi_i}$. In this model the dataset $\mathcal{S}$ lies in a $D$-dimensional manifold embedded in an ambient space $\mathbb{R}^d$. Intuitively, this manifold corresponds to taking a $D$-dimensional hyperplane defined by the matrix $M$, and applying a deformation, thanks to a smooth function $\sigma$.
\section{Main Results}
We characterize the asymptotic test, train, and score errors (see Sec.~\ref{sec:eval_metrics} for their definitions) for the minimizer of denoising score matching loss when the score function is parameterized using a RFNN and the data comes from HMM. Our results hold in the high-dimensional regime where $d,D,n,p\to\infty$, while the ratios $\psi_D=\frac{D}{d},\psi_n=\frac{n}{d},$ and $\psi_p=\frac{p}{d}$ are fixed.

We further assume that, at each time 
$t$, an independent RFNN is used to learn the score function corresponding to that time. While this assumption simplifies the analysis relative to practical implementations, it is often used in prior theoretical works (see for e.g., \citet{cui_analysis_2023,george_denoising_2025,bonnaire_why_2025}). Under this assumption, minimizing the DSM loss \eqref{eqn:dsm_minf} is equivalent to minimizing its integrand separately at each time 
$t$. Accordingly, we restrict our attention to the minimization problem at a fixed time $t$. After introducing a regularizaiton parameter $\lambda>0$, the loss function \eqref{eqn:dsm_minf} for RFNN score at a fixed time $t$ becomes
\begin{equation}\label{eqn:dsm_rfnn}
    \cL_t(A_t) =\frac{1}{dn}\sum_{i=1}^{n}{\shortexpect_{z}{\norm{\sqrt{h_t}s_{A_t}(a_tx_i+\sqrt{h_t}z|W_t)+z}^2}}+\frac{h_t\lambda}{dp}\norm{A_t}_F^2.
\end{equation}
Since \eqref{eqn:dsm_rfnn} is a squared loss, its minimizer $\hat{A}_t$ can be written in closed form and is given in Appendix~\ref{apndx_sec:optimal_rfnn_score}. Next, we define the learning errors for which we derive asymptotic expressions.
\subsection{Evaluation Metrics: Test, Train, and Score Errors}\label{sec:eval_metrics}
We evaluate the score learning process through {\it test}, {\it train}, and {\it score} errors. For the minimizer of \eqref{eqn:dsm_rfnn} $\hat{A}_t$ the expressions for these errors are given as follows:
\begin{align}
    \cE_{\text{test}}(\hat{A}_t) &= \frac{1}{d}\shortexpect_{x\sim P_0,z\sim\cN{0,I}}{\norm{\sqrt{h_t}s_{\hat{A}_t}(a_tx+\sqrt{h_t}z|W_t)+z}^2} \label{eqn:test_err}\;,\\
    \cE_{\text{train}}(\hat{A}_t) &= \frac{1}{dn}\sum_{i=1}^{n}\shortexpect_z{\norm{\sqrt{h_t}s_{\hat{A}_t}(a_tx_i+\sqrt{h_t}z|W_t)+z}^2} \label{eqn:train_err}\;,\\
    \cE_{\text{score}}(\hat{A}_t) &= \frac{1}{d}\shortexpect_{x\sim P_t}{\norm{s_{\hat{A}_t}(x|W_t)-\nabla\log P_t(x)}^2} \label{eqn:score_err}\;.
\end{align}
Note that $\cE_{\text{train}},\cE_{\text{test}}$ and $\cE_{\text{score}}$ are random due to $W_t$ and $M$. In the next section, we characterize the high-dimensional limits of $\bE{W_t,M}{\cE_{\text{train}}}$ and $\bE{W_t,M}{\cE_{\text{test}}}$ through random matrix techniques. We remark that the  $\cE_{\text{test}}, \cE_{\text{train}}$, and $\cE_{\text{score}}$ are expected to concentrate around their expectations. However, proving this is beyond the scope of the current work.

The score error $\cE_{\text{score}}$ is not directly computable since there is no closed-form expression for $\nabla\log P_t$. However, we make interesting connections to the free energy of Generalized Linear Models, which enables us to derive the asymptotic score error as well. 
\subsection{Test and Train Errors for the Optimal RFNN Score}\label{sec:test_train_RFNN}
In Theorem~\ref{thm:test_train}, we present the asymptotic expressions for test and train errors. We make the following assumption on the activation function $\act{\cdot}$ and the manifold folding function $\man{\cdot}$.
\begin{assumption}\label{assumption:act_man}
    We assume that $\varrho$ and $\sigma$ are Lipschitz functions. In addition, they satisfy the following conditions: with $g\sim\cN{0,1}$, $\shortexpect{\act{g}} = \shortexpect{\man{g}} = 0$, $\shortexpect{\act{g}^2} = \shortexpect{\man{g}^2} = 1$, $\bE{}{g\act{g}} = \mu_1, \bE{}{g\man{g}} = \nu_1$.
\end{assumption}
\begin{definition}
    The function $c$ is defined as $c(\gamma) = \bE{u,v\sim P^\gamma}{\varrho(u)\varrho(v)}$, with $P^\gamma$ denoting the bivariate standard Gaussian distribution with correlation coefficient $\gamma$ (see Eq.~\eqref{eqn:bivariate_gaussian_pdf} in the Appendix~\ref{appnxd_sec:proof_test_train}). 
\end{definition}
In Theorem~\ref{thm:test_train} we need a sufficiently accurate estimate for the second moment of the feature vectors. 
Towards this, we make the following claim on the second moment of feature vectors. Proof of a restricted version of the claim can be found in Appendix~\ref{apndx_sec:proof_lemma_gep}.
\begin{claim}\label{lemma:gep_sec_moment}
    Let $f:\R\to\R$ be any smooth function such that $\bE{g\sim\cN{0,1}}{f(g)} = 0$. Let $\phi_i = \frac{w_i^T}{\sqrt{d}}\man{\rat{M}{D}\xi}$ and $\phi_i' = \frac{w_i^T}{\sqrt{d}}\left(\nu_1\rat{M}{D}\xi'+\sqrt{1-\nu_1^2}z\right)$ for $i=1,2$, where $w_1,w_2,z\stackrel{i.i.d.}{\sim}\cN{0,I_d}$ and $\xi,\xi'\stackrel{i.i.d.}{\sim}\cN{0,I_D}$ are independent rvs. Then, for all sufficiently small $\epsilon>0$ 
    \begin{align}
        \big|\bE{\xi}{f(\phi_1)f(\phi_2)}-\bE{\xi',z}{f(\phi_1')f(\phi_2')}\big| &= O(1/d^{1-\epsilon}), \quad \text{w.h.p.}\label{eqn:gep_sec_moment}
    \end{align}
    where w.h.p means with probability tending to $1$ faster than any inverse power in $d$. 
\end{claim}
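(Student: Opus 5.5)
Conditionally on $(w_1,w_2,M)$, both pairs $(\phi_1,\phi_2)$ and $(\phi_1',\phi_2')$ are functions of Gaussian vectors. The primed pair is \emph{exactly} bivariate Gaussian with covariance
\[
\Sigma'_{ij}=\frac{w_i^T}{d}\Big(\nu_1^2\tfrac{MM^T}{D}+(1-\nu_1^2)I_d\Big)w_j .
\]
The unprimed pair is not Gaussian. The plan is to show that its first two conditional moments match $\Sigma'$ up to $O(d^{-1+\epsilon})$, and that its deviation from Gaussianity enters $\bE{}{f(\phi_1)f(\phi_2)}$ only at order $1/d$. The latter step uses $\bE{}{f}=0$, which kills the leading non-Gaussian corrections.

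\textbf{Step 1: Hermite expansion of $\sigma$ and the covariance.} Write $u=M\xi/\sqrt D$, so that conditionally on $M$ each coordinate $u_k\sim\cN{0,\|m_k\|^2/D}$ with $\|m_k\|^2/D=1+O(d^{-1/2+\epsilon})$ w.h.p. Expand
\[
\sigma(u_k)=\nu_1u_k+\sigma_\perp(u_k),
\]
where $\sigma_\perp$ carries the Hermite components of order $\geq 2$. Then $\bE{\xi}{\sigma(u)\sigma(u)^T}$ has diagonal entries $1+O(d^{-1/2+\epsilon})$, and off-diagonal entries
\[
\nu_1^2\tfrac{m_k^Tm_l}{D}+\sum_{r\ge2}c_r^2\Big(\tfrac{m_k^Tm_l}{D}\Big)^r .
\]
Since $m_k^Tm_l/D=O(d^{-1/2+\epsilon})$, the $r\ge2$ remainder forms a matrix $R$ with entries $O(d^{-1+2\epsilon})$ whose leading part, $c_2^2(MM^T/D)^{\circ 2}$, has operator norm $O(1)$ by standard bounds on Hadamard powers of Wishart-like matrices. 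Sandwiching between $w_i/\sqrt d$ and using independence of $w$ from $M$, $w_i^TRw_j/d$ concentrates at its trace value, which is $O(d^{-1+2\epsilon})$ w.h.p. off the diagonal ($i\neq j$) and an $O(d^{-1/2})$-controlled diagonal correction. A more careful count, using that the diagonal of $R$ has the form $1-\nu_1^2-\ldots$ and matches the $(1-\nu_1^2)I$ term up to the fluctuations of $\|m_k\|^2/D$, which average out in $w^T(\cdot)w/d$, should give $|\mathrm{Cov}(\phi)-\Sigma'|=O(d^{-1+\epsilon})$ w.h.p. The means are zero: $\bE{}{\sigma(u_k)}$ is $O(1/d)$, coming from the variance mismatch, and contracts with $w/\sqrt d$ to $O(d^{-1/2+\epsilon})$. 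This is handled together with the cumulants in Step 2.

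\textbf{Step 2: Non-Gaussian correction (the main obstacle).} The vector $(\phi_1,\phi_2)=\sum_k (w_{1k},w_{2k})\sigma(u_k)/\sqrt d$ is a sum of weakly dependent terms, so one cannot apply a Lindeberg or Stein argument directly. I would first condition on the projection of $\xi$ onto the span of the rows used, or equivalently use a leave-one-out/cavity decomposition. The aim is an Edgeworth-type bound
\[
\bE{}{f(\phi_1)f(\phi_2)}=\bE{}{f(g_1)f(g_2)}+\sum_{|\alpha|=3}\kappa_\alpha\,\bE{}{\partial^\alpha(f\otimes f)(g)}+O(d^{-1+\epsilon}),
\]
with $g\sim\cN{0,\mathrm{Cov}(\phi)}$, proved via Stein's method for the Gaussian measure of $\xi$. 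Concretely, I would write $\phi_i$ as a smooth function of $\xi$ and use Gaussian integration by parts in $\xi$ (second-order Poincaré inequalities) to control the distance to Gaussianity by norms of the Hessian $\nabla^2_\xi\phi_i=\frac1{\sqrt{dD}}M^T\mathrm{diag}(w_i\odot\sigma''(u))M/\sqrt D$. Its operator norm is $O(d^{-1/2+\epsilon})$ w.h.p., which only yields $d^{-1/2}$, so the third-cumulant term must be shown to be $O(1/d)$ separately. Here the third cumulants are $\kappa_3\sim d^{-3/2}\sum_k w_{ik}w_{jk}w_{lk}\,\bE{}{\sigma^3}$ plus cross terms; the diagonal sum is $O(d^{-1+\epsilon})$ w.h.p. because $\sum_k w^3_{\cdot k}$ is a centered sum of order $\sqrt d$. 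Combined with $\bE{}{f}=0$, which removes the terms where a derivative falls entirely on one factor with zero-mean partner, this gives the claimed bound. The fourth and higher cumulants are $O(1/d)$.

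\textbf{Step 3: Comparison of Gaussians.} Finally, $\bE{g}{f(g_1)f(g_2)}$ is smooth in the covariance, with derivatives $\bE{}{\partial f\,\partial f}$ etc. by Gaussian interpolation. The $O(d^{-1+\epsilon})$ covariance mismatch from Step 1 therefore transfers directly. A union bound over the polynomially many high-probability events completes the w.h.p.\ statement. I expect Step 2, establishing the $1/d$ rather than $1/\sqrt d$ rate for the non-Gaussian correction, to be the hard part. This is presumably why only a restricted version is proved, for example polynomial $f$ and $\sigma$, where the cumulants can be computed exactly by Wick and diagram counting.
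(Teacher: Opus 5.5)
\textbf{Verdict: there is a genuine gap, in Step~2, which is where the whole content of the claim lies.}

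Steps~1 and~3 are essentially sound. Your $O(d^{-1+\epsilon})$ bound on the diagonal covariance mismatch is in fact sharper than what the paper uses. There is one slip. $\mathbb{E}\sigma(u_k)$ is $O(d^{-1/2})$, not $O(1/d)$, because the derivative of $r\mapsto\mathbb{E}\sigma(\sqrt{r}\,g)$ at $r=1$ is $\frac12\mathbb{E}[\sigma(g)(g^2-1)]$, which is generally nonzero. So the mean $\kappa_{(1,0)}=O(d^{-1/2})$ really must be paired with $\mathbb{E}[f'(\phi_1)f(\phi_2)]=O(d^{-1/2})$, as the paper does.

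The problem is that in Step~2 you write down an Edgeworth expansion with an $O(d^{-1+\epsilon})$ remainder but give no mechanism that produces it. Stein's method and second-order Poincaré give only $O(d^{-1/2})$, as you note yourself. Knowing that cumulants of order $\ge 3$ are small does not by itself control the remainder of an Edgeworth expansion for $\sum_k (w_{1k},w_{2k})\sigma(u_k)/\sqrt d$, whose summands are all functions of the same $\xi$. The ``cavity/conditioning'' route is only named, not carried out.

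The idea you are missing is the one the paper uses. Interpolate $\phi^t=\sqrt t\,\phi+\sqrt{1-t}\,\phi'$ and expand $S'(t)$ with the exact cumulant identity $\mathbb{E}[h_1 g(h)]=\sum_{k_1,k_2\ge 0}\frac{\kappa_{(k_1+1,k_2)}}{k_1!\,k_2!}\mathbb{E}[g^{(k_1,k_2)}(h)]$. For polynomial $f$ this sum is finite, so there is no remainder at all. The Gaussian second-order terms cancel up to $(\kappa_{(1,1)}-\kappa'_{(1,1)})\,\mathbb{E}[f'(\phi_1^t)f'(\phi_2^t)]$ and $(\kappa_{(2,0)}-\kappa'_{(2,0)})\,\mathbb{E}[f''(\phi_1^t)f(\phi_2^t)]$, and the problem reduces to bounding individual cumulants. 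For a general smooth $f$, or $\sigma$ with infinitely many Hermite components, this series is infinite and one needs uniform control over all orders. Neither your proposal nor the paper supplies that, which is why the paper proves only the restricted version.

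The cumulant bounds themselves are also not established.
\begin{itemize}
\item You handle only the part of $\kappa_3$ with all indices equal. The off-diagonal parts, e.g.\ $d^{-3/2}\sum_{i\neq j}Q^{(3)}_{iij}w_{1i}^2w_{2j}$ with $Q^{(3)}_{iij}=O(d^{-1/2})$ entrywise, are $O(1/d)$ only because of sign cancellations. For instance $\sum_{i\neq j}M_i^TM_j/D=O(1)$ for fixed $j$, whereas a naive bound gives $O(\sqrt d)$.
\item The same cancellation is needed to get $\mathbb{E}_w\kappa_{(2,2)}=O(1/d)$. Your assertion that all fourth and higher cumulants are $O(1/d)$ is therefore unsupported.
\item The paper obtains these bounds by writing joint cumulants of Hermite polynomials as sums over connected multigraphs. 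A $Q^{(r)}$ entry with $m$ distinct indices is then $O(d^{-(m-1)/2})$, which yields $\mathbb{E}_w\kappa_{(l_1,l_2)}=O(1/d)$ and $\mathrm{Var}_w\kappa_{(l_1,l_2)}=O(1/d^2)$ for $l_1+l_2\ge 3$.
\item Typical-size estimates do not give probability $1-o(d^{-k})$ for every $k$. Each $\kappa_{(l_1,l_2)}$ is a polynomial in the Gaussian $w$'s, so the paper applies hypercontractivity, $\|P(w)\|_p\le(p-1)^{r/2}\|P(w)\|_2$ with $p\sim d^{2\epsilon/r}$, to obtain $\exp(-c\,d^{2\epsilon/r})$ tails. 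Your final union bound needs exactly such estimates.
\end{itemize}
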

\paragraph{Proof outline in a restricted setting}
We prove the claim in the case when $f$ is a polynomial and $\sigma$ is an activation function with finite number of Hermite coefficients. In the proof, we upper bound the derivative of an interpolating quantity $S(t) := \bE{\xi,\xi',z}{f(\phi_1^t)f(\phi_2^t)}$, where $\phi_i^t = \sqrt{t}\phi_i+\sqrt{1-t}\phi'_i$. We accomplish this by controlling joint cumulants of $(\phi_1,\phi_2)$. The probability is controlled thanks to hypercontractive bounds and is of order $1-O(e^{-a_{f, \sigma}d^\epsilon})$ for some $a_{f,\sigma}>0$ positive constant depending on $f$ and $\sigma$.

\begin{theorem}\label{thm:test_train}
Let $M\in\R^{d\times D}$ be random matrix with i.i.d. $\cN{0,1}$ entries, and let $\{\xi_i\}_{i=1}^n$ be i.i.d $\cN{0,I_D}$. The dataset $\cS = \{x_i\}_{i=1}^n$ is obtained using $x_i = \man{\rat{M}{D}\xi_i}$ and we denote the distribution of $x_is$ by $P_0$. Let $\varrho$ and $\sigma$ satisfy Assumption~\ref{assumption:act_man}, and define $s^2 = 1-c(a_t^2)-h_t\mu_1^2$, and $v^2 = 1-\mu_1^2$. Let the ratios between dimensions be fixed and given by $\psi_D = \frac{D}{d}$, $\psi_n = \frac{n}{d}$, and $\psi_p = \frac{p}{d}$. 
Suppose that, for $(q,z)$ in a neighborhood of $(0,-\lambda)$, there exists a differentiable solution $\zeta_1,\zeta_2,\zeta_3,\zeta_4$ of the self-consistent system
\begin{equation*}
\begin{cases}
    \quad\psi_n a_t^4\left(\frac{\mu_1^2}{\chi} + q\right)^2 \zeta_1+\frac{\psi_D}{\nu_1^2} h_t(\mu_1^2+q)\zeta_4+\frac{\psi_D}{\psi_p\nu_1^2}\left(s^2 + \frac{v^2}{\chi} - z\right)\zeta_4 (1+\zeta_3)-a_t^2\left(\frac{\mu_1^2}{\chi} + q\right) = 0\\
    \quad\frac{1+\zeta_3}{\psi_p}-
\frac{(s^2 + \frac{v^2}{\chi} - z)}{\nu_1^2\, a_t^2 \left(\frac{\mu_1^2}{\chi} + q\right)}\frac{(1+\zeta_3)^2 \, \psi_D \, \zeta_4}{\psi_p^{\,2}}-\zeta_3 = 0,\\
\quad a_t^2\left(\frac{\mu_1^2}{\chi} + q\right)\psi_n \zeta_1+\frac{(h_t\mu_1^2+q)\,\psi_D \zeta_4}{\nu_1^2 a_t^2\left(\frac{\mu_1^2}{\chi}+q\right)}+\left(s^2 + \frac{v^2}{\chi} - z\right)\psi_n \zeta_2-\psi_p = 0,\\
\quad\frac{\psi_D \zeta_4}{1 + \zeta_4} + \psi_D \zeta_4 \left( \frac{1 - \nu_1^2}{\nu_1^2} + \frac{h (\mu_1^2 + q)}{a_t^2 \left( \frac{\mu_1^2}{\chi} + q \right) \nu_1^2} + \frac{\left( s^2 + \frac{v^2}{\chi} - z \right)}{a_t^2 \left( \frac{\mu_1^2}{\chi} + q \right)} \frac{(1 + \zeta_3)}{\psi_p \nu_1^2} \right) - 1 = 0,
\end{cases}
\end{equation*}
where $\chi(\zeta_1,\zeta_2) = 1+a_t^2\mu_1^2\zeta_1+v^2\zeta_2$.
Then, for the minimizer of \eqref{eqn:dsm_rfnn} $\hat{A}_t$, we have
\begin{align*}
    \lim_{d\to\infty}\bE{}{\cE_{\text{test}}(\hat{A}_t)}
    &= 1-2h_t\mu_1^2\cK(0,-\lambda)-h_t\mu_1^4\frac{\partial \cK}{\partial q}(0,-\lambda)+h_t\mu_1^2(1-\mu_1^2)\frac{\partial \cK}{\partial z}(0,-\lambda),\\
    \lim_{d\to\infty}\bE{}{\cE_{\text{train}}(\hat{A}_t)} &= -h_t\mu_1^2\cK(0,-\lambda)-\lambda h_t\mu_1^2\frac{\partial \cK}{\partial z}(0,-\lambda)+1 \;,\nonumber
\end{align*}
where $\cK(q,z) =\frac{\psi_D\zeta_4}{\nu_1^2a_t^2\left(\frac{\mu_1^2}{\chi} + q\right)}$, and $(\zeta_1,\zeta_2,\zeta_3,\zeta_4)$ is a solution to the above self-consistent equations.
\end{theorem}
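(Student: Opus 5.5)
The plan is to reduce both errors to traces of resolvents of a random kernel matrix, and then compute these traces with a linear-pencil / operator-valued free probability argument. The Gaussian-equivalence Claim~\ref{lemma:gep_sec_moment} will let us replace the hidden-manifold features by linear-Gaussian surrogates.

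\emph{Step 1: closed form and reduction to traces.} Write $\hat A_t$ in closed form (Appendix~\ref{apndx_sec:optimal_rfnn_score}). With $F(x)=\varrho(Wx/\sqrt d)$ and $y_i=a_tx_i+\sqrt{h_t}z$, the normal equations give
\[
\hat A_t=-\frac{p}{\sqrt{h_t}}\,V^{T}(U+\lambda I)^{-1},\qquad U=\frac1n\sum_i\bE{z}{F(y_i)F(y_i)^T},\qquad V=\frac{1}{n\sqrt{d}}\sum_i\bE{z}{F(y_i)z^T}\cdot\sqrt d .
\]
Gaussian integration by parts turns $\bE{z}{F(y_i)z^T}$ into $\sqrt{h_t}$ times a Jacobian. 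After Hermite-expanding $\varrho$, this becomes $\sqrt{h_t}\,\mu_1 W/\sqrt d$ plus terms whose contribution vanishes. Substituting $\hat A_t$ into \eqref{eqn:train_err} expresses $\cE_{\text{train}}$ through $\frac1d\Tr\big(\mu_1^2 W^TW(U+\lambda)^{-1}/d\big)$ and its square-resolvent analogue. The latter is a $z$-derivative. Similarly, $\cE_{\text{test}}$ involves the population matrix $\Omega=\bE{x,z}{F F^T}$ and quadratic forms $\Tr(W^TW(U+\lambda)^{-1}\Omega(U+\lambda)^{-1})$. I will encode all of these in a single generating function
\[
\cK(q,z)=\frac1d\Tr\Big[\tfrac{W^TW}{d}\big(U-z I+q\,\tfrac{W\Sigma W^T}{d}\big)^{-1}\Big],
\]
with a suitable $\Sigma$. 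Its derivatives in $q$ and $z$ at $(0,-\lambda)$ produce exactly the combinations in the statement. The constants $1$ and the $\mu_1^2,\,1-\mu_1^2$ weights come from expanding $\|z\|^2/d$ and the diagonal part of $\Omega$.

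\emph{Step 2: Gaussian equivalence for $U$ and $\Omega$.} Decompose $\varrho(\langle w_i,y\rangle/\sqrt d)$ into Hermite components. The entries of $\bE{z}{FF^T}$ are $c(\cdot)$ of the overlaps, with the $z$-average giving the $c(a_t^2)$ correction. This yields the linearized model
\[
U\approx \frac{a_t^2\mu_1^2}{d}\,W X^TX W^T\frac1n+h_t\mu_1^2\frac{WW^T}{d}+s^2 I,
\]
and $\Omega$ has the same form with the data covariance in place of $X^TX/n$. Claim~\ref{lemma:gep_sec_moment} (with $f=\varrho$ and its Hermite-reduced variants) says that the second moments of features under $x=\sigma(M\xi/\sqrt D)$ match those under $\nu_1 M\xi'/\sqrt D+\sqrt{1-\nu_1^2}z$ up to $O(d^{-1+\epsilon})$ entrywise w.h.p. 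Summed over $p^2$ entries against a resolvent with bounded operator norm, this is negligible in normalized traces. This is where the $\nu_1$ and $\psi_D$ dependence enters. The $1/\chi$ factors come from the leave-one-out (Sherman--Morrison) treatment of the sample average over $n$, with $\chi=1+a_t^2\mu_1^2\zeta_1+v^2\zeta_2$ the usual self-energy denominator.

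\emph{Step 3: deterministic equivalent.} The resulting matrix is a rational function of the independent Gaussian matrices $W$, $M$, $\Xi$ (data latents) and $Z$. I will build a block linear pencil for $(U-z+q\,\cdot)^{-1}$ and apply the operator-valued Stieltjes fixed-point equations for Gaussian block matrices. The four unknowns $\zeta_1,\dots,\zeta_4$ are normalized traces of the diagonal blocks associated respectively with the sample ($n$), noise ($d$), feature ($p$) and latent ($D$) dimensions. Eliminating the remaining blocks gives the stated system, and $\cK=\psi_D\zeta_4/(\nu_1^2a_t^2(\mu_1^2/\chi+q))$. Convergence of expectations follows from the uniform boundedness of the resolvents for $\Re(-z)\ge\lambda>0$. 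The assumed differentiable solution near $(0,-\lambda)$ then justifies interchanging the limit with the $q,z$ derivatives, since the functions are analytic and uniformly convergent.

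\emph{Main obstacle.} I expect Step~2 to be the hardest: making the Gaussian equivalence quantitatively strong enough inside traces of products of inverses. The claim gives only entrywise $O(d^{-1+\epsilon})$ control, and an entrywise error matrix of size $p\times p$ may have operator norm $O(d^{\epsilon})$. My first attempt will be to split the error into a rank-one mean part, handled by Sherman--Morrison, and a centered part whose norm is controlled by a moment or trace-power bound using the cumulant estimates from the claim's proof. The pencil algebra in Step~3 is lengthy but routine.
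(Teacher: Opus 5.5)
\textbf{Gap 1: the Step~2 surrogate omits a term.} Your surrogate drops a term that the stated formulas depend on. Writing $U\approx\frac{a_t^2\mu_1^2}{dn}WXX^TW^T+h_t\mu_1^2\frac{WW^T}{d}+s^2I$ keeps only the linear Hermite component of $\varrho_0(y)=\mathbb{E}_u[\varrho(a_ty+\sqrt{h_t}u)]$ evaluated at $Wx/\sqrt{d}$. In the random-features Gaussian equivalence, the nonlinear residual survives as independent noise. The columns of $F=\varrho_0(WX/\sqrt{d})$ behave like $a_t\mu_1Wx'/\sqrt{d}+v\,g$, with $g\sim\mathcal{N}(0,I_p)$ independent and $v^2=c(a_t^2)-a_t^2\mu_1^2$ (the $v^2$ of the paper's proof). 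Their covariance is therefore $\Sigma_f=a_t^2\mu_1^2\frac{W\mathcal{C}W^T}{d}+v^2I_p$, with diagonal $c(a_t^2)$, so your model is short by $v^2I_p$. The same omission gives your ``same form'' $\Omega$ the isotropic part $s^2I_p$. In fact $\Omega=\mu_1^2\frac{W(a_t^2\mathcal{C}+h_tI)W^T}{d}+(1-\mu_1^2)I_p$, and $1-\mu_1^2=s^2+v^2$. Carried through literally, your model yields:
\begin{itemize}
\item $\chi=1+a_t^2\mu_1^2\zeta_1$;
\item the spectral shift $s^2-z$ instead of $\kappa_3=s^2-z+v^2/\chi$;
\item the weight $h_t\mu_1^2s^2$ instead of $h_t\mu_1^2(1-\mu_1^2)$ on $\partial_z\mathcal{K}$.
\end{itemize}
That is not the stated system, even though you quote the correct $\chi$. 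Correspondingly, your pencil would need an additional $p\times n$ Gaussian block, not just $W,M,\Xi,Z$.

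\textbf{Gap 2: Claim~\ref{lemma:gep_sec_moment} does not license the empirical replacement.} The claim controls only \emph{population} second moments. Together with $\frac{1}{d}|\mathrm{tr}(\Delta A)|\le\frac{1}{d}\sqrt{p}\,\|\Delta\|_F\|A\|_{\mathrm{op}}=O(d^{\epsilon-1/2})$ and a resolvent identity, it does justify replacing $\Omega$ and $\Sigma_f$ by their Gaussian-equivalent forms. This already disposes of your ``main obstacle''; no rank-one/centered splitting is needed. What it does not justify is replacing the \emph{empirical} matrix $FF^T/n$ by a Gaussian one. That matrix is not entrywise close to any surrogate, since its entries fluctuate at scale $n^{-1/2}$. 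What you need instead is that its resolvent depends on the law of the columns only through $\Sigma_f$. This is a deterministic equivalent for sample covariance matrices of concentrated random vectors: Couillet--Liao, Thm.~2.18, the paper's Lemma~\ref{lemma:det_equiv}. It is where the Lipschitz hypotheses on $\varrho,\sigma$ are used.

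\textbf{How the paper proceeds.} The paper takes exactly this step, conditionally on $(W,M)$, to average over $\Xi$ and produce $\chi$ and $\kappa_3$. It explicitly avoids Gaussian pencil methods at this stage because the features are non-Gaussian. Only after this are all remaining objects Gaussian. The paper then rewrites $\frac{W^T}{\sqrt{d}}\mathbb{E}_\Xi[R]\frac{W}{\sqrt{d}}$ by a push-through identity in terms of $(\Theta W^TW/d+\kappa_3I)^{-1}$, with $\Theta=\kappa_1\mathcal{C}+\kappa_2I$. It applies a second deterministic equivalent in $W$, giving $\zeta_3$, and Marchenko--Pastur in $M$, giving $\zeta_4$. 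Your linear pencil would be a legitimate alternative for these last two averages, but not for the $\Xi$-average.
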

A proof of Theorem~\ref{thm:test_train} can be found in Appendix~\ref{appnxd_sec:proof_test_train}. Below, we give an short outline of the proof.\\

\begin{proof_outline}
The analysis relies on expressing the test and train errors as normalized traces of rational functions of random matrices. There are powerful random matrix techniques \cite{far_spectra_2006,bodin_random_2024} available to handle such situations, when the random matrices involved have Gaussian entries. However, in order to work with non-Gaussian feature vectors $\act{\rat{W_t}{d}\man{\rat{M}{D}\xi}}$, we rely on a \textit{deterministic equivalent} based approach \cite{couillet_random_2022}. A key aspect in our proof is that the effect of data distribution is captured by the first two joint moments of the vector $\rat{W}{d}\man{\rat{M}{D}\xi}$. To this end,  a CLT proved for such a scenario \citet{goldt_modeling_2020,hu_universality_2023}, gives us a control of its moments up to an $O(1/\sqrt{d})$ error. However, for the second moment our analysis requires a more precise $O(1/d)$ control of the error. We propose Conjecture~\ref{lemma:gep_sec_moment} to obtain the second moment at this level of accuracy, prove it for a weaker hypothesis.

Consequently, it turns out that all the quantities we need can be derived using the following function and its derivatives: 
$$K(q,z) = \frac{1}{d}\tr{\rat{W_t^T}{d}R(q,z)\rat{W_t}{d}}.$$ Now, it remains to obtain a set of self-consistent equations that gives asymptotic value of $K$. For this, we repeatedly use the asymptotic expression for the deterministic equivalents of sample covariance matrices (see Theorem~2.18 in \cite{couillet_random_2022}). This finally gives the set of equations in the Theorem, and the expressions for test and train errors. 
    
\end{proof_outline}



Theorem~\ref{thm:test_train} gives asymptotic expressions for test and train errors. However, a small test error alone does not guarantee an accurate approximation of the score function. To provide a meaningful reference, we introduce in the next section the test error associated with the exact score. This quantity serves as a baseline and will also play a important role in obtaining the score error, which directly measures the quality of score estimation.

\subsection{Test Error for Exact Score}
We characterize the test error associated with the exact score function $\nabla\log P_t$, defined as
\begin{equation}
        \cE_{\text{test}}^\ast = \frac{1}{d}\shortexpect_{x,z}{\norm{\sqrt{h_t}\nabla\log P_t(a_tx+\sqrt{h_t}z)+z}^2}.
    \end{equation}

To this end, we establish lemmas that connects $\cE_{\text{test}}^\ast$ to the free energy of a Generalized Linear Model. Lemma~\ref{lemma:norm_score} relates the squared norm of the score function to the minimum mean squared error (MMSE) of an associated Gaussian channel. Subsequently, Lemma~\ref{lemma:mmse_free_energy} expresses this MMSE in terms of the free energy of a Generalized linear model. 
\begin{lemma}\label{lemma:norm_score}
The test error for the exact score function can be expressed as
\begin{equation}\label{eqn:exact_score_formula}
        \cE_{\text{test}}^\ast = \frac{1}{d}\shortexpect_{x,z}{\norm{\sqrt{h_t}\nabla\log P_t(a_tx+\sqrt{h_t}z)+z}^2} = 1-\frac{h_t}{d}\shortexpect_{y\sim P_t}{\norm{\nabla\log P_t(y)}^2}.
    \end{equation}
    Moreover,
    \begin{equation}
        \frac{1}{d}\shortexpect_{y\sim P_t}{\norm{\nabla\log P_t(y)}^2} = \frac{1}{h_t}\left(1-\frac{a_t^2}{h_t}\frac{1}{d}\text{MMSE}(x|y)\right),
    \end{equation}
    where $\text{MMSE}(x|y) = \bE{X\sim P_0,\; Y = a_tX+\sqrt{h_t}Z}{\norm{X-\bE{}{X|Y}}^2}$.
\end{lemma}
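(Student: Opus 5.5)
The plan is to derive both identities from Tweedie's formula. Write $y = a_t x + \sqrt{h_t} z$ with $x\sim P_0$ and $z\sim\cN{0,I_d}$ independent, so that $y\sim P_t$. Differentiating the representation \eqref{eqn:marginal_prob_forward} under the integral sign gives
\[
\nabla\log P_t(y) = -\frac{1}{h_t}\big(y - a_t\bE{}{X\mid Y=y}\big) = -\frac{1}{\sqrt{h_t}}\,\bE{}{Z\mid Y=y}.
\]
Differentiation under the integral is justified because the Gaussian kernel is smooth and dominated, and $P_0$ has finite second moments: $\sigma$ is Lipschitz, so $x=\man{\rat{M}{D}\xi}$ has Gaussian tails. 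Everything else is the $L^2$ orthogonality of conditional expectations.

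For the first identity, expand
\[
\frac1d\shortexpect\norm{\sqrt{h_t}\nabla\log P_t(y)+z}^2 = \frac{h_t}{d}\shortexpect\norm{\nabla\log P_t(y)}^2 + \frac{2\sqrt{h_t}}{d}\shortexpect\langle \nabla\log P_t(y), z\rangle + \frac1d\shortexpect\norm{z}^2 .
\]
The last term equals $1$. For the cross term, condition on $y$ and use $\shortexpect[z\mid y] = -\sqrt{h_t}\,\nabla\log P_t(y)$; this gives $\shortexpect\langle \nabla\log P_t(y), z\rangle = -\sqrt{h_t}\,\shortexpect\norm{\nabla\log P_t(y)}^2$. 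Collecting the terms yields $1 - \frac{h_t}{d}\shortexpect\norm{\nabla\log P_t}^2$. Alternatively, the cross term can be computed by Gaussian integration by parts, $\shortexpect\langle s(y),z\rangle = \sqrt{h_t}\,\shortexpect\,\mathrm{div}\, s(y)$, combined with $\shortexpect_{P_t}\Delta\log P_t = -\shortexpect_{P_t}\norm{\nabla\log P_t}^2$. The conditional-expectation route avoids the boundary terms, so I will use it.

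For the second identity, write $\sqrt{h_t} z = y - a_t x$. Then $\sqrt{h_t}\,\shortexpect[z\mid y] = a_t\big(\shortexpect[x\mid y] - x\big) + \sqrt{h_t}\,z$. Set $e = x - \shortexpect[x\mid y]$. Since $e$ is orthogonal in $L^2$ to every function of $y$,
\[
h_t\,\shortexpect\norm{\shortexpect[z\mid y]}^2 = \shortexpect\norm{\sqrt{h_t}z}^2 - \shortexpect\norm{\sqrt{h_t}z - \sqrt{h_t}\shortexpect[z\mid y]}^2 = h_t d - a_t^2\,\shortexpect\norm{e}^2 .
\]
Hence $\frac1d\shortexpect\norm{\nabla\log P_t}^2 = \frac{1}{h_t^2}\cdot\frac{1}{d}\big(h_t d - a_t^2\,\text{MMSE}\big)$, which is $\frac1{h_t}\big(1 - \frac{a_t^2}{h_t}\frac{1}{d}\text{MMSE}(x\mid y)\big)$, as claimed.

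The main obstacle is only technical: justifying Tweedie's formula, namely interchanging $\nabla$ and the integral and ensuring integrability of $\norm{\nabla\log P_t}^2$. Both follow from $t>0$, which gives $h_t>0$, together with the finite second moment of $P_0$, since $\norm{\nabla\log P_t(y)}\le (\norm{y}+a_t\,\shortexpect[\norm{x}\mid y])/h_t$. The remaining steps are routine.
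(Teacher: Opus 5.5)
Your proof is correct and follows the paper's route. Both use Tweedie's formula $\nabla\log P_t(y) = -\mathbb{E}[z\mid y]/\sqrt{h_t}$ with the tower property for the cross term, and $L^2$ orthogonality of conditional expectation for the MMSE identity. You apply the Pythagorean identity to $z$ where the paper expands $\|a_t\mathbb{E}[x\mid y]-y\|^2$ directly, which is only a cosmetic difference, and your justification of Tweedie's formula is a welcome extra.

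One harmless sign slip: the correct identity is $\sqrt{h_t}\,\mathbb{E}[z\mid y] = y - a_t\mathbb{E}[x\mid y] = a_t(x-\mathbb{E}[x\mid y]) + \sqrt{h_t}\,z$, not $a_t(\mathbb{E}[x\mid y]-x)+\sqrt{h_t}\,z$. Only $\|\sqrt{h_t}z-\sqrt{h_t}\mathbb{E}[z\mid y]\|^2 = a_t^2\|e\|^2$ enters the argument, so the conclusion is unaffected.
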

\begin{lemma}\label{lemma:mmse_free_energy}
    Let $\eta = \frac{a_t}{\sqrt{h_t}}$. Then,
        $\frac{d}{d\eta}\bE{y\sim P_t}{\frac{1}{d}\log P_t(y)} = a_t\sqrt{h_t}-\frac{a_t}{\sqrt{h_t}}\frac{1}{d}\text{MMSE}(x|y)$. 
\end{lemma}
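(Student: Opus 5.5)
The plan is to separate the dependence of $P_t$ on $\eta$ into a trivial scaling part and an I-MMSE part. The key observation is that $a_t^2+h_t = e^{-2t}+1-e^{-2t}=1$. Hence $\eta^2 = a_t^2/h_t = (1-h_t)/h_t$, which gives $h_t = 1/(1+\eta^2)$. So $\eta$ is a legitimate reparametrization of $t$, and both $a_t$ and $h_t$ are smooth functions of $\eta$.

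\textbf{Step 1: rescale.} Set $u = y/\sqrt{h_t} = \eta x + z$, and let $Q_\eta$ denote the density of $u$. A change of variables gives $P_t(y) = h_t^{-d/2} Q_\eta(y/\sqrt{h_t})$. Therefore
\begin{equation*}
\frac{1}{d}\bE{y\sim P_t}{\log P_t(y)} = \frac{1}{2}\log(1+\eta^2) + \frac{1}{d}\bE{}{\log Q_\eta(\eta x+z)}.
\end{equation*}
The derivative of the first term is $\eta/(1+\eta^2) = (a_t/\sqrt{h_t})\, h_t = a_t\sqrt{h_t}$. This is exactly the first term of the claimed formula.

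\textbf{Step 2: I-MMSE for the second term.} Write
\begin{equation*}
\bE{}{\log Q_\eta(u)} = -\tfrac{d}{2}\log 2\pi - \tfrac{d}{2} + \bE{}{\log \bE{x'}{e^{-\norm{u-\eta x'}^2/2 + \norm{u}^2/2}}},
\end{equation*}
so that, up to an $\eta$-independent constant, this equals $-I(x;u)$. The Guo--Shamai--Verdú I-MMSE relation gives
\begin{equation*}
\frac{d}{d(\eta^2)} I(x;\eta x+z) = \tfrac{1}{2}\text{MMSE}(x\mid \eta x+z),
\end{equation*}
hence $\frac{d}{d\eta}\bE{}{\log Q_\eta(u)} = -\eta\,\text{MMSE}(x|u)$.

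Since $u$ is an invertible deterministic function of $y$ at fixed $t$, we have $\text{MMSE}(x|u)=\text{MMSE}(x|y)$. Dividing by $d$ yields the second term, $-\frac{a_t}{\sqrt{h_t}}\frac{1}{d}\text{MMSE}(x|y)$.

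To keep the argument self-contained, I would instead derive Step 2 directly. Differentiate under the expectation, writing $u=\eta x+z$ explicitly. This produces $\bE{}{\langle (x-x')^\top z\rangle} + \eta\,\bE{}{\langle \norm{x}^2 - \norm{x'}^2\rangle}$-type terms, where $\langle\cdot\rangle$ is the posterior average over $x'$. Gaussian integration by parts on $z$, followed by the Nishimori identity $\bE{}{\langle x'\rangle^\top x} = \bE{}{\norm{\langle x'\rangle}^2}$, collapses these terms to $-\eta\left(\bE{}{\norm{x}^2} - \bE{}{\norm{\bE{}{x|u}}^2}\right) = -\eta\,\text{MMSE}$.

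\textbf{Main obstacle.} The only delicate point is justifying differentiation under the expectation and the integration by parts. This requires $P_0$ to have finite second moments, with enough integrability to dominate the derivatives. Here $P_0$ is the law of $\sigma(M\xi/\sqrt{D})$ with $\sigma$ Lipschitz, so it has Gaussian-type tails conditionally on $M$. The required dominated-convergence bounds then follow from $\norm{x}$ having all moments, uniformly for $\eta$ in compact subsets of $(0,\infty)$.
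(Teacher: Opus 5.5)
Your proof is correct and is essentially the paper's argument made explicit. The paper only says the identity follows by differentiating the left-hand side in $\eta$; your rescaling $u=y/\sqrt{h_t}$ splits that derivative into the Jacobian term $\frac{d}{d\eta}\frac12\log(1+\eta^2)=a_t\sqrt{h_t}$ and the I-MMSE term $-\eta\,\frac1d\mathrm{MMSE}(x|y)$ (equivalently, Gaussian integration by parts plus the Nishimori identity).

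One small slip to fix: in your display for $\mathbb{E}\log Q_\eta(u)$ the term $-\frac d2$ should read $-\frac12\mathbb{E}\norm{u}^2=-\frac d2-\frac{\eta^2}{2}\mathbb{E}\norm{x}^2$. The conclusion you draw, $\mathbb{E}\log Q_\eta(u)=-\frac d2\log(2\pi e)-I(x;u)$, is nonetheless correct, because $-\mathbb{E}\log Q_\eta(u)$ is the differential entropy of $u$, which equals the entropy of $u$ given $x$ plus $I(x;u)$.
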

Proof of Lemma~\ref{lemma:norm_score} can be found in Appendix~\ref{apndx_sec:proof_lemma_norm_score}. Proof of Lemma~\ref{lemma:mmse_free_energy} follows directly by taking the derivative of l.h.s. w.r.t. $\eta$.

Lemmas~\ref{lemma:norm_score} and \ref{lemma:mmse_free_energy} suggests that given access to $\bE{y\sim P_t}{\frac{1}{d}\log P_t(y)}$, we can recover $\cE_{test}^\ast$. To compute $\bE{y\sim P_t}{\frac{1}{d}\log P_t(y)}$, we invoke the replica-symmetric formula for the partition function of Generalized Linear Models (GLMs). Specifically, consider an inference model with observations
$x = a_t\man{M\xi/\sqrt{D}}+\sqrt{h_t} z$, where $\xi$ is a signal to be estimated, $z$ is Gaussian additive noise, and $a_t^2/h_t= e^{-2t}/(1-e^{-2t})$ is the signal-to-noise ratio. This is a statistical mechanics spin-glass problem with Nishimori symmetry, whose 
rigorous theory was developed in~\cite{barbier_optimal_2019}. 
In this reference it is proved that:
\begin{align*}
    \lim_{D\to \infty} \frac{1}{D} \shortexpect_{y\sim P_t}\log P_t(y) &= \sup_{q\in[0,1]} \inf_{r\ge 0} f_{\text{RS}}(q,r)\eqdef f^\star(t),
    \label{eq:fstar}
\end{align*}
where
    $f_{\text{RS}}(q,r) = \phi(r) + \psi_D^{-1}\Psi(q)-rq/2 \;$,
    $\phi(r) = \shortexpect_{X_0,Z_0}{\log \int \dd w\frac{e^{-\frac{w^2}{2}}}{\sqrt{2\pi}}e^{rwX_0+\sqrt{r}xZ_0-rx^2/2}} \;$, and 
    $\Psi(q) = \shortexpect_{Y_0,V}{\log \int \dd w \frac{e^{-w^2/2}}{\sqrt{2\pi}}\frac{e^{-\frac{(Y_0-a_t\phi(\sqrt{q}V+\sqrt{1-q}w))^2}{2h_t}}}{\sqrt{2\pi h_t}}} \;$,
with $X_0\sim\cN{0,1}$, $Z_0,V,W,Z\sim\cN{0,1}$ and $Y_0 = a_t\man{\sqrt{q}V+\sqrt{1-q}W}+\sqrt{h_t}Z$. A straight-forward computation shows that $\phi(r) = \frac{r}{2}-\frac{1}{2}\log(1+r)$. For linear manifolds, the test error for the exact score can be computed in closed form and is given in Appendix~\ref{apndx_sec:exact_score_linear}.

Fig.~\ref{fig:test_train_pDt} displays the test and training errors predicted by Theorem~\ref{thm:test_train} for the case of ReLU activation and a linear manifold folding map. It also shows the test error for the exact score. We discuss the Figure in detail in Sec.~\ref{sec:discussion}.

\begin{figure}[t]
    \centering
    \includegraphics[width=0.95\linewidth]{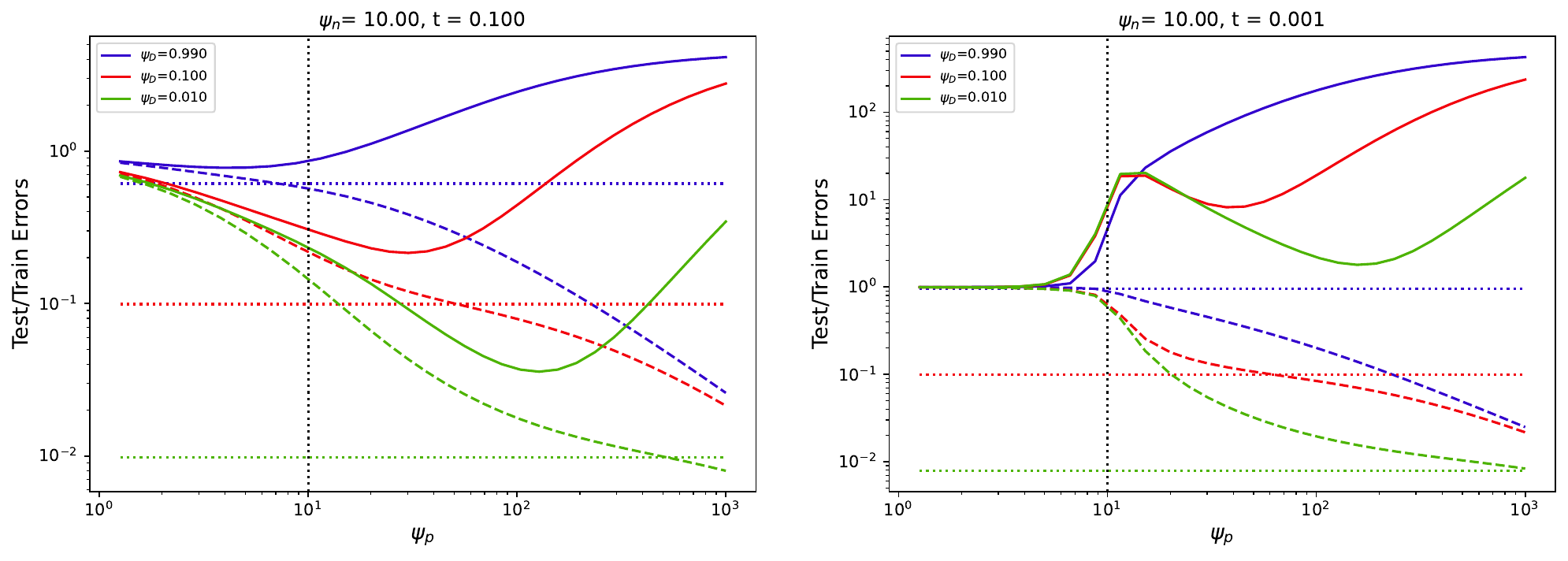}
    \caption{Test (solid lines) and train (dashed lines) errors for $\varrho =$ ReLU, $\sigma(x)=x$. Dotted horizontal lines denote the test error for exact score function obtained using \eqref{eqn:exact_score_formula}.}
    \label{fig:test_train_pDt}
\end{figure}

\subsection{Score error}
In Sec.~\ref{sec:test_train_RFNN}, we derived the test and train errors for the optimal RFNN score. However, these quantities alone do not fully characterize the performance of diffusion models. Under the idealized assumption that all other components of the diffusion pipeline are exact, the Kullback–Leibler divergence between the generated distribution and the target distribution can be expressed in terms of the score error \cite{song_maximum_2021}. More generally, for practical implementations of the reverse diffusion process, the score error provides upper bounds on various distances and divergences between the sampled and target distributions \citet{chen_sampling_2022,bortoli_convergence_2022}. Consequently, deriving the score error is essential for establishing theoretical guarantees on the generative performance of diffusion models.

Here we derive the score error from the test error and exact score error characterized in the previous section. Specifically, Lemma~\ref{lemma:test_err_decomp} establishes a bias–variance decomposition of the test error and expresses the variance term in terms of the expected squared norm of the exact score function. Its proof follows directly from orthogonality principle.

\begin{lemma}\label{lemma:test_err_decomp}
    For any learned score $\hat{s}$, the test error admits the following decomposition:
    \begin{equation*}
        \frac{1}{d}\shortexpect_{x,z}{\norm{\sqrt{h_t}\hat{s}(y_t)+z}^2} = \frac{h_t}{d}\shortexpect_{y_t\sim P_t}{\norm{\hat{s}(y_t)-\nabla\log P_t(y)}^2}+\frac{1}{d}\shortexpect_{x,z}{\norm{\sqrt{h_t}\nabla\log P_t(y_t)+z}^2},
    \end{equation*}
    where $y_t = a_tx+\sqrt{h_t}z$. That is, $\cE_{test} = h_t\cE_{score}+\cE_{test}^\ast$.
\end{lemma}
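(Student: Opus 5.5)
The plan is to add and subtract the exact score inside the norm and show that the cross term vanishes by the orthogonality principle, i.e.\ Tweedie's formula for the Gaussian channel. Write $y_t = a_t x + \sqrt{h_t} z$ with $x\sim P_0$ and $z\sim\cN{0,I_d}$ independent, so that $y_t\sim P_t$. We decompose
\begin{equation*}
\sqrt{h_t}\hat{s}(y_t)+z = \sqrt{h_t}\bigl(\hat{s}(y_t)-\nabla\log P_t(y_t)\bigr) + \bigl(\sqrt{h_t}\nabla\log P_t(y_t)+z\bigr).
\end{equation*}
Expanding the squared norm and taking expectations, the two squared terms give exactly $\frac{h_t}{d}\shortexpect\norm{\hat{s}(y_t)-\nabla\log P_t(y_t)}^2$ and $\cE_{\text{test}}^\ast$. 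The first term depends only on $y_t$, so its expectation is over $y_t\sim P_t$. What remains is to show that the cross term
\begin{equation*}
\frac{2\sqrt{h_t}}{d}\,\shortexpect_{x,z}\Bigl\langle \hat{s}(y_t)-\nabla\log P_t(y_t),\; \sqrt{h_t}\nabla\log P_t(y_t)+z\Bigr\rangle
\end{equation*}
is zero.

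For this we condition on $y_t$. The first factor is a function of $y_t$ only, so it suffices to prove
\begin{equation*}
\shortexpect[z\mid y_t] = -\sqrt{h_t}\,\nabla\log P_t(y_t).
\end{equation*}
This is Tweedie's formula. Differentiating the mixture representation \eqref{eqn:marginal_prob_forward} under the integral gives
\begin{equation*}
\nabla P_t(y) = -\int dP_0(x_0)\,\frac{y-a_t x_0}{h_t}\,(2\pi h_t)^{-d/2}e^{-\norm{y-a_tx_0}^2/2h_t}.
\end{equation*}
Dividing by $P_t(y)$ turns this into $\nabla\log P_t(y) = -\shortexpect\bigl[(y_t-a_tx)/h_t \mid y_t=y\bigr]$. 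Since $(y_t-a_tx)/h_t = z/\sqrt{h_t}$, this is exactly the claimed identity. The tower property then makes the cross term vanish. Dividing through by $h_t$ in the definitions \eqref{eqn:test_err} and \eqref{eqn:score_err} gives $\cE_{\text{test}} = h_t\cE_{\text{score}}+\cE_{\text{test}}^\ast$.

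The only step needing care is justifying the differentiation under the integral and the integrability of all terms. Differentiation is fine for $t>0$, because the Gaussian kernel and its gradient are dominated uniformly on compact sets in $y$, and $P_t>0$ everywhere. The exact score term is square integrable because $\nabla\log P_t(y_t)$ is a conditional expectation of $-z/\sqrt{h_t}$, so Jensen gives $\shortexpect\norm{\nabla\log P_t(y_t)}^2\le d/h_t$. The learned term $\hat{s}(y_t)$ is square integrable for the RFNN score, since $\varrho$ is Lipschitz and $P_t$ has Gaussian tails. These facts make the expansion of the square legitimate.
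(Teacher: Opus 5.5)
Your proof is correct and is the same argument as the paper's. The paper only says the lemma ``follows directly from the orthogonality principle,'' meaning that $\sqrt{h_t}\nabla\log P_t(y_t)+z$ has zero conditional mean given $y_t$ because $\mathbb{E}[z\mid y_t]=-\sqrt{h_t}\nabla\log P_t(y_t)$. It uses this same fact in its proof of Lemma~\ref{lemma:norm_score}, and your proposal carries it out explicitly, including the integrability details the paper leaves implicit.
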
  
Fig.~\ref{fig:score_pDt} displays the resulting score error for ReLU activation, and (non)-linear manifold models. 

\subsection{Discussion}\label{sec:discussion}
We discuss in detail the learning curves derived from the results of the previous sections. We start by reviewing relevant aspects of denoising score matching.

 Consider the loss function given in (\ref{eqn:dsm_minf}). It has an unique minimizer given by:
\begin{equation}\label{eqn:emp_opt_score}
    s^e(t,x) = \left(\sum_{i=1}^n-\left(\frac{x-a_tx_i}{h_t}\right)e^{-\frac{\norm{x-a_tx_i}^2}{2h_t}}\right)\left(\sum_{i=1}^ne^{-\frac{\norm{x-a_tx_i}^2}{2h_t}}\right)^{-1} \;.
\end{equation}
The score $s^e$ is often referred to as the \textit{empirical optimal score}. A backward process using $s^e$ converges in distribution to the empirical distribution of the dataset $\cS$ as $t\rightarrow 0$. That is, the backward process collapses to one of the data samples as $t\rightarrow 0$. Note that $s^e$ is a softmax function, and thus, if the learned score is close to the empirical optimal score, the learned score evaluated at $x_t$ would be approximately $-\frac{x_t-a_t\bar{x}}{h_t}$ where $\bar{x}$ is the closest data sample to $x_t$. 

Next, for any $\hat{s}$, we leverage the optimality of $s^e$ to decompose the training error as follows:
\begin{equation*}
    \frac{1}{dn}\sum_{i=1}^{n}\shortexpect_z{\norm{\sqrt{h_t}\hat{s}(y_i)+z}^2} = h_t\frac{1}{dn}\sum_{i=1}^{n}{\shortexpect_{z}{\norm{\hat{s}(y_i)-s^e(t,y_i}^2}} + \frac{1}{dn}\sum_{i=1}^{n}{\shortexpect_{z}{\norm{\sqrt{h_t}s^e(t, y_i)+z}^2}}\;,
\end{equation*}
where $y_i = a_tx_i+\sqrt{h_t}z$.
Note that the second term on the right-hand side does not depend on $\hat{s}$. Therefore, minimizing the training loss over $\hat{s}$ is equivalent to minimizing only the first term. A central design question in diffusion models is the choice of an appropriate function class such that the minimizer of this first term {\it within the class} closely approximates the exact score. This is a delicate issue: a poor choice of function class can result either in memorization of the training data or in low-quality generated samples.

The score error, on the other hand, directly quantifies the quality of approximation with respect to the exact score. Ideally, we seek a regime in which the score error remains small while the training error is minimized. 


Next, we discuss some aspects regarding the exact score and empirical optimal score when data lies on a linear manifold. Suppose $\Pi\in\R^{d\times D}$ is a matrix with orthonormal columns and let $\Pi^\perp$ its orthogonal complement. Let the data lie on linear manifold given by $x=\Pi \xi$. Then, the exact score function evaluated at $x$ is given by
\begin{align*}
s^\ast(x) &= -\left(a_t^2\frac{\Pi\Pi^T}{d}+h_tI\right)^{-1}x =  -\left(\Pi\Pi^T+h_t\Pi^\perp{\Pi^\perp}^T\right)^{-1}x =  -\left(\Pi\Pi^T+\frac{1}{h_t}\Pi^\perp{\Pi^\perp}^T\right)x.
\end{align*}
Thus, the magnitude of the exact score is $O(1)$ w.r.t. $t$ in the directions parallel to $\Pi$, while it's $O(\frac{1}{h_t})$ in the orthogonal directions. 
Now, suppose the learned score function approximates the empirical optimal score \eqref{eqn:emp_opt_score}. Then, at $x_t = a_t\Pi\xi+\sqrt{h_t}z$, let's compute its component along $\Pi^\perp$. We have (using the remarks after \eqref{eqn:emp_opt_score}) $\Pi^\perp \hat{s}(a_t\Pi\xi+\sqrt{h_t}z)\approx -\Pi^\perp \frac{a_t\Pi\xi+\sqrt{h_t}z-a_t\Pi\bar{\xi}}{h_t} = -\frac{\Pi^\perp z}{\sqrt{h_t}}$, where $\Pi\bar{\xi}$ is the nearest data sample to $x_t$. This is precisely the component of the exact score along $\Pi^\perp$ at $x_t$. {\it This means that, even when the learned score approximates the empirical optimal score, the orthogonal component can generalize}. 

Under this premise, we analyze the learning curves for linear manifold data displayed in Figs.~\ref{fig:test_train_pDt} and \ref{fig:score_lin_pDt}. 
\begin{figure}
\centering
\subfigure[$\varrho =$ ReLU, $\sigma(x)=x$.]{\includegraphics[width=0.48\textwidth]{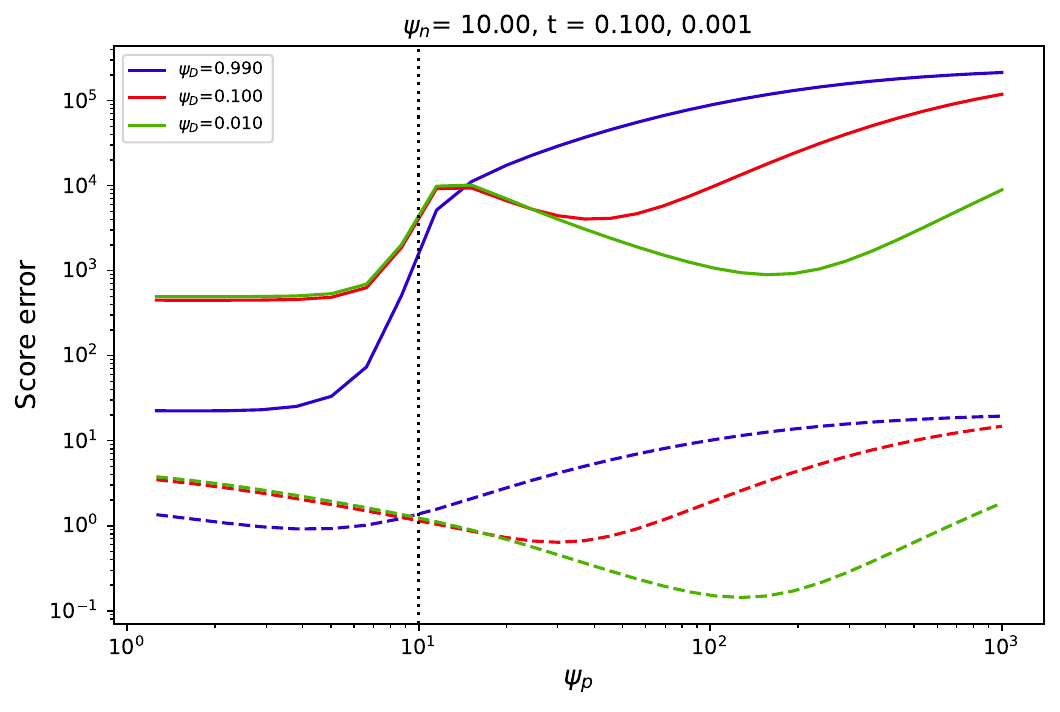}
\label{fig:score_lin_pDt}}
\subfigure[$\varrho =$ ReLU, $\sigma(x)=0.975*x+0.223*\frac{x^2-1}{\sqrt{2}}$]{\includegraphics[width=0.48\textwidth]{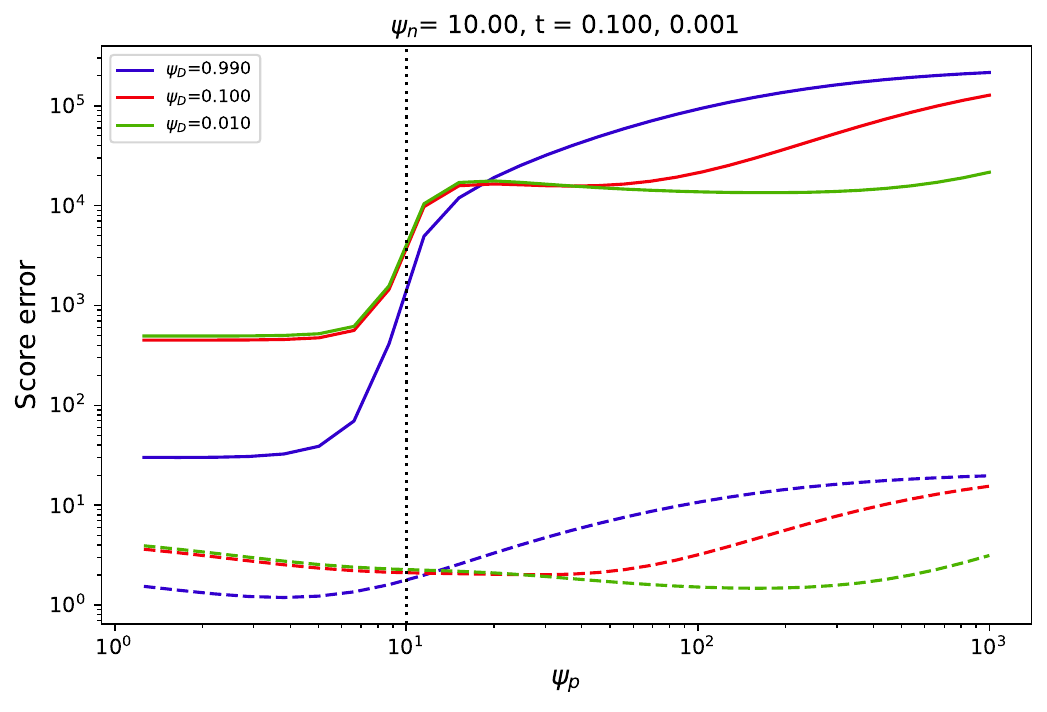}
\label{fig:score_nonlin_pDt}}
\caption{Score errors for linear and non-linear manifolds. Solid lines are for $t=0.001$, and dashed lines are for $t=0.1$.}
\label{fig:score_pDt}
\end{figure}
Fig~\ref{fig:test_train_pDt} displays the test and train errors for $\varrho =$ RELU, $\sigma(x)=x$, as a function of $\psi_p$, for different $\psi_D$ and $t$ and a fixed $\psi_n = 10.0$. Corresponding score errors are shown in Fig.~\ref{fig:score_lin_pDt}. The error curves exhibit several interesting behaviors which we elucidate here. First, we consider the {\it small} $t$ regime ($t=0.001$, Figs.~\ref{fig:test_train_pDt},~\ref{fig:score_lin_pDt}) since the characteristics are sharper here. Below we give explanations for the behaviors observed for different regimes of $\psi_p$: 
\begin{enumerate}
    \item $\psi_p<\psi_n$: In this regime, the learned score generalizes well, meaning that it closely approximates the exact score. This is evidenced by the agreement between the test and training errors. Because the magnitude of the exact score $s^*(x)$ is large along the orthogonal directions, the corresponding score error is also large in those directions. However, for large $\psi_D$, the score error decreases due to the reduced number of orthogonal directions.
    \item $\psi_p\approx \psi_n$: This marks the threshold at which the error curves transition from generalization to memorization. The onset of memorization is evidenced by the decreasing training error, accompanied by a rapid increase in the score error.
    \item $\psi_p>\psi_n$: This is an interesting regime in which the behavior depends on the dimensionality of the manifold. Two competing sources contribute to the score error. On one hand, as the learned score better approximates the empirical optimal score, its orthogonal component becomes closer to that of the exact score, which tends to reduce the overall error. On the other hand, the parallel component departs from the exact score, thereby increasing the error. The observed score error reflects the balance between these opposing effects. The ultimate trend depends on the manifold dimension: lower-dimensional manifolds have more orthogonal directions, which favors a smaller score error.
    \item $\psi_p\gg \psi_n$: In this regime, the error arising from the parallel components dominates that from the orthogonal components, leading to an overall increase in the score error.
\end{enumerate}
For {\it larger} $t$ ($t=0.1$, Figs.~\ref{fig:test_train_pDt},~\ref{fig:score_lin_pDt}), the qualitative behavior of the learning curves remains similar to that observed for small $t$ when $\psi_p>\psi_n$. However, the generalization to memorization phase transition at $\psi_p=\psi_n$ becomes smoother. This effect is due to the fact that larger $t$ introduces an \emph{implicit regularization}. This is evident from the proof of Theorem~\ref{thm:test_train}, where the spectral parameter of the resolvent $R$ \eqref{apndx_eqnl:resolvent} takes the form $s^2+\lambda$. Since the value of $s^2= 1-c(a_t^2) - h_t\mu_1^2$ increases with $t$, the effective regularization strength also increases.
\begin{figure}[t]
\centering
\subfigure[Score error vs. $\psi_n$: Solid lines are for $t=0.001$, and dashed lines are for $t=0.1$.]{\includegraphics[width=0.48\textwidth]{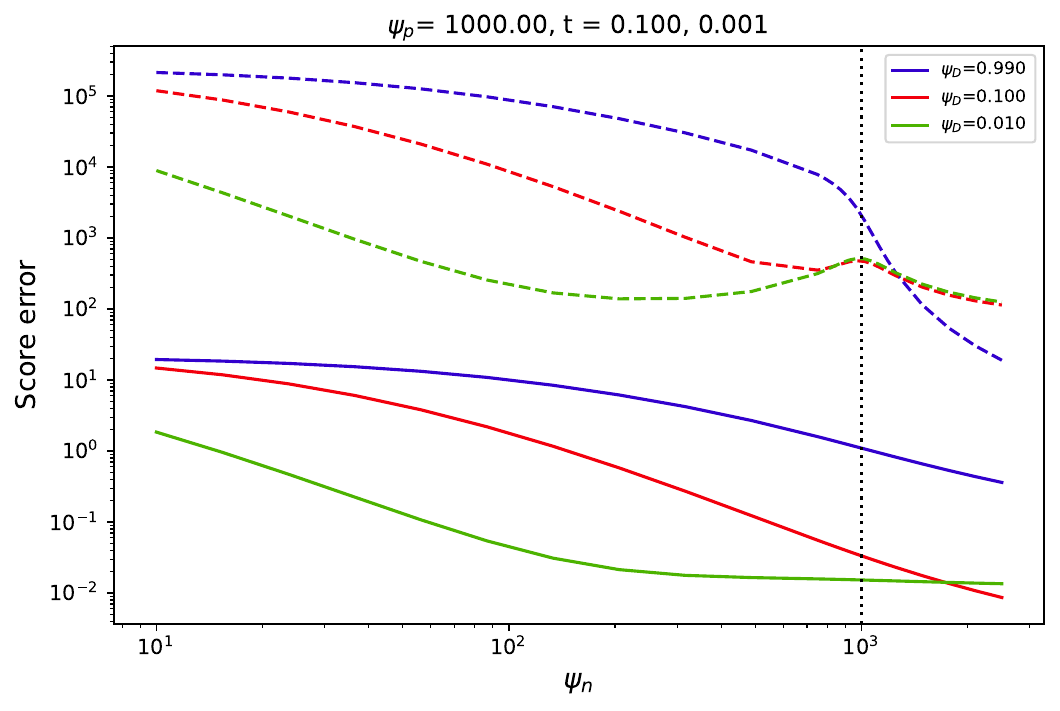}
\label{fig:score_nDt}}
\subfigure[Sample complexity for $\epsilon=0.2$ (solid lines), and $\epsilon=0.05$ (dashed lines).]{\includegraphics[width=0.48\textwidth]{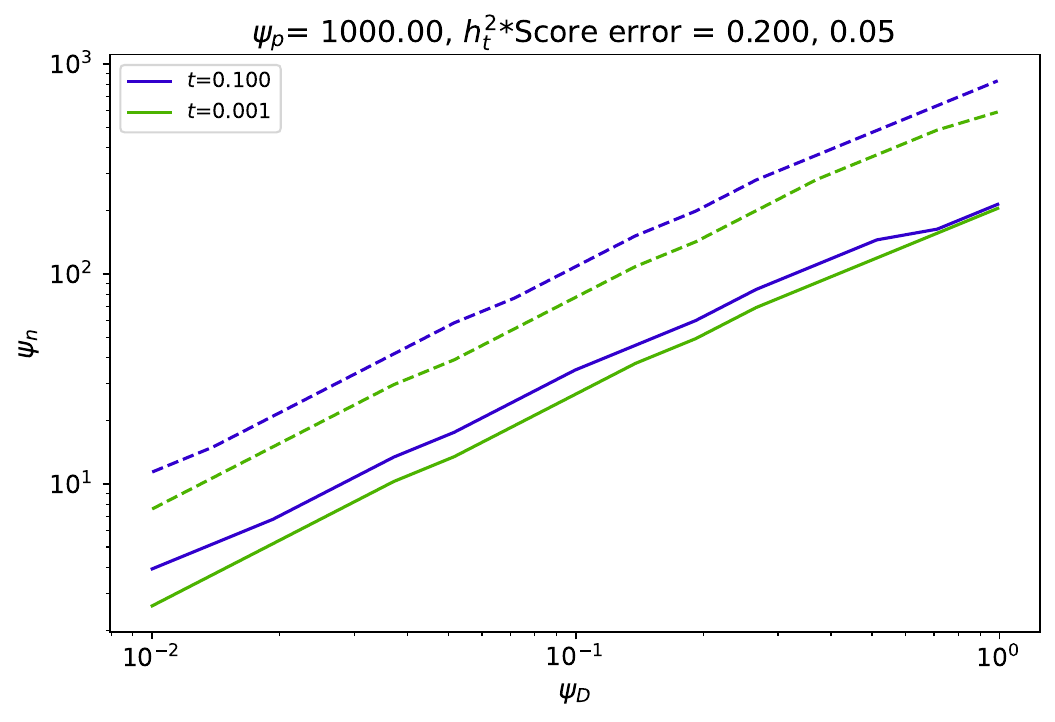}
\label{fig:sample_cmplxty_Dte}}
\caption{Score error and sample complexity for $\varrho =$ ReLU, $\sigma(x)=x$.}
\end{figure}

\paragraph{Sample complexity} Fig~\ref{fig:score_nDt} shows the score errors for $\varrho =$ RELU, $\sigma(x)=x$, as a function of $\psi_n$ for different $\psi_D$ and $t$ and a fixed $\psi_p=1000.0$. For linear manifolds, we observe that the error due to memorization is smaller as $\psi_D$ decreases for a fixed $\psi_n$. This can be made more quantitative by defining a notion of sample complexity. As shown in \cite{bortoli_convergence_2022}, a uniform bound on $h_t^2$ times the score error is sufficient to obtain overall theoretical guarantees for diffusion models. Accordingly, Fig.~\ref{fig:sample_cmplxty_Dte} reports the minimum value of $\psi_n$ required for $h_t^2$ times the score error to fall below a prescribed threshold $\epsilon$. We refer to this quantity as the \emph{sample complexity}. We observe that the sample complexity grows approximately linearly with $\psi_D$, indicating that it is governed by the intrinsic dimension of the data manifold rather than the ambient dimension.

\paragraph{Non-linear manifolds} We now briefly discuss the score curves for a nonlinear manifold setting. Specifically, we consider a manifold folding function given by $\sigma(x)=0.975*x+0.223*\frac{x^2-1}{\sqrt{2}}$, which translates to a $5\%$ non-linear component in terms of $L_2$ power w.r.t. Gaussian measure. Figure~\ref{fig:score_nonlin_pDt} shows the resulting score error in this setting. We observe that the phase in which the score error decreases as $\psi_p$ increases beyond $\psi_n$ begins to disappear, indicating that the low-dimensional structure becomes less effective at reducing the score error.

This behavior can be explained as follows. A careful examination of the proof of Theorem~\ref{thm:test_train} reveals that the nonlinearity interpolates between two limiting cases: data supported on a low-dimensional linear manifold and isotropic data in the ambient space. When the nonlinearity power ($1-\nu_1^2$ in Theorem~\ref{thm:test_train}) is zero, the data lie on a linear manifold; when it equals one, the data follow an isotropic Gaussian distribution in the ambient space. Consequently, as shown in Fig.~\ref{fig:score_nonlin_pDt}, introducing nonlinearity causes the curves corresponding to different $\psi_D$ to move closer together.
\begin{wrapfigure}{r}{0.5\textwidth}
    \centering
\includegraphics[width=0.47\textwidth]{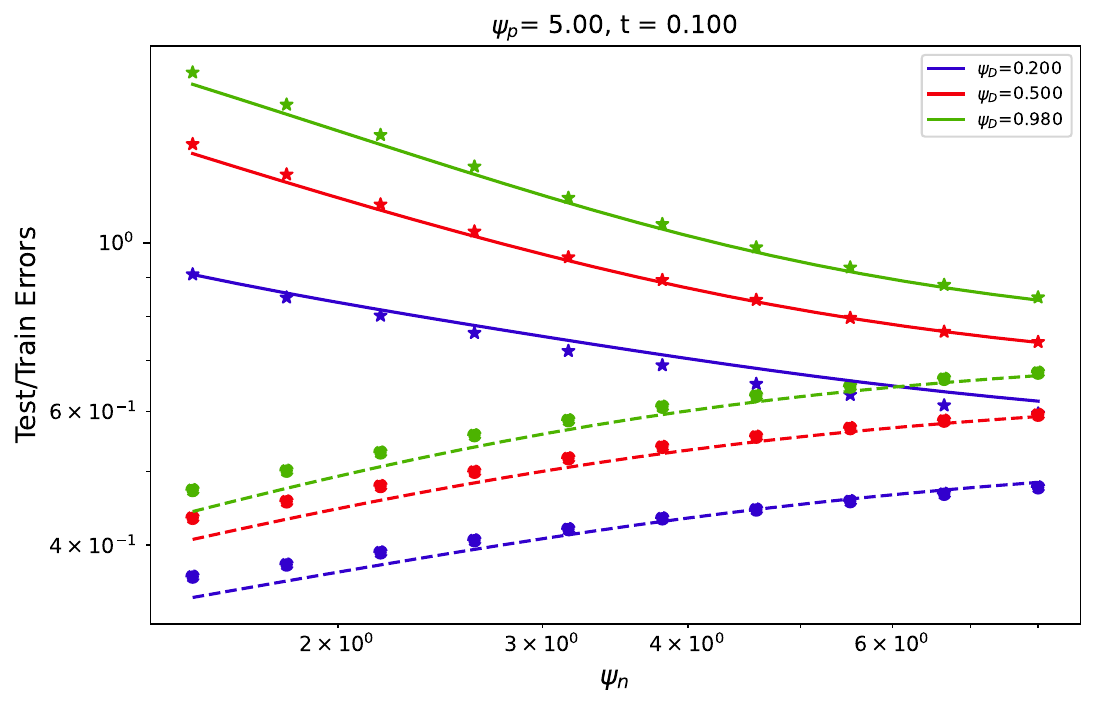}
    \caption{Comparison of test (solid lines) and train (dashed lines) errors with numerical simulations (points) for $\varrho =$~ReLU, $\sigma=$~tanh.}
    \label{fig:test_train_sim}
    \vspace{-70pt}
\end{wrapfigure}

\paragraph{Numerical Verification}
Finally Fig.~\ref{fig:test_train_sim} shows the comparison of numerically obtained  test and train errors against the theoretical curves obtained using Theorem~\ref{thm:test_train}.


\section{Conclusion and Future Work}
We studied the problem of learning the score function in diffusion models when the data distribution is supported on a low-dimensional manifold. Focusing on denoising score matching with a random feature neural network parameterization, we derived asymptotically exact expressions for the test, train, and score errors in the high-dimensional limit.

A central conclusion of our study is that for linear manifolds, the sample complexity required to learn the score function scales linearly with the intrinsic dimension of the underlying manifold, rather than with the ambient dimension. To some extent, this result offers a theoretical insights into the empirical effectiveness of diffusion models in high-dimensional settings. From a theoretical perspective, it highlights denoising score matching as a mechanism that exploits geometry of data, even when learning is performed in the full ambient space. However, we also discovered that as the non-linearity of the manifold increases, the situation becomes more similar to having a distribution supported in the ambient space, and the benefits of the low-dimensional structure diminishes.

There are several promising directions for future work. One natural extension is to move beyond random feature models and analyze fully trained neural networks, where feature learning may further adapt to manifold structure. Another direction is to study different noise schedules, including non-isotropic or data-dependent perturbations, and to understand how these choices affect sample complexity and generalization. 

\subsubsection*{Acknowledgements}
The work of A. J. G. has been supported by Swiss National Science Foundation grant number 200021-204119. 
\bibliography{references.bib} 

@misc{azangulov_convergence_2025,
    title = {Convergence of {Diffusion} {Models} {Under} the {Manifold} {Hypothesis} in {High}-{Dimensions}},
    url = {http://arxiv.org/abs/2409.18804},
    doi = {10.48550/arXiv.2409.18804},
    urldate = {2026-01-05},
    publisher = {arXiv},
    author = {Azangulov, Iskander and Deligiannidis, George and Rousseau, Judith},
    month = apr,
    year = {2025},
    note = {arXiv:2409.18804 [stat]},
}

@book{janson_gaussian_1997,
	address = {Cambridge},
	series = {Cambridge {Tracts} in {Mathematics}},
	title = {Gaussian {Hilbert} {Spaces}},
	isbn = {978-0-521-56128-0},
	url = {https://www.cambridge.org/core/books/gaussian-hilbert-spaces/658C87D5A0E7FB440FC34D82B08167FC},
	doi = {10.1017/CBO9780511526169},
	urldate = {2026-06-12},
	publisher = {Cambridge University Press},
	author = {Janson, Svante},
	year = {1997},
}

@article{chou_equilibrium_1985,
	title = {Equilibrium and nonequilibrium formalisms made unified},
	volume = {118},
	issn = {0370-1573},
	url = {https://www.sciencedirect.com/science/article/pii/037015738590136X},
	doi = {10.1016/0370-1573(85)90136-X},
	number = {1},
	urldate = {2026-05-18},
	journal = {Physics Reports},
	author = {Chou, Kuang-chao and Su, Zhao-bin and Hao, Bai-lin and Yu, Lu},
	month = feb,
	year = {1985},
	pages = {1--131},
}

@article{slepian_symmetrized_1972,
	title = {On the {Symmetrized} {Kronecker} {Power} of a {Matrix} and {Extensions} of {Mehler}’s {Formula} for {Hermite} {Polynomials}},
	volume = {3},
	issn = {0036-1410},
	url = {https://epubs.siam.org/doi/abs/10.1137/0503060},
	doi = {10.1137/0503060},
	number = {4},
	urldate = {2026-05-18},
	journal = {SIAM Journal on Mathematical Analysis},
	publisher = {Society for Industrial and Applied Mathematics},
	author = {Slepian, David},
	month = nov,
	year = {1972},
	pages = {606--616},
}

@misc{bodin_gradient_2022,
	title = {Gradient flow in the gaussian covariate model: exact solution of learning curves and multiple descent structures},
	shorttitle = {Gradient flow in the gaussian covariate model},
	url = {http://arxiv.org/abs/2212.06757},
	doi = {10.48550/arXiv.2212.06757},
	urldate = {2026-02-04},
	publisher = {arXiv},
	author = {Bodin, Antoine and Macris, Nicolas},
	month = dec,
	year = {2022},
	note = {arXiv:2212.06757 [stat]},
}

@book{couillet_random_2022,
	edition = {1},
	title = {Random {Matrix} {Methods} for {Machine} {Learning}},
	copyright = {https://www.cambridge.org/core/terms},
	isbn = {978-1-009-12849-0 978-1-009-12323-5},
	doi = {10.1017/9781009128490},
	language = {en},
	urldate = {2024-10-07},
	publisher = {Cambridge University Press},
	author = {Couillet, Romain and Liao, Zhenyu},
	month = jul,
	year = {2022},
}

@inproceedings{george_analysis_2025,
	title = {Analysis of {Diffusion} {Models} for {Manifold} {Data}},
	issn = {2157-8117},
	doi = {10.1109/ISIT63088.2025.11195641},
	urldate = {2026-02-04},
	booktitle = {2025 {IEEE} {International} {Symposium} on {Information} {Theory} ({ISIT})},
	author = {George, Anand Jerry and Veiga, Rodrigo and Macris, Nicolas},
	month = jun,
	year = {2025},
	note = {ISSN: 2157-8117},
	pages = {1--6},
}

@misc{erdos_matrix_2019,
	title = {The matrix {Dyson} equation and its applications for random matrices},
	url = {http://arxiv.org/abs/1903.10060},
	doi = {10.48550/arXiv.1903.10060},
	urldate = {2025-10-22},
	publisher = {arXiv},
	author = {Erdos, Laszlo},
	month = mar,
	year = {2019},
	note = {arXiv:1903.10060 [math]},
}

@misc{george_denoising_2025,
	title = {Denoising {Score} {Matching} with {Random} {Features}: {Insights} on {Diffusion} {Models} from {Precise} {Learning} {Curves}},
	shorttitle = {Denoising {Score} {Matching} with {Random} {Features}},
	url = {http://arxiv.org/abs/2502.00336},
	doi = {10.48550/arXiv.2502.00336},
	urldate = {2026-01-23},
	publisher = {arXiv},
	author = {George, Anand Jerry and Veiga, Rodrigo and Macris, Nicolas},
	month = oct,
	year = {2025},
	note = {arXiv:2502.00336 [cs]},
}

@article{mei_generalization_2022,
	title = {The {Generalization} {Error} of {Random} {Features} {Regression}: {Precise} {Asymptotics} and the {Double} {Descent} {Curve}},
	volume = {75},
	shorttitle = {The {Generalization} {Error} of {Random} {Features} {Regression}},
	language = {en},
	number = {4},
	urldate = {2025-01-13},
	journal = {Communications on Pure and Applied Mathematics},
	author = {Mei, Song and Montanari, Andrea},
	year = {2022},
	pages = {667--766},
}

@misc{zeno_when_2025,
	title = {When {Diffusion} {Models} {Memorize}: {Inductive} {Biases} in {Probability} {Flow} of {Minimum}-{Norm} {Shallow} {Neural} {Nets}},
	shorttitle = {When {Diffusion} {Models} {Memorize}},
	url = {http://arxiv.org/abs/2506.19031},
	doi = {10.48550/arXiv.2506.19031},
	urldate = {2025-06-28},
	publisher = {arXiv},
	author = {Zeno, Chen and Manor, Hila and Ongie, Greg and Weinberger, Nir and Michaeli, Tomer and Soudry, Daniel},
	month = jun,
	year = {2025},
	note = {arXiv:2506.19031 [stat]},
}

@misc{saha_generalization_2025,
	title = {Generalization {Bound} for {Diffusion} {Models} using {Random} {Features}},
	url = {http://arxiv.org/abs/2310.04417},
	doi = {10.48550/arXiv.2310.04417},
	urldate = {2025-09-28},
	publisher = {arXiv},
	author = {Saha, Esha and Tran, Giang},
	month = aug,
	year = {2025},
	note = {arXiv:2310.04417 [stat]},
}

@misc{bonnaire_why_2025,
	title = {Why {Diffusion} {Models} {Don}'t {Memorize}: {The} {Role} of {Implicit} {Dynamical} {Regularization} in {Training}},
	shorttitle = {Why {Diffusion} {Models} {Don}'t {Memorize}},
	url = {http://arxiv.org/abs/2505.17638},
	doi = {10.48550/arXiv.2505.17638},
	urldate = {2025-09-28},
	publisher = {arXiv},
	author = {Bonnaire, Tony and Urfin, Raphaël and Biroli, Giulio and Mézard, Marc},
	month = may,
	year = {2025},
	note = {arXiv:2505.17638 [cs]},
}

@inproceedings{ross_geometric_2024,
	title = {A {Geometric} {Framework} for {Understanding} {Memorization} in {Generative} {Models}},
	language = {en},
	urldate = {2025-05-15},
	booktitle = {The {Thirteenth} {International} {Conference} on {Learning} {Representations}},
	author = {Ross, Brendan Leigh and Kamkari, Hamidreza and Wu, Tongzi and Hosseinzadeh, Rasa and Liu, Zhaoyan and Stein, George and Cresswell, Jesse C. and Loaiza-Ganem, Gabriel},
	month = oct,
	year = {2024},
}

@article{li_generalization_2023,
	title = {On the {Generalization} {Properties} of {Diffusion} {Models}},
	volume = {36},
	language = {en},
	urldate = {2025-01-26},
	journal = {Advances in Neural Information Processing Systems},
	author = {Li, Puheng and Li, Zhong and Zhang, Huishuai and Bian, Jiang},
	month = dec,
	year = {2023},
	pages = {2097--2127},
}

@misc{achilli_memorization_2025,
	title = {Memorization and {Generalization} in {Generative} {Diffusion} under the {Manifold} {Hypothesis}},
	doi = {10.48550/arXiv.2502.09578},
	urldate = {2025-02-19},
	publisher = {arXiv},
	author = {Achilli, Beatrice and Ambrogioni, Luca and Lucibello, Carlo and Mézard, Marc and Ventura, Enrico},
	month = feb,
	year = {2025},
	note = {arXiv:2502.09578 [cond-mat]},
}

@inproceedings{pidstrigach_score-based_2022,
	title = {Score-{Based} {Generative} {Models} {Detect} {Manifolds}},
	volume = {35},
	booktitle = {Advances in {Neural} {Information} {Processing} {Systems}},
	publisher = {Curran Associates, Inc.},
	author = {Pidstrigach, Jakiw},
	year = {2022},
	pages = {35852--35865},
}

@article{goldt_modeling_2020,
	title = {Modeling the {Influence} of {Data} {Structure} on {Learning} in {Neural} {Networks}: {The} {Hidden} {Manifold} {Model}},
	volume = {10},
	issn = {2160-3308},
	shorttitle = {Modeling the {Influence} of {Data} {Structure} on {Learning} in {Neural} {Networks}},
	doi = {10.1103/PhysRevX.10.041044},
	language = {en},
	number = {4},
	urldate = {2025-01-22},
	journal = {Physical Review X},
	author = {Goldt, Sebastian and Mézard, Marc and Krzakala, Florent and Zdeborová, Lenka},
	month = dec,
	year = {2020},
	pages = {041044},
}

@article{barbier_optimal_2019,
	title = {Optimal errors and phase transitions in high-dimensional generalized linear models},
	volume = {116},
	issn = {0027-8424, 1091-6490},
	doi = {10.1073/pnas.1802705116},
	language = {en},
	number = {12},
	urldate = {2025-01-22},
	journal = {Proceedings of the National Academy of Sciences},
	author = {Barbier, Jean and Krzakala, Florent and Macris, Nicolas and Miolane, Léo and Zdeborová, Lenka},
	month = mar,
	year = {2019},
	pages = {5451--5460},
}

@misc{cui_precise_2025,
	title = {A precise asymptotic analysis of learning diffusion models: theory and insights},
	shorttitle = {A precise asymptotic analysis of learning diffusion models},
	doi = {10.48550/arXiv.2501.03937},
	urldate = {2025-01-13},
	publisher = {arXiv},
	author = {Cui, Hugo and Pehlevan, Cengiz and Lu, Yue M.},
	month = jan,
	year = {2025},
	note = {arXiv:2501.03937 [cs]},
}

@article{ambrogioni_search_2024,
	title = {In {Search} of {Dispersed} {Memories}: {Generative} {Diffusion} {Models} {Are} {Associative} {Memory} {Networks}},
	volume = {26},
	copyright = {https://creativecommons.org/licenses/by/4.0/},
	issn = {1099-4300},
	shorttitle = {In {Search} of {Dispersed} {Memories}},
	doi = {10.3390/e26050381},
	language = {en},
	number = {5},
	urldate = {2025-01-28},
	journal = {Entropy},
	author = {Ambrogioni, Luca},
	month = apr,
	year = {2024},
	pages = {381},
}

@article{vincent_connection_2011,
	title = {A {Connection} {Between} {Score} {Matching} and {Denoising} {Autoencoders}},
	volume = {23},
	issn = {0899-7667, 1530-888X},
	doi = {10.1162/NECO_a_00142},
	language = {en},
	number = {7},
	urldate = {2025-01-18},
	journal = {Neural Computation},
	author = {Vincent, Pascal},
	month = jul,
	year = {2011},
	pages = {1661--1674},
}

@inproceedings{song_score-based_2020,
	title = {Score-{Based} {Generative} {Modeling} through {Stochastic} {Differential} {Equations}},
	language = {en},
	urldate = {2025-01-28},
	booktitle = {International {Conference} on {Learning} {Representations}},
	author = {Song, Yang and Sohl-Dickstein, Jascha and Kingma, Diederik P. and Kumar, Abhishek and Ermon, Stefano and Poole, Ben},
	month = oct,
	year = {2020},
}

@inproceedings{song_maximum_2021,
	title = {Maximum {Likelihood} {Training} of {Score}-{Based} {Diffusion} {Models}},
	volume = {34},
	booktitle = {Advances in {Neural} {Information} {Processing} {Systems}},
	publisher = {Curran Associates, Inc.},
	author = {Song, Yang and Durkan, Conor and Murray, Iain and Ermon, Stefano},
	year = {2021},
	pages = {1415--1428},
}

@inproceedings{raya_spontaneous_2023,
	title = {Spontaneous symmetry breaking in generative diffusion models},
	volume = {36},
	booktitle = {Advances in {Neural} {Information} {Processing} {Systems}},
	publisher = {Curran Associates, Inc.},
	author = {Raya, Gabriel and Ambrogioni, Luca},
	year = {2023},
	pages = {66377--66389},
}

@inproceedings{nichol_glide_2022,
	title = {{GLIDE}: {Towards} {Photorealistic} {Image} {Generation} and {Editing} with {Text}-{Guided} {Diffusion} {Models}},
	issn = {2640-3498},
	shorttitle = {{GLIDE}},
	language = {en},
	urldate = {2025-01-26},
	booktitle = {Proceedings of the 39th {International} {Conference} on {Machine} {Learning}},
	publisher = {PMLR},
	author = {Nichol, Alexander Quinn and Dhariwal, Prafulla and Ramesh, Aditya and Shyam, Pranav and Mishkin, Pamela and Mcgrew, Bob and Sutskever, Ilya and Chen, Mark},
	month = jun,
	year = {2022},
	pages = {16784--16804},
}

@article{kibble_extension_1945,
	title = {An extension of a theorem of {Mehler}'s on {Hermite} polynomials},
	volume = {41},
	copyright = {https://www.cambridge.org/core/terms},
	issn = {0305-0041, 1469-8064},
	doi = {10.1017/S0305004100022313},
	language = {en},
	number = {1},
	urldate = {2025-01-21},
	journal = {Mathematical Proceedings of the Cambridge Philosophical Society},
	author = {Kibble, W. F.},
	month = jun,
	year = {1945},
	pages = {12--15},
}

@inproceedings{kadkhodaie_generalization_2023,
	title = {Generalization in diffusion models arises from geometry-adaptive harmonic representations},
	language = {en},
	urldate = {2025-01-28},
	booktitle = {The {Twelfth} {International} {Conference} on {Learning} {Representations}},
	author = {Kadkhodaie, Zahra and Guth, Florentin and Simoncelli, Eero P. and Mallat, Stéphane},
	month = oct,
	year = {2023},
}

@inproceedings{ho_classifier-free_2021,
	title = {Classifier-{Free} {Diffusion} {Guidance}},
	language = {en},
	urldate = {2025-01-26},
	booktitle = {{NeurIPS} 2021 {Workshop} on {Deep} {Generative} {Models} and {Downstream} {Applications}},
	author = {Ho, Jonathan and Salimans, Tim},
	month = dec,
	year = {2021},
}

@inproceedings{goldt_gaussian_2022,
	title = {The {Gaussian} equivalence of generative models for learning with shallow neural networks},
	issn = {2640-3498},
	language = {en},
	urldate = {2025-01-22},
	booktitle = {Proceedings of the 2nd {Mathematical} and {Scientific} {Machine} {Learning} {Conference}},
	publisher = {PMLR},
	author = {Goldt, Sebastian and Loureiro, Bruno and Reeves, Galen and Krzakala, Florent and Mezard, Marc and Zdeborova, Lenka},
	month = apr,
	year = {2022},
	pages = {426--471},
}

@misc{far_spectra_2006,
	title = {Spectra of large block matrices},
	urldate = {2024-10-23},
	publisher = {arXiv},
	author = {Far, Reza Rashidi and Oraby, Tamer and Bryc, Wlodzimierz and Speicher, Roland},
	month = oct,
	year = {2006},
	note = {arXiv:cs/0610045},
}

@inproceedings{cui_analysis_2023,
	title = {Analysis of {Learning} a {Flow}-based {Generative} {Model} from {Limited} {Sample} {Complexity}},
	language = {en},
	urldate = {2025-01-28},
	booktitle = {The {Twelfth} {International} {Conference} on {Learning} {Representations}},
	author = {Cui, Hugo and Krzakala, Florent and Vanden-Eijnden, Eric and Zdeborova, Lenka},
	month = oct,
	year = {2023},
}

@inproceedings{chen_sampling_2022,
	title = {Sampling is as easy as learning the score: theory for diffusion models with minimal data assumptions},
	shorttitle = {Sampling is as easy as learning the score},
	language = {en},
	urldate = {2025-01-28},
	booktitle = {The {Eleventh} {International} {Conference} on {Learning} {Representations}},
	author = {Chen, Sitan and Chewi, Sinho and Li, Jerry and Li, Yuanzhi and Salim, Adil and Zhang, Anru},
	month = sep,
	year = {2022},
}

@inproceedings{chen_score_2023,
	title = {Score {Approximation}, {Estimation} and {Distribution} {Recovery} of {Diffusion} {Models} on {Low}-{Dimensional} {Data}},
	issn = {2640-3498},
	language = {en},
	urldate = {2025-01-28},
	booktitle = {Proceedings of the 40th {International} {Conference} on {Machine} {Learning}},
	publisher = {PMLR},
	author = {Chen, Minshuo and Huang, Kaixuan and Zhao, Tuo and Wang, Mengdi},
	month = jul,
	year = {2023},
	pages = {4672--4712},
}

@phdthesis{bodin_random_2024,
	title = {Random matrix methods for high-dimensional machine learning models},
	doi = {10.5075/epfl-thesis-10524},
	language = {en},
	urldate = {2025-01-24},
	school = {EPFL},
	author = {Bodin, Antoine Philippe Michel},
	year = {2024},
}

@inproceedings{bodin_model_2021,
	title = {Model, sample, and epoch-wise descents: exact solution of gradient flow in the random feature model},
	volume = {34},
	shorttitle = {Model, sample, and epoch-wise descents},
	urldate = {2025-01-24},
	booktitle = {Advances in {Neural} {Information} {Processing} {Systems}},
	publisher = {Curran Associates, Inc.},
	author = {Bodin, Antoine and Macris, Nicolas},
	year = {2021},
	pages = {21605--21617},
}

@inproceedings{benton_nearly_2023,
	title = {Nearly d-{Linear} {Convergence} {Bounds} for {Diffusion} {Models} via {Stochastic} {Localization}},
	language = {en},
	urldate = {2025-01-28},
	booktitle = {The {Twelfth} {International} {Conference} on {Learning} {Representations}},
	author = {Benton, Joe and Bortoli, Valentin De and Doucet, Arnaud and Deligiannidis, George},
	month = oct,
	year = {2023},
}

@misc{achilli_losing_2024,
	title = {Losing dimensions: {Geometric} memorization in generative diffusion},
	shorttitle = {Losing dimensions},
	urldate = {2024-11-17},
	publisher = {arXiv},
	author = {Achilli, Beatrice and Ventura, Enrico and Silvestri, Gianluigi and Pham, Bao and Raya, Gabriel and Krotov, Dmitry and Lucibello, Carlo and Ambrogioni, Luca},
	month = oct,
	year = {2024},
	note = {arXiv:2410.08727},
}

@inproceedings{wang_evaluating_2024,
	title = {Evaluating the design space of diffusion-based generative models},
	language = {en},
	urldate = {2025-01-15},
	booktitle = {The {Thirty}-eighth {Annual} {Conference} on {Neural} {Information} {Processing} {Systems}},
	author = {Wang, Yuqing and He, Ye and Tao, Molei},
	month = nov,
	year = {2024},
}

@inproceedings{han_neural_2023,
	title = {Neural {Network}-{Based} {Score} {Estimation} in {Diffusion} {Models}: {Optimization} and {Generalization}},
	shorttitle = {Neural {Network}-{Based} {Score} {Estimation} in {Diffusion} {Models}},
	language = {en},
	urldate = {2025-01-17},
	booktitle = {The {Twelfth} {International} {Conference} on {Learning} {Representations}},
	author = {Han, Yinbin and Razaviyayn, Meisam and Xu, Renyuan},
	month = oct,
	year = {2023},
}

@inproceedings{song_generative_2019,
	title = {Generative {Modeling} by {Estimating} {Gradients} of the {Data} {Distribution}},
	volume = {32},
	urldate = {2024-05-22},
	booktitle = {Advances in {Neural} {Information} {Processing} {Systems}},
	publisher = {Curran Associates, Inc.},
	author = {Song, Yang and Ermon, Stefano},
	year = {2019},
}

@inproceedings{sohl-dickstein_deep_2015,
	title = {Deep {Unsupervised} {Learning} using {Nonequilibrium} {Thermodynamics}},
	issn = {1938-7228},
	language = {en},
	urldate = {2025-01-10},
	booktitle = {Proceedings of the 32nd {International} {Conference} on {Machine} {Learning}},
	publisher = {PMLR},
	author = {Sohl-Dickstein, Jascha and Weiss, Eric and Maheswaranathan, Niru and Ganguli, Surya},
	month = jun,
	year = {2015},
	pages = {2256--2265},
}

@article{shah_learning_2023,
	title = {Learning {Mixtures} of {Gaussians} {Using} the {DDPM} {Objective}},
	volume = {36},
	language = {en},
	urldate = {2025-01-14},
	journal = {Advances in Neural Information Processing Systems},
	author = {Shah, Kulin and Chen, Sitan and Klivans, Adam},
	month = dec,
	year = {2023},
	pages = {19636--19649},
}

@inproceedings{rombach_high-resolution_2022,
	address = {New Orleans, LA, USA},
	title = {High-{Resolution} {Image} {Synthesis} with {Latent} {Diffusion} {Models}},
	copyright = {https://doi.org/10.15223/policy-029},
	language = {en},
	urldate = {2025-01-13},
	booktitle = {2022 {IEEE}/{CVF} {Conference} on {Computer} {Vision} and {Pattern} {Recognition} ({CVPR})},
	publisher = {IEEE},
	author = {Rombach, Robin and Blattmann, Andreas and Lorenz, Dominik and Esser, Patrick and Ommer, Bjorn},
	month = jun,
	year = {2022},
	pages = {10674--10685},
}

@inproceedings{rahimi_random_2007,
	title = {Random {Features} for {Large}-{Scale} {Kernel} {Machines}},
	volume = {20},
	urldate = {2025-01-13},
	booktitle = {Advances in {Neural} {Information} {Processing} {Systems}},
	publisher = {Curran Associates, Inc.},
	author = {Rahimi, Ali and Recht, Benjamin},
	year = {2007},
}

@article{hu_universality_2023,
	title = {Universality {Laws} for {High}-{Dimensional} {Learning} {With} {Random} {Features}},
	volume = {69},
	number = {3},
	urldate = {2025-01-13},
	journal = {IEEE Transactions on Information Theory},
	author = {Hu, Hong and Lu, Yue M.},
	month = mar,
	year = {2023},
	note = {Conference Name: IEEE Transactions on Information Theory},
	pages = {1932--1964},
}

@inproceedings{ho_denoising_2020,
	title = {Denoising {Diffusion} {Probabilistic} {Models}},
	volume = {33},
	urldate = {2024-05-22},
	booktitle = {Advances in {Neural} {Information} {Processing} {Systems}},
	publisher = {Curran Associates, Inc.},
	author = {Ho, Jonathan and Jain, Ajay and Abbeel, Pieter},
	year = {2020},
	pages = {6840--6851},
}

@inproceedings{dhariwal_diffusion_2021,
	title = {Diffusion {Models} {Beat} {GANs} on {Image} {Synthesis}},
	volume = {34},
	urldate = {2025-01-13},
	booktitle = {Advances in {Neural} {Information} {Processing} {Systems}},
	publisher = {Curran Associates, Inc.},
	author = {Dhariwal, Prafulla and Nichol, Alexander},
	year = {2021},
	pages = {8780--8794},
}

@article{bortoli_convergence_2022,
	title = {Convergence of denoising diffusion models under the manifold hypothesis},
	issn = {2835-8856},
	language = {en},
	urldate = {2025-01-14},
	journal = {Transactions on Machine Learning Research},
	author = {Bortoli, Valentin De},
	month = aug,
	year = {2022},
}

@article{biroli_dynamical_2024,
	title = {Dynamical regimes of diffusion models},
	volume = {15},
	copyright = {2024 The Author(s)},
	issn = {2041-1723},
	doi = {10.1038/s41467-024-54281-3},
	language = {en},
	number = {1},
	urldate = {2025-01-08},
	journal = {Nature Communications},
	publisher = {Nature Publishing Group},
	author = {Biroli, Giulio and Bonnaire, Tony and de Bortoli, Valentin and Mézard, Marc},
	month = nov,
	year = {2024},
	pages = {9957},
}

@article{anderson_reverse-time_1982,
	title = {Reverse-time diffusion equation models},
	volume = {12},
	number = {3},
	urldate = {2025-01-10},
	journal = {Stochastic Processes and their Applications},
	author = {Anderson, Brian D. O.},
	month = may,
	year = {1982},
	pages = {313--326},
}

@inproceedings{chen_improved_2023,
	address = {Honolulu, Hawaii, USA},
	series = {{ICML}'23},
	title = {Improved analysis of score-based generative modeling: user-friendly bounds under minimal smoothness assumptions},
	volume = {202},
	shorttitle = {Improved analysis of score-based generative modeling},
	urldate = {2025-01-13},
	booktitle = {Proceedings of the 40th {International} {Conference} on {Machine} {Learning}},
	publisher = {JMLR.org},
	author = {Chen, Hongrui and Lee, Holden and Lu, Jianfeng},
	month = jul,
	year = {2023},
	pages = {4735--4763},
}

\clearpage
\appendix
\section{Learning the score function using RFNN}
Since the setting that we have is a least squared optimization problem, we can compute the optimizer analytically. However, in practice, stochastic gradient descent is used for optimization. First we derive analytical expressions for the optimal RFNN score.
\subsection{Optimal Score}\label{apndx_sec:optimal_rfnn_score}
We want to obtain the minimizer $\hat{A}_t$ of the loss function \eqref{eqn:dsm_rfnn} which we copy below. 
\begin{equation}
    \cL_t(A_t) =\frac{1}{dn}\sum_{i=1}^{n}{\shortexpect_{z}{\norm{\sqrt{h_t}\rat{A_t}{p}\act{\rat{W_t}{p}(a_tx_i+\sqrt{h_t}z)}+z}^2}}+\frac{h_t\lambda}{dp}\norm{A_t}_F^2.
\end{equation}
We have
\begin{align*}
    \cL_t(A_t) &=\frac{1}{dn}\sum_{i=1}^{n}{\shortexpect_{z}{\norm{\sqrt{h_t}\rat{A_t}{p}\act{\rat{W_t}{p}(a_tx_i+\sqrt{h_t}z)}+z}^2}}+\frac{h_t\lambda}{dp}\norm{A_t}_F^2,\\
    &=\frac{h_t}{d}\tr{\rat{A_t}{p}^T\rat{A_t}{p} U}+\frac{2\sqrt{h_t}}{d}\tr{\rat{A_t}{p} V}+1+\frac{h_t\lambda}{d}\tr{\rat{A_t^T}{p}\rat{A_t}{p}},\\
\end{align*}
where 
$$U = \frac{1}{n}\sum_{i=1}^n\bE{z}{\varrho(\rat{W_t}{d}(a_tx_i+\sqrt{h_t}z))\varrho(\rat{W_t}{d}(a_tx_i+\sqrt{h_t}z))^T},$$
and 
$$V = \frac{1}{n}\sum_{i=1}^n\bE{z}{\varrho(\rat{W_t}{d}(a_tx_i+\sqrt{h_t}z))z^T}.$$ Thus we get the optimal $A_t$ as
\begin{equation}
    \rat{\hat{A}_t}{p} = -\frac{1}{\sqrt{h_t}}V^T(U+\lambda I_p)^{-1}.
\end{equation}

\subsection{Exact Score for Linear Manifolds}\label{apndx_sec:exact_score_linear}
In the special case when data lie on a linear manifold, the expression for $\cE_{test}^\ast$ simplifies and admits a closed-form characterization.
\begin{lemma}
    When $\sigma(x)=x$, the exact score function $\nabla\log P_t$ is given by 
    \begin{equation*}
        s^\ast(x) = \nabla\log P_t(x) = -\left(a_t^2\frac{MM^T}{D}+h_tI\right)^{-1}x,
    \end{equation*}
    and the associated test error is given by
    \begin{equation}\label{eqn:test_train_linear_man}
        \lim_{d\to\infty}\bE{}{\cE_{\text{test}}^\ast} = 1-\frac{h_t}{a_t^2}s_{MP}\left(-\frac{h_t}{a_t^2},\frac{1}{\psi_D}\right),
    \end{equation}
where $s_{MP}$ is the Stieltjes transform of the Marchenko-Pastur distribution.
\end{lemma}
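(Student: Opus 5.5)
Since $\sigma(x)=x$, the data are exactly Gaussian. We have $x=\frac{M}{\sqrt D}\xi$ with $\xi\sim\cN{0,I_D}$, so conditionally on $M$ we get $P_0=\cN{0,\frac{MM^T}{D}}$. The plan is to first compute $P_t$ and its score in closed form, then reduce $\cE_{\text{test}}^\ast$ to a normalized resolvent trace via Lemma~\ref{lemma:norm_score}, and finally pass to the limit using the Marchenko--Pastur law.

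\textbf{Step 1: the score.} Because $y=a_tx+\sqrt{h_t}z$ with $x$ and $z$ independent Gaussians, $P_t=\cN{0,\Sigma_t}$ where $\Sigma_t=a_t^2\frac{MM^T}{D}+h_tI_d$. Note that $\Sigma_t\succeq h_tI_d$, so $\Sigma_t$ is invertible for $t>0$. Hence $\nabla\log P_t(y)=-\Sigma_t^{-1}y$, which is the claimed formula for $s^\ast$.

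\textbf{Step 2: reduction to a trace.} Plugging the score into Lemma~\ref{lemma:norm_score} gives
\begin{equation*}
\cE_{\text{test}}^\ast=1-\frac{h_t}{d}\shortexpect_{y\sim P_t}\norm{\Sigma_t^{-1}y}^2=1-\frac{h_t}{d}\tr{\Sigma_t^{-1}}.
\end{equation*}
Factoring out $a_t^2$,
\begin{equation*}
\frac1d\tr{\Sigma_t^{-1}}=\frac{1}{a_t^2}\,\frac1d\tr{\Big(\tfrac{MM^T}{D}-\zeta I\Big)^{-1}},\qquad \zeta=-\frac{h_t}{a_t^2}<0 .
\end{equation*}
Alternatively, one can verify this directly without the lemma: compute $\sqrt{h_t}s^\ast(y)+z$ and take Gaussian expectations. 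This gives the same trace, since $\shortexpect[yz^T]=\sqrt{h_t}I$.

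\textbf{Step 3: the limit.} The matrix $\frac{MM^T}{D}$ is $d\times d$ and built from $D$ i.i.d.\ standard Gaussian columns, with $d/D\to 1/\psi_D$. By the Marchenko--Pastur theorem, its empirical spectral distribution converges almost surely to the MP law with ratio $1/\psi_D$. This law has an atom of mass $1-\psi_D$ at zero when $D<d$, and that atom is included in the Stieltjes transform convention $s_{MP}(\zeta,c)=\int\frac{\mu_{MP}^{c}(d\lambda)}{\lambda-\zeta}$. Since $\zeta<0$, the function $\lambda\mapsto(\lambda-\zeta)^{-1}$ is bounded and continuous on $[0,\infty)$. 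Therefore $\frac1d\tr{(\frac{MM^T}{D}-\zeta I)^{-1}}\to s_{MP}(\zeta,1/\psi_D)$ almost surely.

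\textbf{Step 4: exchanging limit and expectation.} The normalized trace is bounded uniformly by $a_t^2/h_t$. So dominated convergence lets us pass the limit through $\shortexpect_M$, which yields $\lim\bE{}{\cE_{\text{test}}^\ast}=1-\frac{h_t}{a_t^2}s_{MP}\big(-\frac{h_t}{a_t^2},\frac{1}{\psi_D}\big)$.

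\textbf{Main point of care.} The only delicate issue is bookkeeping of conventions. The MP ratio must be $d/D=1/\psi_D$, as opposed to $\psi_D$, and the zero atom must be included. Both are settled by matching the normalization $\frac1d\tr$ with the $d\times d$ matrix $\frac{MM^T}{D}$. If desired, one can cross-check using the explicit MP Stieltjes transform together with the companion $D\times D$ matrix $\frac{M^TM}{D}$.
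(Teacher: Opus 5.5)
Your proposal is correct and follows essentially the same route as the paper, which expands the square directly (the alternative you mention in Step 2) to obtain $1-\frac{h_t}{a_t^2}\frac{1}{d}\mathrm{tr}\bigl(\frac{MM^T}{D}+\frac{h_t}{a_t^2}I\bigr)^{-1}$ and then simply lets $d\to\infty$. Your write-up also supplies what the paper leaves implicit: the Gaussian form of $P_t$ that yields the score formula, the Marchenko--Pastur ratio $1/\psi_D$ with its atom at zero, and the dominated-convergence step that moves the limit inside $\mathbb{E}_M$.
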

\begin{proof}
    We have
    \begin{align*}
        \cE_{\text{test}}^\ast &= \frac{1}{d}\shortexpect_{x,z}{\norm{\sqrt{h_t}s^\ast(a_tx+\sqrt{h_t}z)+z}^2},\\
        &=\frac{h_t}{d}\tr{\left(a_t^2\frac{MM^T}{D}+h_tI\right)^{-1}}-\frac{2h_t}{d}\tr{\left(a_t^2\frac{MM^T}{D}+h_tI\right)^{-1}}+1,\\
        &= 1-\frac{h_t}{a_t^2}\frac{1}{d}\tr{\left(\frac{MM^T}{D}+\frac{h_t}{a_t^2}I\right)^{-1}}.
    \end{align*}
The result follows by taking the limit $d\to\infty$.
\end{proof}

\section{Proof of Theorem~\ref{thm:test_train}}\label{appnxd_sec:proof_test_train}
We expect the test and train errors to concentrate around its expectations as $d$ grows. In this section, we derive their expected values. Let the activation function $\act{\cdot}$ and the manifold folding function $\man{\cdot}$ satisfy Assumption~\ref{assumption:act_man}.

\textbf{I. Test error:}
We expand the expression for test error as follows:
\begin{align*}
    \cE_{\text{test}}(\hat{A}_t) &= \frac{1}{d}\bE{x\sim P_t,z}{\norm{\sqrt{h_t}\rat{\hat{A}_t}{p}\act{\rat{W_t}{d}(a_tx+\sqrt{h_t}z)}+z}^2}\\
    &=1 - \frac{2}{d}\tr{V^T (U+\lambda I_p)^{-1}\underbrace{\bE{x,z}{\act{\rat{W_t}{d}(a_tx+\sqrt{h_t}z)}z^T}}_{:=\Tilde{V}}}\\
    &\qquad+\frac{1}{d}\tr{(U+\lambda I_p)^{-1}V V^T(U+\lambda I_p)^{-1}\underbrace{\bE{x,z}{\act{\rat{W_t}{d}(a_tx+\sqrt{h_t}z)}\act{\rat{W_t}{d}(a_tx+\sqrt{h_t}z)}^T}}_{:=\Tilde{U}}}\;.
\end{align*}
Since we focus on a single time instant, we drop the subscript $t$ in the above expressions. However, it is important to note that $a$ and $h$ depend on $t$, and we have the relation $a^2+h=1$.

We need to compute $V,U,\Tilde{V},\Tilde{U}$ in order to get an expression for $\cE_{\text{test}}$. Note that, in order to derive an asymptotically precise expression for $\cE_{\text{test}}$, it suffices to obtain $V,U,\Tilde{V},\Tilde{U}$ to leading order. In particular, we can neglect $O(1/d^{1-\epsilon})$ terms entry-wise in these matrices, for $0\le\epsilon<1/2$ and for the diagonal terms $\epsilon$ can be in $[0,1)$. This can be justifies as follows. Using Claim 1 (see Lemma~\ref{lemma:gep_sec_moment} in Appendix~\ref{apndx_sec:proof_lemma_gep}) the error matrix $\Delta$ has, with high probability, entries  $O(1/d^{1-\epsilon})$, where here high probability means a probability tending to $1$ faster than any inverse power. Consequently, by the union bound, the error matrix will have Frobenius $\norm{\Delta}_F = O(d^\epsilon)$ with probability tending to $1$ faster than any power. In the expressions for test and train errors, these error matrices typically appears as $\frac{1}{d}\tr{\Delta A}$, where $A$ is a matrix with operator norm $\bigO{1}$ w.r.t. $d$. Using standard inequalities, we can upper bound $\frac{1}{d}\tr{\Delta A}$ as
\begin{align*}
    \frac{1}{d}\tr{\Delta A} &\le\frac{1}{d}\norm{\Delta}_\ast\norm{A}_{\text{op}}\\
    &\le \frac{1}{d}\sqrt{\text{rank}(\Delta)}\norm{\Delta}_F\norm{A}_{\text{op}}\\
    &= \bigO{\frac{1}{\sqrt{d}}}\norm{\Delta}_F = O(d^{\epsilon-1/2}). 
\end{align*}
where the last equality holds with probability tending to $1$ faster than any inverse power. 
Thus, asymptotically these error terms vanish with high probability. 

\textbf{(i) We will first consider $\Tilde{V}$:} We have
\begin{equation*}
    \Tilde{V} = \bE{x,z}{\act{\rat{W}{d}(ax+\sqrt{h}z)}z^T}\;.
\end{equation*}
Let $P^\gamma$ denote the bivariate standard Gaussian distribution with correlation coefficient $\gamma$. Explicitly,
\begin{equation}\label{eqn:bivariate_gaussian_pdf}
    P^\gamma(x,y) = \frac{1}{2\pi\sqrt{1-\gamma^2}}e^{-\frac{x^2+y^2-2\gamma xy}{2(1-\gamma^2)}}\;.
\end{equation}
We also recall the Mehler kernel formula \cite{kibble_extension_1945}, which will be extensively used in the proof. Let $f,g:\R\to\R$ be a functions that are square integrable w.r.t. Gaussian measure. Let $\text{He}_k$ be the $k^{th}$ probablist's Hermite polynomial. Then, for $(u,v)\sim P^\gamma$, the Mehler kernel formula gives
\begin{equation}\label{eqn:Mehler_kernel}
    \bE{}{f(u)g(v)} = \sum_{k=0}^\infty \frac{\gamma^k}{k!}\bE{}{f(u)\text{He}_k(u)}\bE{}{g(v)\text{He}_k(v)}.
\end{equation}
Let $w_i$ denote the $i^{\text{th}}$ row of $W$. For large $d$, $\frac{\norm{w_i}^2}{d}$ concentrates to $1$. Let $x'\sim\cN{0,\cC}$, where $\cC = \nu_1^2\frac{MM^T}{d}+\sqrt{1-\nu^2}I$. Then:
\begin{align*}
        \Tilde{V}_{ij} &= \bE{x,z}{\act{\rat{w_i^T(ax+\sqrt{h}z)}{d}}z_j}\;,\\
    &\stackrel{(a)}{=}\rat{w_{ij}}{d}\bE{x,u}{\act{a\rat{w_i^Tx}{d}+\sqrt{h}u}u}\;,\\
    &\stackrel{(b)}{=}\rat{w_{ij}}{d}\bE{x',u}{\act{a\rat{w_i^Tx'}{d}+\sqrt{h}u}u} + O(1/d)\;,\\
    &\stackrel{(c)}{=}\sqrt{h}\mu_1\rat{w_{ij}}{d} +O(1/d)\;,
\end{align*}
where in $(a)$ we used the Mehler Kernel formula \eqref{eqn:Mehler_kernel}, in $(b)$ we used the CLT statement for $\rat{w_i}{d}x$, stated in the proof of Lemma~\ref{lemma:gep_sec_moment} (see Appendix~\ref{apndx_sec:proof_lemma_gep}), and in $(c)$ we used the fact that $\rat{w_i^T}{d}x'\sim\cN{0,1}$ asymptotically.
Hence, we have $\Tilde{V} = \sqrt{h}\mu_1\rat{W}{d}$. 

 \textbf{(ii) Now, we consider $\Tilde{U}$:} The matrix elements are
\begin{align*}
    \Tilde{U}_{ij} &= \bE{x,z}{\act{\rat{w_i^T(ax+\sqrt{h}z)}{d}}\act{\rat{w_j^T(ax+\sqrt{h}z)}{d}}},\\
    &\stackrel{(a)}{=} \bE{x',z}{\act{\rat{w_i^T(ax'+\sqrt{h}z)}{d}}\act{\rat{w_j^T(ax'+\sqrt{h}z)}{d}}}+O(1/d^{1-\epsilon}),\\
    &\stackrel{(a)}{=} \mu_1^2\rat{w_i^T}{d}(a^2\cC+h I_d)\rat{w_j}{d}+O(1/d^{1-\epsilon})\;.
\end{align*}
We used the result from Claim~\ref{lemma:gep_sec_moment} in $(a)$ and Mehler kernel formula in $(b)$. 
Therefore,
\begin{equation*}
    \Tilde{U}_{ij} = \begin{cases}
        \mu_1^2\rat{w_i^T}{d}(a^2\cC+h I_d)\rat{w_j}{d} + O(1/d^{1-\epsilon})\quad &\text{if } i\neq j \;,\\
        1\quad &\text{if } i=j \;,
    \end{cases}
\end{equation*}
which gives
\begin{equation*}
    \Tilde{U} = \mu_1^2 \rat{W}{d}(a^2\cC+h I_d)\rat{W^T}{d} + (1-\mu_1^2) I_p \;.
\end{equation*}
\textbf{(iii)
Now we will consider $V$:}
Let
\begin{equation*}
    V^l = \bE{z}{\act{\rat{W}{d}(ax_l+\sqrt{h}z)}z^T}\;.
\end{equation*}
We have again by using Mehler's kernel formula,
\begin{align*}
    V^l_{ij} &= \bE{z}{\act{\rat{w_i^T(ax_l+\sqrt{h}z)}{d}}z_j},\\
    &= \bE{(u,v)\sim P^{\rat{w_{ij}}{d}}}{\act{\rat{a w_i^Tx_l}{d}+\sqrt{h}u}v},\\
    &= \sum_{k=0}^{\infty} \frac{(\rat{w_{ij}}{d})^k}{k!}\bE{u}{\act{\rat{a w_i^Tx_l}{d}+\sqrt{h}u}\text{He}_k(u)}\bE{v}{v\text{He}_k(v)},\\
    &= \rat{w_{ij}}{d}\bE{u}{\act{\rat{a w_i^Tx_l}{d}+\sqrt{h}u}u} + O(1/d),\\
    &= \rat{w_{ij}}{d}\varrho_1\left(\rat{w_i^Tx_l}{d}\right) + O(1/d)\;,
\end{align*}
where $\varrho_1(y) = \bE{u}{\varrho(ay+\sqrt{h}u)u}$. Summing over the $n$ data samples:
\begin{align*}
    V_{ij} &= \frac{1}{n}\sum_{l=1}^{n} V^l_{ij},\\
    &= \rat{w_{ij}}{d} \frac{1}{n}\sum_{l=1}^{n} \varrho_1\left(\rat{ w_i^Tx_l}{d}\right),\\
    &= \rat{w_{ij}}{d} \bE{x}{\varrho_1\left(\rat{w_i^Tx}{d}\right)} + O(1/d),\\
    &= \rat{w_{ij}}{d} \bE{g}{\varrho_1(g)} + O(1/d),\\
    &= \rat{w_{ij}}{d} \bE{g,u}{\varrho(ag+\sqrt{h}u)u} + O(1/d),\\
    &= \sqrt{h}\mu_1\rat{w_{ij}}{d} + O(1/d)\;.
\end{align*}
Neglecting $O(1/d)$ terms, we have $V = \sqrt{h}\mu_1\rat{W}{d}$. 

\textbf{Now, let's consider $U$:} Let
\begin{align*}
    U^l &= \bE{z}{\act{\rat{W}{d}(ax_l+\sqrt{h}z)}\act{\rat{W}{d}(ax_l+\sqrt{h}z)}^T}\;.
\end{align*}
For $i\neq j $ we have,
\begin{align*}
    U^l_{ij} &= \bE{z}{\act{\rat{w_i^T(ax_l+\sqrt{h}z)}{d}}\act{\rat{w_j^T(ax_l+\sqrt{h}z)}{d}}}\\
    &= \bE{(u,v)\sim P^{\frac{w_i^Tw_j}{d}}}{\act{a\rat{w_i^Tx_l}{d} + \sqrt{h}u}\act{a\rat{w_j^Tx_l}{d} + \sqrt{h}v}}\\
    &= \sum_{k=0}^{\infty} \frac{(\frac{w_i^Tw_j}{d})^k}{k!}\bE{u}{\act{a\rat{w_i^Tx_l}{d} + \sqrt{h}u}\text{He}_k(u)}\bE{v}{\act{a\rat{w_j^Tx_l}{d} + \sqrt{h}v}\text{He}_k(v)}\\
    &= \varrho_0\left(\rat{w_i^Tx_l}{d}\right)\varrho_0\left(\rat{w_j^Tx_l}{d}\right)+\frac{w_i^Tw_j}{d}\varrho_1\left(\rat{w_i^Tx_l}{d}\right)\varrho_1\left(\rat{w_j^Tx_l}{d}\right)+O(1/d)\;,
\end{align*}
where $\varrho_0(y) = \bE{u}{\varrho(ay+\sqrt{h}u)}$ and $\varrho_1(y) = \bE{u}{\varrho(ay+\sqrt{h}u)u}$. Summing over the $n$ data samples:
\begin{align*}
    U_{ij} &= \frac{1}{n}\sum_{l=1}^{n} U^l_{ij}\\
    &= \frac{1}{n}\sum_{l=1}^{n}\varrho_0\left(\rat{w_i^Tx_l}{d}\right)\varrho_0\left(\rat{w_j^Tx_l}{d}\right)+\frac{w_i^Tw_j}{d}\frac{1}{n}\sum_{l=1}^{n}\varrho_1\left(\rat{w_i^Tx_l}{d}\right)\varrho_1\left(\rat{w_j^Tx_l}{d}\right) + O(1/d)\\
    &= \frac{1}{n}\sum_{l=1}^{n}\varrho_0\left(\rat{w_i^Tx_l}{d}\right)\varrho_0\left(\rat{w_j^Tx_l}{d}\right)+\frac{w_i^Tw_j}{d}\bE{x}{\varrho_1\left(\rat{w_i^Tx}{d}\right)\varrho_1\left(\rat{w_j^Tx}{d}\right)} + O(1/d)\\
    &\stackrel{(a)}{=} \frac{1}{n}\sum_{l=1}^{n}\varrho_0\left(\rat{w_i^Tx_l}{d}\right)\varrho_0\left(\rat{w_j^Tx_l}{d}\right)+\frac{w_i^Tw_j}{d}\bE{g}{\varrho_1(g)}^2 + O(1/d)\\
    &= \frac{1}{n}\sum_{l=1}^{n}\varrho_0\left(\rat{w_i^Tx_l}{d}\right)\varrho_0\left(\rat{w_j^Tx_l}{d}\right)+h\mu_1^2\frac{w_i^Tw_j}{d} + O(1/d)\;.\\
\end{align*}
In $(a)$ we the used Mehler kernel formula on $\bE{x}{\varrho_1\left(\rat{w_i^Tx}{d}\right)\varrho_1\left(\rat{w_j^Tx}{d}\right)}$ and neglected terms that will lead to $O(1/d)$ terms.
For $i=j$, we have:
\begin{align*}
    U^l_{ii} &= \bE{z}{\left(\act{\rat{w_i^T(ax_l+\sqrt{h}z)}{d}}\right)^2} \;,\\
\end{align*}
and
\begin{align*}
    U_{ii} &= \frac{1}{n}\sum_{l=1}^n\bE{z}{\left(\act{\rat{w_i^T(ax_l+\sqrt{h}z)}{d}}\right)^2}\\
    &= \bE{z,x}{\left(\act{\rat{w_i^T(ax_l+\sqrt{h}z)}{d}}\right)^2}+O(1/\sqrt{d})\\
    &= \norm{\varrho}^2+O(1/\sqrt{d}) \;.
\end{align*}
The $\bigO{1/\sqrt{d}}$ term in the above equation can be neglected, since there are only $\bigO{d}$ terms on the diagonal. Let $X=[x_1,x_2,\cdots,x_n]\in\R^{d\times n}$. We can write $U$ as:
\begin{equation*}
    U = \rat{\varrho_0\left(\rat{W}{d}X\right)}{n}\rat{\varrho_0\left(\rat{W}{d}X\right)^T}{n}+h\mu_1^2\rat{W}{d}\rat{W^T}{d}+s^2I_p \;,
\end{equation*}
where
\begin{align*}
s^2 &= \norm{\varrho}^2-\bE{g}{\varrho_0(g)^2}-h\mu_1^2\\
&= \norm{\varrho}^2-\bE{g}{\bE{u}{\varrho(ag+\sqrt{h}u)}^2}-h\mu_1^2\\
&= \norm{\varrho}^2-c(a^2)-h\mu_1^2 \;,\\
&= 1-c(a^2)-h\mu_1^2 \;,
\end{align*}
with $c(\gamma) = \bE{u,v\sim P^\gamma}{\varrho(u)\varrho(v)}$. Let $F:= \varrho_0\left(\rat{W}{d}X\right)$.
We have found:
\begin{equation*}
    U = \rat{F}{n}\rat{F^T}{n}+h\mu_1^2\rat{W}{d}\rat{W^T}{d}+s^2I_p \;,
\end{equation*}

\textbf{(v) Final derivations for test error:} We now have all the building blocks to compute the test error.
\begin{align*}
    \cE_{\text{test}}(\hat{A}_t) &= 1 - \frac{2h\mu_1^2}{d}\tr{\rat{W^T}{d} (U+\lambda I_p)^{-1}\rat{W}{d}}\\
    &\qquad+\frac{h\mu_1^2}{d}\tr{(U+\lambda I_p)^{-1}\rat{W}{d}\rat{W^T}{d}(U+\lambda I_p)^{-1}\left(\mu_1^2 \rat{W}{d}\underbrace{(a^2\cC+h I_d)}_{:=\Sigma}\rat{W^T}{d} + (1-\mu_1^2) I_p\right)}\;,\\
    &= 1-2h\mu_1^2E_1+h\mu_1^4E_2+h\mu_1^2(1-\mu_1^2)E_3,
\end{align*}
where 
\begin{align*}
    E_1 &= \frac{1}{d}\tr{\rat{W^T}{d} (U+\lambda I_p)^{-1}\rat{W}{d}}\\
    E_2 &= \frac{1}{d}\tr{\rat{W^T}{d}(U+\lambda I_p)^{-1}\rat{W}{d}\Sigma\rat{W^T}{d}(U+\lambda I_p)^{-1}\rat{W}{d}}\\
    E_3 &= \frac{1}{d}\tr{\rat{W^T}{d} (U+\lambda I_p)^{-2}\rat{W}{d}}.
\end{align*}
By taking expectation and large $d$ limit, we have
\begin{align*}
    \lim_{d\to\infty}\bE{}{\cE_{\text{test}}(\hat{A}_t)} &= 1-2h\mu_1^2e_1+h\mu_1^4e_2+h\mu_1^2(1-\mu_1^2)e_3,
\end{align*}
where $e_i = \lim_{d\to\infty} \bE{}{E_i}$ for $i=1,2,3$.
Now define the following {\it new} matrix
\begin{equation*}
    U(q) :=  \rat{F}{n}\rat{F^T}{n}+h\mu_1^2\rat{W}{d}\rat{W^T}{d}+q\rat{W}{d}\Sigma\rat{W^T}{d}+s^2I_p \;,
\end{equation*}
and the resolvent of $U(q)$ as
\begin{equation}\label{apndx_eqnl:resolvent}
    R(q,z) = (U(q)-zI_p)^{-1}\;.
\end{equation}
Let 
\begin{equation*}
    K(q,z) = \frac{1}{d}\tr{\rat{W^T}{d}R(q,z)\rat{W}{d}},\quad\text{and}\quad \cK(q,z) = \lim_{d\to\infty}\frac{1}{d}\tr{\rat{W^T}{d}R(q,z)\rat{W}{d}}\;.
\end{equation*}
Using the identities $\frac{\partial R}{\partial q} = -R(q,z)\frac{\dd U}{\dd q}R(q,z)$ and $\frac{\partial R}{\partial z} = R(q,z)^2$, we observe that 
\begin{align*}
    E_1 &= K(0,-\lambda)\;,\\
    E_2 &= -\frac{\partial K}{\partial q}(0,-\lambda)\;,\\
    E_3 &= \frac{\partial K}{\partial z}(0,-\lambda)\;.
\end{align*}
Since we want $\bE{W,M,\Xi}{\cE_{\text{test}}(\hat{A}_t)}$, it suffices to compute $\bE{M,W,\Xi}{K(q,z)}$, where $\Xi=[\xi_1,\xi_2,\cdots,\xi_n]$.
\begin{equation*}
    \bE{M,W,\Xi}{K(q,z)} =  \frac{1}{d}\tr{\bE{M,W}{\rat{W^T}{d}\bE{\Xi}{R(q,z)}\rat{W}{d}}}\;.
\end{equation*}
The exchange of expectation and derivative is justified by the bounded operator norm of $R(q,z)$ for a complex $z$ with positive imaginary part.

To compute $\bE{M,W,\Xi}{K(q,z)}$, we derive a set of self consistent equations satisfied by it. Lemma~\ref{lemma:det_equiv} is the main result that we use to derive the self-consistent equations. Lemma~\ref{lemma:det_equiv} follows from Theorem~2.18 in \cite{couillet_random_2022}.
\begin{lemma}\label{lemma:det_equiv}
    Let $\Phi=[\phi_1,\phi_2,\cdots,\phi_n]\in\R^{p\times n}$ be a random matrix with $\phi_i\stackrel{i.i.d.}{\sim}P_\phi$, and $\phi_i$'s are concentrating random vectors. Let $\Sigma_\phi = \bE{}{\phi\phi^T}$, and let $\Theta_1,\Theta_2$ be constant p.s.d. matrices with bounded operator norm. Then, for $\kappa>0$
    \begin{equation}
        \bE{}{\left(\frac{1}{n}\Theta_1\Phi\Phi^T+\Theta_2+\kappa I\right)^{-1}} = \left(\frac{1}{1+\zeta_n}\Theta_1\Sigma_\phi+\Theta_2+\kappa I\right)^{-1}+\Tilde{\Delta},
    \end{equation}
    where $\norm{\Tilde{\Delta}}_{op} = o(1)$ and $\zeta_n$ is a the solution to the self-consistent equation 
    \begin{equation}\label{eqn:self_consistent}
        \zeta_n = \frac{1}{n}\tr{\left(\frac{1}{1+\zeta_n}\Theta_1\Sigma_\phi+\Theta_2+\kappa I\right)^{-1}\Theta_1\Sigma_\phi}.
    \end{equation}
\end{lemma}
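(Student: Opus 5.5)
The plan is to adapt the leave-one-out argument behind Theorem~2.18 of \cite{couillet_random_2022}. Set $Q := \left(\frac{1}{n}\Theta_1\Phi\Phi^T+\Theta_2+\kappa I\right)^{-1}$ and $\bar Q := \left(\frac{1}{1+\zeta_n}\Theta_1\Sigma_\phi+\Theta_2+\kappa I\right)^{-1}$.

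\textbf{Step 1: reduce to a symmetric resolvent.} Since $\Theta_1\Phi\Phi^T$ is not symmetric, I first write $\frac{1}{n}\Theta_1\Phi\Phi^T = \frac{1}{n}\Theta_1^{1/2}\bigl(\Theta_1^{1/2}\Phi\bigr)\bigl(\Theta_1^{1/2}\Phi\bigr)^T\Theta_1^{-1/2}$. This holds when $\Theta_1$ is invertible; the general case follows by replacing $\Theta_1$ with $\Theta_1+\delta I$ and letting $\delta\downarrow 0$, using continuity of both sides. This similarity shows the spectrum of $\frac{1}{n}\Theta_1\Phi\Phi^T$ is real and nonnegative. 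Because $\Theta_2\succeq 0$ and $\kappa>0$, I can then bound $\norm{Q}_{op}$ and $\norm{\bar Q}_{op}$ uniformly in $n$. The same bound applies to the leave-one-out resolvents $Q_{-i}$, defined with $\phi_i$ removed.

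\textbf{Step 2: the main identity.} Apply the resolvent identity
\begin{equation*}
\bE{}{Q}-\bar Q = \bE{}{Q\left(\tfrac{1}{1+\zeta_n}\Theta_1\Sigma_\phi-\tfrac{1}{n}\Theta_1\Phi\Phi^T\right)\bar Q},
\end{equation*}
and expand $\frac{1}{n}\Theta_1\Phi\Phi^T=\frac1n\sum_i\Theta_1\phi_i\phi_i^T$. For each $i$, the Sherman--Morrison formula gives
\begin{equation*}
Q\Theta_1\phi_i = \frac{Q_{-i}\Theta_1\phi_i}{1+\frac{1}{n}\phi_i^TQ_{-i}\Theta_1\phi_i}.
\end{equation*}
Two facts then drive the argument. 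First, $Q_{-i}$ is independent of $\phi_i$, so $\bE{}{Q_{-i}\Theta_1\phi_i\phi_i^T}=\bE{}{Q_{-i}}\Theta_1\Sigma_\phi$. Second, by concentration of quadratic forms of concentrated random vectors (a Lipschitz/Hanson--Wright type bound), $\frac1n\phi_i^TQ_{-i}\Theta_1\phi_i$ concentrates around $\frac1n\tr{\bE{}{Q}\Theta_1\Sigma_\phi}$ with fluctuations $O(n^{-1/2})$. Substituting both facts and using $Q_{-i}-Q$ of rank one with bounded norm, the difference becomes
\begin{equation*}
\bE{}{Q}-\bar Q = \left(\tfrac{1}{1+\zeta_n}-\tfrac{1}{1+\delta_n}\right)\bE{}{Q}\Theta_1\Sigma_\phi\bar Q + o_{op}(1),
\end{equation*}
where $\delta_n=\frac1n\tr{\bE{}{Q}\Theta_1\Sigma_\phi}$. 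The next task is to show $\delta_n-\zeta_n\to0$. I take a normalized trace of the identity above against $\Theta_1\Sigma_\phi$. Since $\zeta_n$ solves \eqref{eqn:self_consistent}, this yields $\delta_n-\zeta_n = c_n(\delta_n-\zeta_n)+o(1)$, and I will show that $c_n<1$ uniformly in $n$ because $\kappa>0$. This closes the loop, so $\norm{\tilde\Delta}_{op}=o(1)$.

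\textbf{Step 3: existence and uniqueness of $\zeta_n$, and the main obstacle.} I show that $\zeta_n$ exists and is unique by checking that the map $\zeta\mapsto\frac1n\tr{(\frac{1}{1+\zeta}\Theta_1\Sigma_\phi+\Theta_2+\kappa I)^{-1}\Theta_1\Sigma_\phi}$ is positive, increasing, and satisfies the scalability property of a standard interference function (Yates' framework). Iterating from $\zeta=0$ then converges to the unique fixed point. I expect the main obstacle to be the concentration step of Step 2. It requires uniform control of $\frac1n\phi_i^TQ_{-i}\Theta_1\phi_i$ and of the denominators $1+\frac1n\phi_i^TQ_{-i}\Theta_1\phi_i$, which stay bounded away from zero only because of the positivity established in Step 1. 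I would try to obtain this from Lipschitz concentration of $\phi_i$, which holds here because $\varrho$ and $\sigma$ are Lipschitz and the underlying vectors are Gaussian, combined with the operator-norm bounds of Step 1, following the proof of Theorem~2.18 in \cite{couillet_random_2022} almost verbatim.
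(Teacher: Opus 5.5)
Your route is the paper's own: the paper simply invokes Theorem~2.18 of \cite{couillet_random_2022}, whose proof is the leave-one-out/Sherman--Morrison scheme you describe. The gap is in Step~1, exactly where the non-symmetric structure enters.

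That $\frac1n\Theta_1\Phi\Phi^T$ is similar to a p.s.d.\ matrix says nothing about $\frac1n\Theta_1\Phi\Phi^T+\Theta_2+\kappa I$. This matrix is not normal, spectra do not add, and when $\Theta_1,\Theta_2$ do not commute the sum can be singular. Take $\Theta_1=e_1e_1^T$, $\Theta_2=\frac23\mathbf{1}\mathbf{1}^T$, $\kappa=\frac23$ and $\frac1n\Phi\Phi^T=\phi\phi^T$ with $\phi=(1,4)^T$. Then
\[
\Theta_1\phi\phi^T+\Theta_2+\kappa I=\begin{pmatrix}7/3 & 14/3\\ 2/3 & 4/3\end{pmatrix},
\]
whose determinant is $0$. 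Equivalently, your Sherman--Morrison denominator $1+\phi^T(\Theta_2+\kappa I)^{-1}\Theta_1\phi$ equals $1-1=0$.

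So Step~1 gives you neither the bound on $\|Q\|_{\mathrm{op}}$ nor the positivity you invoke in Step~2 to keep $1+\frac1n\phi_i^TQ_{-i}\Theta_1\phi_i$ away from zero. What Step~2 actually needs is that $Q_{-i}\Theta_1$ be symmetric p.s.d., and that is false in general.

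This is not an artifact of $p=2$. Take $\Theta_1=I_k\otimes e_1e_1^T$, $\Theta_2=I_k\otimes\frac23\mathbf{1}\mathbf{1}^T$ and Gaussian $\phi_i=\sqrt{c}\,g_i\otimes(1,4)^T$. The matrix is then orthogonally similar to a block-diagonal one whose $j$-th block has determinant $\frac43(1-c\lambda_j)$, where $\lambda_j$ are the Marchenko--Pastur eigenvalues of $\frac1n\sum_i g_ig_i^T$. Hence $\|Q\|_{\mathrm{op}}\to\infty$ in probability when $1/c$ lies inside the bulk, and the lemma as literally stated needs an extra hypothesis.

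The repair costs nothing for the paper. Each of its three applications has either $\Theta_1=I$ (the $\Sigma_f$ and $MM^T/D$ steps) or $\Theta_2=0$ (the $\Theta W^TW$ step).

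For $\Theta_2=0$, put $S=\frac1n\Phi\Phi^T$ and $\tilde R=(\Theta_1^{1/2}S\Theta_1^{1/2}+\kappa I)^{-1}$. Then $Q\Theta_1=\Theta_1^{1/2}\tilde R\,\Theta_1^{1/2}\succeq0$, and likewise with $\phi_i$ removed, so your denominators are at least $1$. Also $\kappa Q=I-\Theta_1^{1/2}\tilde R\,\Theta_1^{1/2}S$ gives $\|Q\|_{\mathrm{op}}\le\kappa^{-1}\bigl(1+\|\Theta_1\|_{\mathrm{op}}\|S\|_{\mathrm{op}}/\kappa\bigr)$. This is $O(1)$ with high probability (not deterministically) and has all moments for concentrated $\phi_i$.

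If instead $\Theta_1$ commutes with $\Theta_2$ (in particular if $\Theta_1=I$), write $Q=(\Theta_2+\kappa I)^{-1/2}(\Theta_1S'+I)^{-1}(\Theta_2+\kappa I)^{-1/2}$ with $S'=(\Theta_2+\kappa I)^{-1/2}S(\Theta_2+\kappa I)^{-1/2}$. This reduces to the previous case.

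Under either hypothesis the rest of your plan goes through as in the symmetric case: the concentration of the quadratic forms, the contraction $c_n<1$, and the fixed-point argument for $\zeta_n$.
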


First we use Lemma~\ref{lemma:det_equiv} to compute $\bE{\Xi}{R(q,z)}$. Let $f_i = \varrho_0(\rat{W}{d}x_i)$, and $F=[f_1,f_2,\cdots,f_n]$. The Lipschitz assumption on $\varrho$ and $\sigma$ allows us to apply  Lemma~\ref{lemma:det_equiv} here. We get
\begin{align}\label{eq:det_equiv1}
    \bE{\Xi}{R(q,z)} &=  \bE{\Xi}{\left(\rat{F}{n}\rat{F^T}{n}+h\mu_1^2\rat{W}{d}\rat{W^T}{d}+q\rat{W}{d}\Sigma\rat{W^T}{d}+(s^2-z)I_p\right)^{-1}}\;,\nonumber\\
    &= \left(\frac{1}{1+\frac{1}{n}\tr{\bE{}{R}\Sigma_f}}\Sigma_f+h\mu_1^2\rat{W}{d}\rat{W^T}{d}+q\rat{W}{d}\Sigma\rat{W^T}{d}+(s^2-z)I_p\right)^{-1}\;,\\
\end{align}
where using \eqref{eqn:gep_sec_moment} and Mehler's kernel formula, we have
\begin{align*}
    \Sigma_f = \bE{x}{\varrho_0\left(\rat{W}{d}x\right)\varrho_0\left(\rat{W}{d}x\right)^T} &= a^2\mu_1^2\rat{W}{d}\cC\rat{W^T}{d}+(c(a^2)-a^2\mu_1^2)I_p + O(1/d^{1-\epsilon}). 
\end{align*}
Let 
\begin{align*}
    \zeta_1 &= \frac{1}{n}\tr{\bE{}{R}\rat{W}{d}\cC\rat{W^T}{d}},\\
    \zeta_2 &= \frac{1}{n}\tr{\bE{}{R}}.
\end{align*}
Substituting the expression for $\Sigma_f$ and $\Sigma = a^2\cC+hI_d$ in \eqref{eq:det_equiv1}, we get
\begin{equation}\label{eqn:expect_Chi_R}
    \bE{\Xi}{R(q,z)} =  {\left(\rat{W}{d}\underbrace{(\kappa_1\cC+\kappa_2I)}_{:=\Theta}\rat{W^T}{d}+\kappa_3I_p\right)^{-1}}\;,
\end{equation}
where
\begin{align*}
    \kappa_1 &= \frac{a^2\mu_1^2}{\underbrace{1+a^2\mu_1^2\zeta_1+\underbrace{(c(a^2)-a^2\mu_1^2)}_{:=v^2}\zeta_2}_{:=\chi(\zeta_1,\zeta_2)}}+qa^2 = a^2\left(\frac{\mu_1^2}{\chi(\zeta_1,\zeta_2)}+q\right)\\ 
    \kappa_2 &= h(\mu_1^2+q),\\
    \kappa_3 &= s^2-z+\frac{v^2}{\chi(\zeta_1,\zeta_2)}.
\end{align*}
Towards computing $K(q,z)$, next we evaluate $\bE{W}{\rat{W^T}{d}\bE{\Xi}{R}\rat{W}{d}}$.
\begin{align*}
    \bE{W}{\rat{W^T}{d}\bE{\Xi}{R}\rat{W}{d}} &= \bE{W}{\rat{W^T}{d}\rat{W}{d}\left(\Theta\rat{W^T}{d}\rat{W}{d}+\kappa_3 I\right)^{-1}},\\
    &= \Theta^{-1}\bE{W}{\Theta\rat{W^T}{d}\rat{W}{d}\left(\Theta\rat{W^T}{d}\rat{W}{d}+\kappa_3 I\right)^{-1}},\\
    &= \Theta^{-1}-\kappa_3\Theta^{-1}\bE{W}{\left(\psi_p\frac{1}{p}\Theta W^TW+\kappa_3 I\right)^{-1}}.
\end{align*}
Once again using Lemma~\ref{lemma:det_equiv}, we get
\begin{equation*}
    \bE{W}{\left(\psi_p\frac{1}{p}\Theta W^TW+\kappa_3 I\right)^{-1}} = \left(\frac{\psi_p}{1+\zeta_3}\Theta+\kappa_3 I\right)^{-1},
\end{equation*}
where
\begin{equation}\label{eqn:zeta3_def}
    \zeta_3 = \frac{1}{d}\tr{\left(\frac{\psi_p}{1+\zeta_3}\Theta+\kappa_3\right)^{-1}\Theta}.
\end{equation}
Hence,
\begin{align}
    \bE{W}{\rat{W^T}{d}\bE{\Xi}{R}\rat{W}{d}} &=  \Theta^{-1}\left\{I-\kappa_3\left(\frac{\psi_p}{1+\zeta_3}\Theta+\kappa_3 I\right)^{-1}\right\}\nonumber\\
    &= \frac{\psi_p}{1+\zeta_3}\left(\frac{\psi_p}{1+\zeta_3}\Theta+\kappa_3 I\right)^{-1}\label{eqn:WRW_in_zeta3}\\
    &= \frac{\psi_p}{1+\zeta_3}\left(\frac{\psi_p}{1+\zeta_3}(\kappa_1\cC+\kappa_2I)+\kappa_3 I\right)^{-1}\nonumber\\
    &= \frac{\psi_p}{1+\zeta_3}\left(\frac{\psi_p}{1+\zeta_3}\left(\kappa_1\left(\nu_1^2 \frac{MM^T}{D}+(1-\nu_1^2)I_d\right)+\kappa_2I\right)+\kappa_3 I\right)^{-1}\nonumber\\
    &= \kappa_4\left(\frac{MM^T}{D}+\kappa_5 I\right)^{-1},\label{eqn:det_eqiv_WRW}
\end{align}
where
\begin{align*}
    \kappa_4 &= \frac{1}{\kappa_1\nu_1^2},\\
    \kappa_5 &= \frac{\kappa_1(1-\nu_1^2)+\kappa_2+\kappa_3\frac{1+\zeta_3}{\psi_p}}{\kappa_1\nu_1^2}.
\end{align*}
By again applying Lemma~\ref{lemma:det_equiv},
\begin{align}
    \bE{M,W}{\rat{W^T}{d}\bE{\Xi}{R}\rat{W}{d}} &= \kappa_4\bE{M}{\left(\frac{MM^T}{D}+\kappa_5 I\right)^{-1}},\nonumber\\
    &= \kappa_4 \left(\frac{1}{1+\zeta_4}+\kappa_5\right)^{-1}I_d,\label{eqn:expect_M_WRW}
\end{align}
where
\begin{equation}\label{eqn:zeta4_def}
    \zeta_4 = \frac{1}{D}\tr{\bE{M}{\left(\frac{MM^T}{D}+\kappa_5 I\right)^{-1}}}.
\end{equation}
Therefore,
\begin{equation}\label{eqn:K_in_zeta}
    K(q,z) = \frac{1}{d}\tr{\rat{W^T}{d}R(q,z)\rat{W}{d}}=\kappa_4\psi_D\zeta_4.
\end{equation}
Now we can close the equations and obtain self-consistent equations for $\zeta_1,\zeta_2,\zeta_3,\zeta_4$. We have
\begin{align*}
    \zeta_1 &= \frac{1}{n}\tr{\bE{}{R}\rat{W}{d}\cC\rat{W^T}{d}},\\
    &= \frac{1}{n}\tr{\rat{W^T}{d}R\rat{W}{d}\cC},\\
    &\stackrel{(a)}{=} \frac{1}{n}\tr{\kappa_4\left(\frac{MM^T}{D}+\kappa_5 I\right)^{-1}\left(\nu_1^2 \frac{MM^T}{D}+(1-\nu_1^2)I_d\right)},\\
    &= \frac{1}{n}\tr{\kappa_4\nu_1^2\left\{I-\kappa_5\left(\frac{MM^T}{D}+\kappa_5 I\right)^{-1}\right\}+\kappa_4(1-\nu_1^2)\left(\frac{MM^T}{D}+\kappa_5 I\right)^{-1}},\\
    &\stackrel{(b)}{=} \frac{\kappa_4\nu_1^2}{\psi_n}-\frac{\psi_D}{\psi_n}\kappa_4\kappa_5\nu_1^2\zeta_4+\frac{\psi_D}{\psi_n}\kappa_4(1-\nu_1^2)\zeta_4,
\end{align*}
where in $(a)$ we used \eqref{eqn:det_eqiv_WRW}, and in $(b)$ we used \eqref{eqn:zeta4_def}. Substituting for $\kappa_4,\kappa_5$ gives
\begin{equation}
    \psi_n a^4\left(\frac{\mu^2}{\chi} + q\right)^2 \zeta_1+\frac{\psi_D h(\mu^2+q)\zeta_4}{\nu_1^2}+\frac{\psi_D\left(s^2 + \frac{v^2}{\chi} - z\right)\zeta_4 (1+\zeta_3)}{\psi_p\nu_1^2}-a^2\left(\frac{\mu^2}{\chi} + q\right) = 0.
\end{equation}
Next, we consider the definition of $\zeta_3$ given in \eqref{eqn:zeta3_def}:
\begin{align}
    \zeta_3 &= \frac{1}{d}\tr{\left(\frac{\psi_p}{1+\zeta_3}\Theta+\kappa_3\right)^{-1}\Theta},\nonumber\\
     &= \frac{1+\zeta_3}{\psi_p}\frac{1}{d}\tr{I-\kappa_3\left(\frac{\psi_p}{1+\zeta_3}\Theta+\kappa_3\right)^{-1}},\nonumber\\
    &= \frac{1+\zeta_3}{\psi_p}-\left(\frac{1+\zeta_3}{\psi_p}\right)^2\psi_D\kappa_3\kappa_4\zeta_4,
\end{align}
where in the last equation we used \eqref{eqn:WRW_in_zeta3} and \eqref{eqn:K_in_zeta}. Thus, 
\begin{equation}
    \frac{1+\zeta_3}{\psi_p}-
\frac{(s^2 + \frac{v^2}{\chi} - z)}{\nu_1^2\, a^2 \left(\frac{\mu^2}{\chi} + q\right)}\frac{(1+\zeta_3)^2 \, \psi_D \, \zeta_4}{\psi_p^{\,2}}
\;-\;
\zeta_3
= 0.
\end{equation}
From \eqref{eqn:expect_Chi_R}, we also have
\begin{align*}
    \kappa_1\bE{}{R\rat{W}{d}\cC\rat{W^T}{d}}+\kappa_2\bE{}{R\rat{W}{d}\rat{W^T}{d}}+\kappa_3\bE{}{R} &= I_p.
\end{align*}
Taking $\frac{1}{p}\tr{\cdot}$ gives
\begin{align*}
    \kappa_1\frac{\psi_n}{\psi_p}\zeta_1+\kappa_2\kappa_4\frac{\psi_D}{\psi_p}\zeta_4+\kappa_3\frac{\psi_n}{\psi_p}\zeta_2-1 &= 0.
\end{align*}
Thus we get the next equation in the set of self-consistent equation:
\begin{equation}
    a^2\left(\frac{\mu^2}{\chi} + q\right)\psi_n \zeta_1+\frac{(h\mu^2+q)\,\psi_D \zeta_4}{\nu_1^2 a^2\left(\frac{\mu^2}{\chi}+q\right)}+\left(s^2 + \frac{v^2}{\chi} - z\right)\psi_n \zeta_2-\psi_p = 0.
\end{equation}
Lastly, using \eqref{eqn:expect_M_WRW} and \eqref{eqn:zeta4_def} we have
\begin{equation*}
    \zeta_4 = \frac{1}{\psi_D\left(\frac{1}{1+\zeta_4}+\kappa_5\right)},
\end{equation*}
giving 
\begin{equation}
    \frac{\psi_D \zeta_4}{1 + \zeta_4} + \psi_D \zeta_4 \left( \frac{1 - \nu_1^2}{\nu_1^2} + \frac{h (\mu^2 + q)}{a^2 \left( \frac{\mu^2}{\chi} + q \right) \nu_1^2} + \frac{\left( s^2 + \frac{v^2}{\chi} - z \right)}{a^2 \left( \frac{\mu^2}{\chi} + q \right)} \frac{(1 + \zeta_3)}{\psi_p \nu_1^2} \right) - 1 = 0.
\end{equation}

Thus, the set of self consistent equations are given by
\begin{align*}
    \psi_n a^4\left(\frac{\mu^2}{\chi} + q\right)^2 \zeta_1+\frac{\psi_D h(\mu^2+q)\zeta_4}{\nu_1^2}+\frac{\psi_D\left(s^2 + \frac{v^2}{\chi} - z\right)\zeta_4 (1+\zeta_3)}{\psi_p\nu_1^2}-a^2\left(\frac{\mu^2}{\chi} + q\right) &= 0\\
    \frac{1+\zeta_3}{\psi_p}-
\frac{(s^2 + \frac{v^2}{\chi} - z)}{\nu_1^2\, a^2 \left(\frac{\mu^2}{\chi} + q\right)}\frac{(1+\zeta_3)^2 \, \psi_D \, \zeta_4}{\psi_p^{\,2}}-\zeta_3 &= 0,\\
a^2\left(\frac{\mu^2}{\chi} + q\right)\psi_n \zeta_1+\frac{(h\mu^2+q)\,\psi_D \zeta_4}{\nu_1^2 a^2\left(\frac{\mu^2}{\chi}+q\right)}+\left(s^2 + \frac{v^2}{\chi} - z\right)\psi_n \zeta_2-\psi_p &= 0\\
\frac{\psi_D \zeta_4}{1 + \zeta_4} + \psi_D \zeta_4 \left( \frac{1 - \nu_1^2}{\nu_1^2} + \frac{h (\mu^2 + q)}{a^2 \left( \frac{\mu^2}{\chi} + q \right) \nu_1^2} + \frac{\left( s^2 + \frac{v^2}{\chi} - z \right)}{a^2 \left( \frac{\mu^2}{\chi} + q \right)} \frac{(1 + \zeta_3)}{\psi_p \nu_1^2} \right) - 1 &= 0.
\end{align*}
Note that these equations are exact only as $d\to\infty$.\\
\textbf{II. Train error:} We now show how the train error can be computed by leveraging on the previous results.
\begin{align*}
    \cE_{\text{train}}(\hat{A}_t) &= \cL(\hat{A})-\frac{h_t\lambda}{pd}\norm{\hat{A}}^2_F\\
    &=\frac{h}{d}\tr{\rat{\hat{A}_t}{p}^T\rat{\hat{A}_t}{p} (U+\lambda I_p)}+\frac{2\sqrt{h}}{d}\tr{\rat{\hat{A}_t}{p} V}+1-\frac{h\lambda}{d}\tr{\rat{\hat{A}_t}{p}^T\rat{\hat{A}_t}{p}}\\
    &=\frac{1}{d}\tr{(U+\lambda I_p)^{-1}VV^T}-\frac{2}{d}\tr{V^T(U+\lambda I_p)^{-1}V}+1-\frac{\lambda}{d}\tr{V^T(U+\lambda I_p)^{-2}V}\\
    &=-\frac{h\mu_1^2}{d}\tr{(U+\lambda I_p)^{-1}\rat{W}{d}\rat{W^T}{d}}+1-\frac{h\mu_1^2\lambda}{d}\tr{(U+\lambda I_p)^{-2}\rat{W}{d}\rat{W^T}{d}}\\
    &=-h\mu_1^2K(0,-\lambda)-h\lambda\mu_1^2\frac{\partial K}{\partial z}(0,-\lambda)+1 \;.\\
\end{align*}
Thus,
\begin{align}\label{eqn:Etrain_minf}
    \lim_{d\to\infty}\bE{}{\cE_{\text{train}}(\hat{A}_t)} &= -h\mu_1^2\cK(0,-\lambda)-\lambda h\mu_1^2\frac{\partial \cK}{\partial z}(0,-\lambda)+1 \;,\nonumber\\
    &= 1-h\mu_1^2e_1-h\lambda\mu_1^2e_3 \;,
\end{align}
where $e_1 = \cK(0,-\lambda),\; e_3 = \frac{\partial \cK}{\partial z}(0,-\lambda)$.

\clearpage 
\section{Other proofs}\label{appsec:proofs}
\subsection{Proof of Claim~\ref{lemma:gep_sec_moment} in a restricted setting}\label{apndx_sec:proof_lemma_gep}
\begin{lemma}
    Let $f:\R\to\R$ be any polynomial function such that $\bE{g\sim\cN{0,1}}{f(g)} = 0$. Let $\sigma$ be an activation function with a finite number of terms in its Hermite expansion. Let $\phi_i = \frac{w_i^T}{\sqrt{d}}\man{\rat{M}{D}\xi}$ and $\phi_i' = \frac{w_i^T}{\sqrt{d}}\left(\nu_1\rat{M}{D}\xi'+\sqrt{1-\nu_1^2}z\right)$ for $i=1,2$, where $w_1,w_2,z\stackrel{i.i.d.}{\sim}\cN{0,I_d}$ and $\xi,\xi'\stackrel{i.i.d.}{\sim}\cN{0,I_D}$ are independent rvs. Then, one can find $\epsilon_0>0$ small enough, such that for any $0<\epsilon_0\leq \epsilon<1/2$,
    \begin{align}
        |\bE{\xi}{f(\phi_1)f(\phi_2)}-\bE{\xi',z}{f(\phi_1')f(\phi_2')}| &= O(1/d^{1-\epsilon}), \quad \text{w.h.p.}
    \end{align}
    Here, w.h.p means with probability at least $1-O\biggl(\exp(- c_{f, \sigma} \, d^{a_f \, \epsilon})\biggr)$ for some constants $c_{f, \sigma} >0$, $a_f>0$ depending only on $f$ and $\sigma$ w.r.t. $w_1, w_2$.
\end{lemma}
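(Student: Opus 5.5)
The plan is to condition on $w_1,w_2$ (and on $M$) and compare the two conditional laws of $(\phi_1,\phi_2)$ and $(\phi_1',\phi_2')$ through their joint cumulants. Since $f$ is a polynomial, $\bE{\xi}{f(\phi_1)f(\phi_2)}$ is a finite linear combination of mixed moments $\bE{\xi}{\phi_1^{k}\phi_2^{l}}$, and hence a polynomial in the joint cumulants of $(\phi_1,\phi_2)$ of bounded order. The pair $(\phi_1',\phi_2')$ is conditionally Gaussian and centered, so only its covariance is nonzero. It is cleanest to run the Gaussian interpolation of the proof outline, $S(t)=\bE{}{f(\phi_1^t)f(\phi_2^t)}$ with $\phi_i^t=\sqrt t\phi_i+\sqrt{1-t}\phi_i'$. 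By the cumulant (Edgeworth-type) expansion of $S'(t)$, the derivative is a finite sum of two kinds of terms:
\begin{itemize}
\item the difference of covariances $\bE{\xi}{\phi_i\phi_j}-\bE{\xi',z}{\phi_i'\phi_j'}$, multiplied by Gaussian expectations of second derivatives of $f\otimes f$;
\item joint cumulants $\kappa_{k}$ of order $3\le k\le K_{f,\sigma}$ of $(\phi_1,\phi_2)$, multiplied by bounded Gaussian expectations of derivatives of $f\otimes f$.
\end{itemize}
Everything therefore reduces to showing that each of these quantities is $O(d^{-1+\epsilon})$ with the stated probability over $w_1,w_2$ (and $M$).

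\textbf{Step 1 (covariance).} Write $m_a$ for the rows of $M/\sqrt D$ and $\rho_{ab}=m_a^Tm_b$. By Mehler's formula \eqref{eqn:Mehler_kernel}, the finite Hermite expansion of $\sigma$ (with coefficients $c_k$) gives
\[
\bE{\xi}{\sigma(m_a^T\xi)\sigma(m_b^T\xi)}=\nu_1^2\rho_{ab}+\sum_{k\ge2}\frac{c_k^2}{k!}\rho_{ab}^k ,
\]
while the Gaussian surrogate gives $\nu_1^2\rho_{ab}+(1-\nu_1^2)\delta_{ab}$. The covariance difference is therefore $w_i^T\Delta w_j/d$, where:
\begin{itemize}
\item on the diagonal, $\Delta_{aa}$ is a smooth function of $\norm{m_a}^2-1=O(D^{-1/2+\epsilon})$ that vanishes at $\norm{m_a}^2=1$;
\item off the diagonal, $\Delta_{ab}=\sum_{k\ge2}\frac{c_k^2}{k!}\rho_{ab}^k$ with $\rho_{ab}=O(D^{-1/2+\epsilon})$ w.h.p.
\end{itemize}
I would split $\Delta$ into its mean part, which is of order $D^{-1}$ times an all-ones-like matrix, and a fluctuating part. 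For the fluctuating part I would use Hanson--Wright for the Gaussian quadratic form in $w$: it gives $w_i^T\Delta w_j/d-\delta_{ij}\operatorname{tr}\Delta/d=O(d^{\epsilon}\norm{\Delta}_F/d)=O(d^{-1+\epsilon})$, with tails $\exp(-c\,d^{\epsilon})$. For the mean part, $(\mathbf 1^Tw_i)(\mathbf 1^Tw_j)/(dD)=O(d^{-1+\epsilon})$ by Gaussian tails. The term $\operatorname{tr}\Delta/d$ is controlled by the concentration of $\norm{m_a}^2$.

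\textbf{Step 2 (higher cumulants).} Conditionally on $M$, the cumulant $\kappa_k$ of $(\phi_1,\phi_2)$ is a homogeneous polynomial of degree $k$ in the entries of $(w_1,w_2)$, divided by $d^{k/2}$. Its coefficients are joint cumulants $\kappa(\sigma(m_{a_1}^T\xi),\dots,\sigma(m_{a_k}^T\xi))$. Via the diagram formula for Hermite polynomials, these are sums over connected graphs on $\{a_1,\dots,a_k\}$ with edge weights $\rho_{a_ra_s}$.

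The key structural point concerns odd $k$, and in particular $k=3$, where the naive size $d^{-1/2}$ is not small enough. There the polynomial is odd in $w$, so its mean over $w$ vanishes. One must then bound its second moment over $w$ (averaging over $w$ pairs up the indices) and show it is $O(d^{-2})$. This follows by counting: connectivity forces at least $k-1$ factors of $\rho=O(D^{-1/2})$ on index sets carrying at most one free summation per extra vertex. Hypercontractivity for polynomials of fixed degree in Gaussians then upgrades this second-moment bound to a tail bound of the form $\exp(-c\,d^{a_f\epsilon})$. For even $k\ge4$ the analogous counting directly gives a deterministic-plus-fluctuation bound of order $d^{-(k-2)/2+\epsilon}\le d^{-1+\epsilon}$. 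A union bound over the finitely many $(k,\text{index pattern})$ pairs completes the argument.

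\textbf{Main obstacle.} I expect the hardest step to be the combinatorial bookkeeping in Step 2: proving, uniformly over all connected diagrams, that the $w$-averaged square of $\kappa_k$ is $O(d^{-2})$, and in particular handling diagrams with repeated indices, where $\norm{m_a}^2$ replaces $\rho$. I would first carry out the case $k=3$ with $\sigma=\nu_1\mathrm{He}_1+c_2\mathrm{He}_2/\sqrt2$ explicitly, to identify which index coincidences are dangerous, and then organise the general case by the number of distinct indices versus edges, as in moment-method proofs.
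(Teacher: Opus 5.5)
Your architecture matches the paper's: the interpolation $S(t)$, a cumulant expansion of $S'(t)$, the Hermite diagram formula for the coefficient tensors, and $w$-mean/second-moment bounds upgraded by hypercontractivity. Using Hanson--Wright for the order-two terms is a fine substitute for the paper's Chebyshev/CLT arguments, and it actually delivers the exponential tails that Chebyshev does not.

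There is, however, a genuine gap in your reduction. The expansion of $S'(t)$ also contains the first-order cumulants $\kappa_{(1,0)}=\mathbb{E}_\xi[\phi_1]$ and $\kappa_{(0,1)}=\mathbb{E}_\xi[\phi_2]$: there is a term proportional to $t^{-1/2}\kappa_{(1,0)}\,\mathbb{E}[f'(\phi_1^t)f(\phi_2^t)]$ and its mirror image. You dropped these because $\phi'$ is centered, but $\phi$ is not. Indeed $\kappa_{(1,0)}=\frac{1}{\sqrt d}\sum_a w_1^a\,\mathbb{E}_\xi[\sigma(m_a^T\xi)]$. Since $m_a^T\xi\sim\mathcal{N}(0,\|m_a\|^2)$ with $\|m_a\|^2-1$ of order $D^{-1/2}$, each $\mathbb{E}_\xi[\sigma(m_a^T\xi)]$ is of order $D^{-1/2}$ as soon as $\sigma$ has an even Hermite component (e.g.\ the $\mathrm{He}_2$ part of the paper's nonlinear example). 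Conditionally on $M$, $\kappa_{(1,0)}$ is then Gaussian in $w_1$ with variance of order $1/D$, so it has size $d^{-1/2}$, not $d^{-1+\epsilon}$. Your claim that ``everything reduces to showing each of these quantities is $O(d^{-1+\epsilon})$'' therefore cannot be carried out.

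A telling symptom is that you never use the hypothesis $\mathbb{E}f(g)=0$, yet it is necessary. For $f(x)=x+1$ the difference equals $\kappa_{(1,0)}+\kappa_{(0,1)}+\kappa_{(1,0)}\kappa_{(0,1)}+\kappa_{(1,1)}-\kappa'_{(1,1)}$, which is of order $d^{-1/2}$. Your argument would instead give $O(d^{-1+\epsilon})$, since only covariance terms survive when $f''=0$.

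The missing idea is to make the \emph{coefficient} small rather than the cumulant; this is exactly where the paper uses the zero mean of $f$. First, $\kappa_{(1,0)}=O(d^{-1/2+\epsilon})$ w.h.p. Second, $\mathbb{E}[f'(\phi_1^t)f(\phi_2^t)]=O(d^{-1/2+\epsilon})$. The reason is that, by the CLT, $(\phi_1^t,\phi_2^t)$ is $O(d^{-1/2})$-close to a bivariate Gaussian with variances $1+O(d^{-1/2+\epsilon})$ and correlation $O(d^{-1/2+\epsilon})$. In the Mehler expansion the $k=0$ term $\mathbb{E}[f']\,\mathbb{E}[f]$ then vanishes. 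The product is $O(d^{-1+2\epsilon})$, and the $t^{-1/2}$ weight is integrable.

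Two smaller points.
\begin{itemize}
\item Your bound on $\operatorname{tr}\Delta/d$ needs the averaged estimate $\frac1d\sum_a(\|m_a\|^2-1)=O((dD)^{-1/2})$. The individual bound $D^{-1/2+\epsilon}$ only gives $d^{-1/2+\epsilon}$.
\item For $k=4$, termwise counting does not give $d^{-1}$ for the $w$-mean. The mean of $\kappa_{(2,2)}$ contains terms proportional to $\frac{1}{d^2}\sum_{a\neq b}\rho_{ab}$, which arise from connected diagrams with a single cross edge when $\sigma$ has both $\mathrm{He}_1$ and $\mathrm{He}_2$ parts. This is only $O(D^{-1/2})$ unless you use the cancellation $\sum_{a\neq b}\rho_{ab}=\|\sum_a m_a\|^2-\sum_a\|m_a\|^2$, as the paper does explicitly.
\end{itemize}
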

\begin{proof}
    Prior works \citet{goldt_modeling_2020, hu_universality_2023} have proved CLT for $(\phi_1,\phi_2)$. In particular, it follows that for any function $g$,
    \begin{equation*}
        |\bE{\xi}{g(\phi_1,\phi_2)}-\bE{\xi',z}{g(\phi'_1,\phi'_2)}| = O(1/\sqrt{d}), \quad \text{w.h.p.}
    \end{equation*}
We will use an interpolation technique to directly achieve the result stated in the Lemma. Let $\phi_i^t = \sqrt{t}\phi_i+\sqrt{1-t}\phi'_i$ be rvs that interpolates between $\phi'_i$ and $\phi_i$ as $t$ goes from $0\to1$. Define $S(t) := \bE{\xi,\xi',z}{f(\phi_1^t)f(\phi_2^t)}$. Note that $S(0) = \bE{\xi',z}{f(\phi'_1)f(\phi'_2)}$ and $S(1) = \bE{\xi}{f(\phi_1)f(\phi_2)}$. Thus, we need to show that $|\frac{\dd S(t)}{\dd t}|=O(1/d^{1-\epsilon})$ for all $t$.

Taking derivative of $S$, we have
    \begin{align*}
        \frac{dS}{dt} &=  \frac{1}{2}\underbrace{\bE{\xi,\xi',z}{f'(\phi^t_1)f(\phi_2^t)(\frac{1}{\sqrt{t}}\phi_1-\frac{1}{\sqrt{1-t}}\phi'_1)}}_{:=T_1}+\frac{1}{2}\underbrace{\bE{\xi,\xi',z}{f(\phi^t_1)f'(\phi_2^t)(\frac{1}{\sqrt{t}}\phi_2-\frac{1}{\sqrt{1-t}}\phi'_2)}}_{:=T_2},\\
    \end{align*}
It suffices to show that $T_1 = O(1/d^{1-\epsilon})$, as the conclusion for $T_2$ follows by symmetry. First, we recall an important property of cumulants:
let $h = (h_1,h_2)$ be random variables with joint cumulants $a_{(l_1, l_2)}$ for index $(l_1,l_2)$. Then we have the expansion
\begin{equation}\label{eqn:cumulant_expansion}
    \bE{}{h_1g(h)} = \sum_{k_1,k_2\ge 0}\frac{a_{(k_1+1,k_2)}}{k_1!k_2!}\bE{}{g^{(k_1, k_2)}(h)},
\end{equation}
where $g^{(l_1,l_2)}(h_1,h_2) = \frac{\partial^{l_1}\partial^{l_2}g(h_1,h_2)}{\partial h_1^{l_1}\partial h_1^{l_2}}$. This formula can be found, for example, in \cite{erdos_matrix_2019} pg. 30. 

Let $\kappa$ be the joint cumulants of $(\phi_1,\phi_2)$ and $\kappa'$ the joint cumulants of $(\phi'_1,\phi'_2)$. Using \eqref{eqn:cumulant_expansion}, we have
\begin{align*}
    &\bE{\xi,\xi',z}{f'(\phi_1^t)f(\phi_2^t)\phi_1} = \sum_{k_1,k_2\ge 0}\frac{\kappa_{(k_1+1,k_2)}}{k_1!k_2!}\bE{}{f^{(k_1+1)}(\phi_1^t)f^{(k_2)}(\phi_2^t)(\sqrt{t})^{k_1+k_2}}.
\end{align*}
Note that for polynomial $f$, the above expansion has a finite number of terms and there are no convergence issues.

Let $Q(\beta) = \log \bE{\xi}{e^{\beta^T\man{\rat{M}{D}\xi}}}$. 
The joint CGF of $(\phi_1,\phi_2)$ is given by 
$$
K(s_1,s_2) = \log \bE{\xi}{e^{\left(s_1\rat{w_1}{d}+s_2\rat{w_2}{d}\right)^T\man{\rat{M}{D}\xi}}} 
$$
The $(l_1, l_2)^{th}$ order joint cumulant of $(\phi_1, \phi_2)$ is given by 
$$
\kappa_{(l_1, l_2)}=\frac{\partial^{l_1}\partial^{l_2}}{\partial s_1^{l_1}\partial s_{2}^{l_2}} K(s_1,s_2)\bigg|_{s_1=s_2=0} = \sum_{i_1\cdots i_{l_1} j_1\cdots j_{l_2}}Q_{i_1\cdots i_{l_1}j_1\cdots j_{l_2}}^{(l_1+l_2)} \frac{w_1^{i_1}}{\sqrt d}\cdots \frac{w_1^{i_{l_1}}}{\sqrt d}
\frac{w_2^{j_1}}{\sqrt d}\cdots\frac{w_2^{j_{l_2}}}{\sqrt d}
$$
where $Q_{i_1\cdots i_{l_1}j_1\cdots j_{l_2}}^{l_1+l_2}$ is the symmetric tensor of order $l_1+l_2 = r$
\begin{align}
Q^{(r)}_{i_1\cdots i_r} & = \frac{\partial^r}{\partial\beta_{i_1}\cdots \partial\beta_{i_r}} \log\mathbb{E}[e^{\beta^T\sigma(\frac{M\xi}{\sqrt D})}]_{\beta =0},
\\ &
= {\rm Cum}\left[\man{\frac{(M\xi)_{i_1}}{\sqrt D}}, \man{\frac{(M\xi)_{i_2}}{\sqrt D}}, \dots, \man{\frac{(M\xi)_{i_r}}{\sqrt D}}\right].
\end{align}
Using Hermite expansions of the activation,  $Q^{(r)}_{i_1...i_r}$ is given by a finite sum (for an activation having a finite number of Hermite components) over $n_1, n_2, \dots, n_m$ non-negative integers, of terms of the form
$$
{\rm Cum}[H_{n_1}(X_{i_1}), H_{n_2}(X_{i_2}), \dots, H_{n_r}(X_{i_r})]
$$
where we have set $X_i = \frac{(M\xi)_{i}}{\sqrt D}$. The joint cumulants of Hermite polynomials can be represented as a sum over connected multigraphs (see Appendix \ref{appndxsec:cum_of_hermite}) and the dominant contribution to the sum easily identified. This allows to compute the mean and variance of $\kappa_{(l_1, l_2)}$. In Appendix~\ref{appndx_sec:cum_momemts} we derive the bound $\mathbb{E}[\kappa_{(l_1, l_2)}] = O(d^{-{1}})$, and $\mathbb{V}[\kappa_{(l_1, l_2)}] = O(D^{-2})$ valid for all $l_1+l_2\geq 3$. The precise combinatorial constant prefactors depend on $l_1, l_2$ and $\sigma$, but their explicit form is not needed. Note that better bounds with a better exponent typically depending on $l_1+l_2$ can be derived but this one is enough for our purposes here.

Then we want an estimate on 
$\mathbb{P}\bigl[\vert\kappa_{(l_1, l_2)} - \mathbb{E}[\kappa_{(l_1, l_2)}]\vert \geq \frac{1}{d^{1-\epsilon}}\bigr]$. Using Chebyshev's inequality gives an immediate estimate, however this is not strong enough for our purposes, and usual Chernoff bounds are difficult to handle because one then needs a control of the moment generating functions. In the present case however, $\kappa_{(l_1, l_2)}$ is a Gaussian multinomial and we can use hypercontractive estimates that require only control of the variance. For any polynomial $P(w)$ in $w\in \mathbb{R}^r$ a Gaussian vector with i.i.d. $\mathcal{N}(0, 1)$ components we have \citep[Theorem 5.10 in Chapter 5]{janson_gaussian_1997} 
$$
\Vert P(w)\Vert_p \leq (p-1)^{r/2} \Vert P(w)\Vert_2
$$
where $\Vert X\Vert_p = (\mathbb{E} \vert X\vert^p)^{1/p}$ for any $p\geq 2$.
Combining this bound with a moment Chernoff bound applied to $P(w) = \kappa_{(l_1, l_2)} - \mathbb{E}[\kappa_{(l_1, l_2)}]$, with $r=l_1+l_2\geq 3$, we get for 
any small $0<\epsilon \ll 1/2$:
\begin{align*}
\mathbb{P}\biggl[\vert\kappa_{(l_1, l_2)} - \mathbb{E}[\kappa_{(l_1, l_2)}]\vert \geq \frac{1}{d^{1-\epsilon}}\biggr] 
&
\leq d^{(1-\epsilon)p} \,
\mathbb{E}\biggl[\vert\kappa_{(l_1, l_2)} - \mathbb{E}[\kappa_{(l_1, l_2)}]\vert^p\biggr]
\\ & 
\leq 
\biggl(d^{1-\epsilon} (p-1)^{\frac{l_1+l_2}{2}} \sqrt{\mathbb{V}[\kappa_{(l_1, l_2)}]}\biggr)^p
\\&
\leq 
\biggl( \frac{C_{l_1, l_2, \sigma} (p-1)^{\frac{l_1+l_2}{2}}}{d^\epsilon}\biggr)^p
\end{align*}
In the last inequality we used the worst case bound $\mathbb{V}[\kappa_{(l_1, l_2)}] \leq  C_{l_1, l_2, \sigma} D^{-2}$ for $l_1+l_2\geq 3$.
Optimizing over $p\geq 2$ reveals that we can take an integer on the scale $p_* = \Theta(d^{\frac{2\epsilon}{l_1+l_2}})$ (up to constants depending on $l_1, l_2$) and after some computation we find 
$$
\mathbb{P}\biggl[\vert\kappa_{(l_1, l_2)} - \mathbb{E}[\kappa_{(l_1, l_2)}]\vert \geq \frac{1}{d^{1-\epsilon}}\biggr] 
\leq 
\exp\biggl({-c_{l_1, l_2, \sigma} \, d^{\frac{2\epsilon}{l_1+l_2}}}\biggr)
$$
for $0<\epsilon_0\leq \epsilon <<1$ and $d \geq d_0(l_1, l_2, \epsilon_0, \sigma)$ large enough. With the hypothesis in the Lemma on $f$ and $\sigma$, choosing $1\>> \epsilon>\epsilon_0>0$, the $d_0$ can be fixed to some large integer eventually depending only on $f, \sigma, \epsilon_0$.

In order to use this result in a convenient way we still need to make a few remarks. 
It is easy to see that the worst case bound $\mathbb{E}[\kappa_{(l_1, l_2)}] = O(d^{-{1}})$ holds for all $l_1+l_2\geq 3$ (in fact for $l_1+l_2=3$ the expectation is zero by symmetry of $w$'s, the case $l_1+l_2 =4$ gives exactly $O(d^{-1})$, while higher cumulants give better exponent. Thus, we have
$$
\mathbb{P}\biggl[\vert\kappa_{(l_1, l_2)}\vert \geq \bigO{\frac{1}{d^{1-\epsilon}}}\biggr] \leq \exp\biggl({-c_{l_1, l_2, \sigma} \, d^{\frac{2\epsilon}{l_1+l_2}}}\biggr).
$$
Now, in fact we want to estimate the probability of events of the form: 
\begin{align}\label{eq:event}
\vert\kappa_{(l_1, l_2)}\vert \vert F(w_1, w_2)\vert \geq \frac{1}{d^{1-\epsilon}}
\end{align}
for some functions $F(w_1, w_2)$. Fix $\eta$ such that $0<\eta \le \epsilon/2 \ll 1/2$. Notice that event \eqref{eq:event} implies $\vert\kappa_{(l_1, l_2)}\vert \geq  \frac{1}{d^{1-\eta}}$ or $\vert F(w_1, w_2)\vert \geq d^{\eta}$ for $0<\eta \le \epsilon/2 \ll 1/2$ (this is seen by looking at the contrapositive of this assertion).
Moreover for polynomial $F(w_1, w_2)$ and Gaussian $w$'s it is clear that the event $\vert F(w_1, w_2)\vert \geq d^{\eta}$ has low probability tending to zero as $\bigO{e^{-c\, d^{2\eta}}}$. 
Putting these facts together and using the union bound with $\eta=\epsilon/2$, we conclude that the probability of the event \eqref{eq:event} is at most $O\biggl(\exp({-c_{l_1, l_2, \sigma} \, d^{\frac{2\epsilon}{l_1+l_2}}})\biggr)$, and it's complement has probability at least $1-O\biggl(\exp({-c_{l_1, l_2, \sigma} \, d^{\frac{2\epsilon}{l_1+l_2}}})\biggr)$.

From now on it is understood that the following estimates are valid w.h.p $1-O\biggl(\exp({-c_{f, \sigma} \, d^{\frac{2\epsilon}{l_1+l_2}}})\biggr)$ for some constant 
$c_{f, \sigma} >0$.
By using \eqref{eqn:cumulant_expansion} for $T_1$, we have
\begin{align}\label{apndx_eqn:interpolation_t1}
    &\bE{\xi,\xi',z}{f'(\phi_1^t)f(\phi_2^t)\phi_1} = \sum_{k_1,k_2\ge 0}\frac{\kappa_{(k_1+1,k_2)}}{k_1!k_2!}\bE{}{f^{(k_1+1)}(\phi_1^t)f^{(k_2)}(\phi_2^t)(\sqrt{t})^{k_1+k_2}},\nonumber\\
    &\stackrel{(a)}{=}\kappa_{(1,0)}\bE{}{f^{(1)}(\phi_1^t)f(\phi_2^t)}+\sqrt{t}\kappa_{(1,1)}\bE{}{f^{(1)}(\phi_1^t)f^{(1)}(\phi_2^t)}+\frac{\sqrt{t}}{2}\kappa_{(2,0)}\bE{}{f^{(2)}(\phi_1^t)f(\phi_2^t)}+O(1/d^{1-\epsilon}),\nonumber\\
    &\stackrel{(b)}{=} \sqrt{t}\kappa_{(1,1)}\bE{}{f^{(1)}(\phi_1^t)f^{(1)}(\phi_2^t)}+\frac{\sqrt{t}}{2}\kappa_{(2,0)}\bE{}{f^{(2)}(\phi_1^t)f(\phi_2^t)}+O(1/d^{1-\epsilon}),
\end{align}
where in $(a)$ we kept only the cumulants with $k_1+k_2\le 2$ owing to the previous discussion, and $(b)$ follows from the fact that $\kappa_{(1,0)}\bE{}{f^{(1)}(\phi_1^t)f(\phi_2^t)} = O(1/d)$ since $\kappa_{(1,0)} = O(1/\sqrt{d})$ (due to CLT) and $\bE{}{f^{(1)}(\phi_1^t)f(\phi_2^t)} = O(1/\sqrt{d})$ due to the zero-mean of $f$ (first use CLT and then expand using Mehler's kernel formula to see this).
Similarly, we have
\begin{align}\label{apndx_eqn:interpolation_t2}
    &\bE{\xi,\xi',z}{f'(\phi_1^t)f(\phi_2^t)\phi_1'} = \sum_{k_1,k_2\ge 0}\frac{\kappa'_{(k_1+1,k_2)}}{k_1!k_2!}\bE{}{f^{(k_1+1)}(\phi_1^t)f^{(k_2)}(\phi_2^t)(\sqrt{1-t})^{k_1+k_2}},\nonumber\\
    &=\sqrt{1-t}\kappa'_{(1,1)}\bE{}{f^{(1)}(\phi_1^t)f^{(1)}(\phi_2^t)}+\frac{\sqrt{1-t}}{2}\kappa'_{(2,0)}\bE{}{f^{(2)}(\phi_1^t)f(\phi_2^t)},\\
\end{align}
where we used the fact that cumulant for Gaussian are zero for $k_1+k_2\ge 3$, and $\kappa'_{(1,0)} = 0$ due to $\bE{}{\phi_1'}=0$. 
Substituting \eqref{apndx_eqn:interpolation_t1} and \eqref{apndx_eqn:interpolation_t2} in the expression for $T_1$, we have
\begin{align*}
    T_1 &= (\kappa_{(1,1)}-\kappa'_{(1,1)})\bE{}{f^{(1)}(\phi_1^t)f^{(1)}(\phi_2^t)}+\frac{1}{2}(\kappa_{(2,0)}-\kappa'_{(2,0)})\bE{}{f^{(2)}(\phi_1^t)f(\phi_2^t)}+O(1/d^{1-\epsilon/2}).
\end{align*}
We have $\kappa_{(1,1)} = \rat{w_1^T}{d}Q^{(2)}\rat{w_2}{d}$, and $\kappa_{(1,1)}' = \rat{w_1^T}{d}Q'^{(2)}\rat{w_2}{d}$. Using Mehler kernel formula, we get the following for $Q^{(2)}$
\begin{align*}
    Q^{(2)} &= \bE{\xi}{\man{\rat{M}{D}\xi}\man{\rat{M}{D}\xi}^T},\\
    &= \nu_1^2\frac{MM^T}{D}+(1-\nu_1^2)I + \bigO{1/d},\\
    &= Q'^{(2)}+\bigO{1/d}.
\end{align*}
Thus, we have $\bE{w_1,w_2}{\kappa_{(1,1)}-\kappa'_{(1,1)}} = 0$, and $\mathbb{V}[\kappa_{(1,1)}-\kappa'_{(1,1)}] = \frac{1}{d^2}\norm{Q^{(2)}-Q'^{(2)}}_F^2 = \bigO{1/d^2}$. Thus, by Chebyshev inequality $\vert\kappa_{(1,1)}-\kappa'_{(1,1)}\vert = \bigO{1/d^{1-\epsilon}}$ w.h.p.
Moreover, CLT gives $(\kappa_{(2,0)}-\kappa'_{(2,0)}) = O(1/\sqrt{d})$ while the zero mean of $f$ means $\bE{}{f^{(2)}(\phi_1^t)f(\phi_2^t)} = O(1/\sqrt{d})$ making their product $O(1/d)$. Hence, we get
\begin{align*}
    T_1 &= O(1/d^{1-\epsilon}).
\end{align*}
\end{proof}
\subsection{Expectations and cumulants for products of Hermite polynomials}\label{appndxsec:cum_of_hermite}

Let $X_1,\dots, X_m$ be scalar Gaussian random variables with mean zero and covariance matrix $r_{ij} = \mathbb{E}[X_iX_j]$. Without loss of generality we assume $r_{ii} =1$ (otherwise in what follows one must replace $r_ij$ by $r_{ij}/\sqrt{r_{ii} r_{jj}})$.
Note that in our application we have $X_i = \frac{(M\xi)_i}{\sqrt D}$ with $\xi\sim\mathcal{N}(0, I_d)$ and $r_{ij} = \frac{1}{D}M_i^T M_j$ where $M_k$ is the $k-th$ row of $M\in \mathbb{R}^{d\times D}$. Therefore for large enough dimensions $r_{ii} \approx 1$.

Consider the set $\Gamma_m\owns\gamma$ of graphs with $m$ vertices $\{1,\dots,m\}$ with degrees $n_1,\dots, n_m$ and edges connecting pairs of distinct vertices $i\neq j$ (no self-loops). Pairs of nodes can have a multi-edges. If we denote by $k_{ij}$ ($=k_{ji}$) the number of edges connecting the pair $i, j$, we have the constraint $\sum_{j\neq i} k_{ij} = n_i$ for each $i$. The set of fully connected such graphs is denoted $\Gamma_m^c\owns\gamma^c$. 
Note that there is a one-to-one correspondence between a graph $\gamma\in \Gamma_m$ (or $\gamma^c\in \Gamma_m^c$) completely determines the degrees $n_1, ..., n_m$ and the number of multi-edges $k_{ij}$ between pairs $i\neq j$.

\begin{lemma}\label{lem:av_and_cum_hermite}
The average of Hermite products is given by the sum over all graphs:
\begin{align}\label{eq:avproduct}
\mathbb{E}[H_{n_1}(X_1)H_{n_2}(X_2)...H_{n_m}(X_m)] = \sum_{\gamma\in \Gamma_m}
\frac{\prod_{i=1}^m n_i!}{\prod_{1\le i<j\le m} k_{ij}!}
\prod_{1\le i<j\le m} r_{ij}^{\,k_{ij}},
\end{align}
In the cumulant of Hermite polynomials only fully connected graphs contribute:
\begin{align}\label{eq:avcum}
{\rm Cum}[H_{n_1}(X_1), H_{n_2}(X_2), ..., H_{n_m}(X_m)] = \sum_{\gamma^c\in \Gamma_m^c}
\frac{\prod_{i=1}^m n_i!}{\prod_{1\le i<j\le m} k_{ij}!}
\prod_{1\le i<j\le m} r_{ij}^{\,k_{ij}},
\end{align}
\end{lemma}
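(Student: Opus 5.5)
The plan is to prove the formula for joint moments first and then obtain the cumulant formula by the moment–cumulant (Möbius) inversion over set partitions.

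\textbf{Moments.} I would work with the Wick-product representation of Hermite polynomials. For a unit-variance Gaussian $X_i$, the polynomial $H_{n_i}(X_i)$ is the Wick power ${:}X_i^{n_i}{:}$. Concretely, introduce $n_i$ formal copies (``half-edges'') of $X_i$ at vertex $i$ and take the Wick-ordered product. The generating function is the cleanest route:
\[
\sum_{n\ge0}\frac{t^n}{n!}H_n(X)=e^{tX-t^2/2}.
\]
Hence
\[
\mathbb{E}\Big[\prod_i e^{t_iX_i-t_i^2/2}\Big]=\exp\Big(\tfrac12\sum_{i,j}t_it_jr_{ij}-\tfrac12\sum_i t_i^2\Big)=\exp\Big(\sum_{i<j}t_it_jr_{ij}\Big),
\]
because $r_{ii}=1$ makes the diagonal terms cancel exactly. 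I then expand
\[
\exp\Big(\sum_{i<j}t_it_jr_{ij}\Big)=\prod_{i<j}\sum_{k_{ij}\ge0}\frac{(t_it_jr_{ij})^{k_{ij}}}{k_{ij}!}.
\]
Extracting the coefficient of $\prod_i t_i^{n_i}/n_i!$ gives a sum over symmetric nonnegative integer arrays $(k_{ij})$ with $\sum_{j\ne i}k_{ij}=n_i$. Such an array is exactly a multigraph $\gamma\in\Gamma_m$ with no self-loops, and the weight is $\prod_i n_i!\,\prod_{i<j}r_{ij}^{k_{ij}}/k_{ij}!$. This gives \eqref{eq:avproduct}. The absence of self-loops is the combinatorial shadow of the diagonal cancellation.

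\textbf{Cumulants.} I would use the joint cumulant definition in terms of moments of sub-collections:
\[
{\rm Cum}[Y_1,\dots,Y_m]=\sum_{\pi}(|\pi|-1)!(-1)^{|\pi|-1}\prod_{B\in\pi}\mathbb{E}\Big[\prod_{i\in B}Y_i\Big].
\]
Equivalently, moments equal the sum over partitions of products of cumulants over the blocks. The claim is that the connected-graph sum $C(S):=\sum_{\gamma^c\in\Gamma^c_S}w(\gamma^c)$ satisfies the same moment-to-cumulant relation. To see this, take any graph $\gamma\in\Gamma_S$ and decompose it uniquely into connected components. Since every vertex has prescribed degree $n_i$, isolated vertices occur only when $n_i=0$; in that case $H_0=1$ and the vertex forms a trivial block, which is harmless.

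The weight factorizes over components:
\begin{itemize}
\item $\prod_i n_i!$ splits vertex-wise;
\item $\prod r_{ij}^{k_{ij}}/k_{ij}!$ splits edge-wise, and every edge lies in exactly one component.
\end{itemize}
Therefore $\mathbb{E}[\prod_{i\in S}H_{n_i}(X_i)]=\sum_{\pi}\prod_{B\in\pi}C(B)$. Möbius inversion on the partition lattice then identifies $C(S)$ with the joint cumulant, which is \eqref{eq:avcum}.

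\textbf{Main obstacle.} The step needing the most care is the convention for $n_i=0$ vertices and for singleton blocks. A singleton $B=\{i\}$ has cumulant $\mathbb{E}H_{n_i}(X_i)=\delta_{n_i0}$, which matches the single-vertex graph of degree $0$ (with weight $1$) or the empty sum. For $m\ge2$, any vertex with $n_i=0$ is isolated, so there are no connected graphs and $C(S)=0$; this is consistent with a cumulant involving a constant. The bookkeeping is routine once these conventions are fixed.

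Finally, I would remark that for $r_{ii}\neq1$ one rescales $X_i\mapsto X_i/\sqrt{r_{ii}}$, as noted in the text. In the application, $r_{ii}=1+O(D^{-1/2})$, and this discrepancy is absorbed into the error terms.
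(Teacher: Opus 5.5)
Your proof is correct. The moment part is the same as the paper's: the generating function $e^{tX-t^2/2}$, the cancellation of the diagonal of $\frac{1}{2}t^\top R t$ against $\frac{1}{2}\sum_i t_i^2$ (which is what excludes self-loops), and coefficient extraction from $\prod_{i<j}\exp(r_{ij}t_it_j)$.

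The cumulant part is organized dually to the paper's. The paper substitutes the graph expansion of each block moment into the explicit inversion formula ${\rm Cum}=\sum_\pi(|\pi|-1)!(-1)^{|\pi|-1}\prod_{B\in\pi}\mathbb{E}[\cdots]$ and regroups by graph. A graph $\gamma$ then carries the coefficient $\sum_{\pi\succeq\Pi(\gamma)}(|\pi|-1)!(-1)^{|\pi|-1}$. The paper evaluates this with Lemma~\ref{lem:moebius}, via Stirling numbers and their exponential generating function, to be $1$ for connected $\gamma$ and $0$ otherwise.

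You instead prove the forward relation $\mathbb{E}[\prod_{i\in S}H_{n_i}(X_i)]=\sum_{\pi}\prod_{B\in\pi}C(B)$ for every $S$. This comes from the bijection between graphs on $S$ and pairs consisting of a partition of $S$ and a connected graph on each block. You then conclude $C(S)={\rm Cum}$ by uniqueness. Uniqueness is immediate by induction on $|S|$, since $\pi=\{S\}$ is the only term involving $C(S)$.

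Your route avoids evaluating the Möbius function of the partition lattice. It also makes explicit the convention for degree-zero vertices and singleton blocks, which the paper leaves implicit. The paper's route is more hands-on: it shows directly that each disconnected graph cancels, at the cost of a separate combinatorial lemma. Both arguments rest on the same fact: the weight $\prod_i n_i!\prod_{i<j}r_{ij}^{k_{ij}}/k_{ij}!$ factorizes over vertex-disjoint pieces.
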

This lemma is a generalization of the usual Isserli or Wick theorem for products of Gaussian variables (corresponding to $n_1=n_2=\dots n_m=1$). It is a form of the Kibble-Slepian formula \cite{kibble_extension_1945,slepian_symmetrized_1972}, but also of Wick's theorem for bosons in quantum mechanics \cite{chou_equilibrium_1985}. For the convenience of the reader we give here a streamlined proof.

\begin{proof}
We first prove \eqref{eq:avproduct} by the generating function method. The Hermite polynomials here are defined from the following generating function
\[
e^{xt-t^2/2}=\sum_{n=0}^\infty \mathrm{H}_n(x)\frac{t^n}{n!}.
\]
Therefore we introduce 
\[
G(t_1,\dots,t_m)
:=
\mathbb E\!\left[\prod_{i=1}^m e^{t_iX_i-t_i^2/2}\right].
\]
for which we have
\[
G(t_1,\dots,t_m)
=
\sum_{n_1,\dots,n_m\ge 0}
\mathbb E\!\left[\prod_{i=1}^m \mathrm{H}_{n_i}(X_i)\right]
\prod_{i=1}^m \frac{t_i^{n_i}}{n_i!}.
\tag{1}
\]
Since $X_i$ are mean zero with covariance
\(R=(r_{ij})\) we have
\[
G(t_1,\dots,t_m)
=
\mathbb E\!\left[e^{\sum_{i=1}^m t_iX_i-\frac12\sum_{i=1}^m t_i^2}\right]
=
\exp\!\left(\frac12\, t^\top R t-\frac12\sum_{i=1}^m t_i^2\right).
\]
With \(r_{ii}=1\), we obtain
\[
\frac12\, t^\top R t-\frac12\sum_{i=1}^m t_i^2
=
\sum_{1\le i<j\le m} r_{ij}t_it_j,
\]
consequently
\[
G(t_1,\dots,t_m)
=
\exp\!\left(\sum_{1\le i<j\le m} r_{ij}t_it_j\right).
\tag{2}
\]
Expanding the r.h.s
\begin{align*}
G(t_1,\dots,t_m)
& =
\prod_{1\le i<j\le m}\exp(r_{ij}t_it_j)
=
\prod_{1\le i<j\le m}
\left(
\sum_{k_{ij}=0}^\infty \frac{(r_{ij}t_it_j)^{k_{ij}}}{k_{ij}!}
\right),
\\ &
=
\sum_{(k_{ij})_{i<j}}
\left(
\prod_{1\le i<j\le m}\frac{r_{ij}^{k_{ij}}}{k_{ij}!}
\right)
\prod_{1\le i<j\le m}(t_it_j)^{k_{ij}}
\\ &
=
\sum_{(k_{ij})_{i<j}}
\left(
\prod_{1\le i<j\le m}\frac{r_{ij}^{k_{ij}}}{k_{ij}!}
\right)
\prod_{i=1}^m t_i^{\sum_{j\ne i}k_{ij}}.
\tag{3}
\end{align*}
The coefficient of \(t_1^{n_1}\cdots t_m^{n_m}\) in \((3)\) is
\[
\sum_{\substack{k_{ij}\ge 0\\ \sum_{j\ne i}k_{ij}=n_i}}
\prod_{1\le i<j\le m}\frac{r_{ij}^{k_{ij}}}{k_{ij}!}.
\]
But because of \((1)\), this coefficient is also equal to
\[
\frac{1}{n_1!\cdots n_m!}
\mathbb E\!\left[\prod_{i=1}^m \mathrm{H}_{n_i}(X_i)\right].
\]
We therefore conclude
\[
\mathbb E\!\left[\prod_{i=1}^m \mathrm{H}_{n_i}(X_i)\right]
=
\sum_{\substack{k_{ij}\ge 0\\ \sum_{j\ne i}k_{ij}=n_i}}
\frac{\prod_{i=1}^m n_i!}{\prod_{1\le i<j\le m} k_{ij}!}
\prod_{1\le i<j\le m} r_{ij}^{\,k_{ij}},
\]
which is precisely \eqref{eq:avproduct}.

Now we deduce \eqref{eq:avcum} by a standard argument. Note that any graph $\gamma\in \Gamma_m$ is a union of disjoint fully connected parts $\gamma = \cup_{l\in \rho} \gamma_l^c$ corresponding to a partition $\pi$ of the set of vertices $\{1,\dots,m\}$. The 'finest' partition is $\{\{1\}\{2\}\dots \{m\}\}$ and the 'grossest' one is $\{1\dots m\}$. We have the partial order $\rho\preceq \pi$ ($\rho$ is finer than $\pi$) if all subsets of $\rho$ are included in the subsets of $\pi$. 

Cumulants are related to moments through:
\begin{equation}\label{eq:momentcum}
{\rm Cum}[H_{n_1}(X_1),
\dots,H_{n_m}(X_m)]
=
\sum_{\pi}
(|\pi|-1)!(-1)^{|\pi|-1}
\prod_{B\in\pi}\shortexpect\left[\prod_{i\in B}H_{n_i}(X_i)\right].
\end{equation}
For a block \(B\in \pi\), the expectation
\[
\shortexpect\left[\prod_{i\in B}H_{n_i}(X_i)\right]
\]
is itself a sum over loopless graphs whose vertices are the elements of \(B\), with degrees \(n_i\). Hence, for a partition \(\pi\), the product over blocks
\[
\prod_{B\in\pi}\shortexpect\left[\prod_{i\in B}H_{n_i}(X_i)\right]
\]
is a sum over graphs on \(\{1,
\dots,m\}\) whose multi-edges lie entirely inside the blocks of \(\pi\). In other words, no edge is allowed to connect two different blocks of \(\pi\).
Thus a fixed graph \(\gamma\) contributes to the term indexed by \(\pi\) in \eqref{eq:momentcum} exactly when every connected component of \(\gamma\) is contained in a block of \(\pi\). Equivalently, if \(\Pi(\gamma)\) denotes the partition of \(\{1,
\dots,m\}\) into connected components of \(\gamma\), then \(\gamma\) appears precisely for partitions \(\pi\) satisfying
$\Pi(\gamma)\preceq \pi$.
Therefore the total coefficient multiplying a fixed graph \(\Gamma\) in the cumulant is
\begin{equation}\label{eq:MobiusSum}
\sum_{\pi:\ \Pi(\gamma)\preceq \pi}
(|\pi|-1)!(-1)^{|\pi|-1}.
\end{equation}
By the Moebius Lemma \ref{lem:moebius} below this sum is equal to \(1\) if \(\Pi(\gamma)\) has only one block namely $\{1\dots m\}$, and equal to \(0\) otherwise. This means that only fully connected graphs contribute to the cumulant.

\end{proof}

\begin{lemma}\label{lem:moebius}
Let $\mathcal P_m$ be the set of partitions of $\{1,\dots,m\}$.
Recall that for partitions $\rho,\pi$ of $\{1\dots m\}$, we write $\rho\preceq \pi$ if $\rho$ is finer than $\pi$,
that is, every block of $\rho$ is contained in a block of $\pi$.
Then
\[
\sum_{\pi:\,\rho\preceq \pi}
(|\pi|-1)!(-1)^{|\pi|-1}
=
\begin{cases}
1,& \text{if } \rho=\{\{1,\dots,m\}\},\\
0,& \text{otherwise.}
\end{cases}
\]
\end{lemma}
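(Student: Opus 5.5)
We need to prove the Möbius lemma: for a fixed partition $\rho$ of $\{1,\dots,m\}$, the sum $\sum_{\pi\succeq\rho}(|\pi|-1)!(-1)^{|\pi|-1}$ equals $1$ if $\rho$ is the one-block partition and $0$ otherwise.

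The plan is to reduce to a statement about partitions of the blocks of $\rho$. Partitions $\pi$ coarser than $\rho$ are in bijection with partitions of the set of blocks of $\rho$: merge the blocks of $\rho$ according to a partition of $\{B_1,\dots,B_k\}$, where $k=|\rho|$. This bijection preserves the number of blocks. Hence the sum equals
\[
S_k:=\sum_{\tau\in\mathcal P_k}(|\tau|-1)!(-1)^{|\tau|-1},
\]
which depends only on $k$. The claim is then $S_1=1$ and $S_k=0$ for $k\ge 2$.

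The case $k=1$ is immediate, since the only partition has one block and contributes $0!=1$. For $k\ge2$, group the partitions by their number of blocks $j$; there are $S(k,j)$ of them (Stirling numbers of the second kind). This gives
\[
S_k=\sum_{j=1}^k S(k,j)(-1)^{j-1}(j-1)!.
\]
To evaluate it, I would use the exponential generating function identity
\[
\sum_{k\ge0}\sum_j S(k,j)y^j\frac{x^k}{k!}=\exp\bigl(y(e^x-1)\bigr).
\]
Since $\sum_j S(k,j)(-1)^{j-1}(j-1)!$ is obtained by applying the linear map $y^j\mapsto(-1)^{j-1}(j-1)!$, we have
\[
\sum_{k\ge1}S_k\frac{x^k}{k!}=\sum_{j\ge1}\frac{(-1)^{j-1}(j-1)!}{j!}(e^x-1)^j=\log\bigl(1+(e^x-1)\bigr)=x.
\]
Comparing coefficients gives $S_1=1$ and $S_k=0$ for $k\ge2$.

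Alternatively, one can use a probabilistic argument avoiding generating functions. Apply the moment–cumulant formula \eqref{eq:momentcum} to $k$ copies of the constant random variable $1$. All moments equal $1$, so the right-hand side is exactly $S_k$. Joint cumulants of a constant vanish for order $k\ge2$, since its cumulant generating function $\log e^{\sum t_i}=\sum t_i$ is linear. The first cumulant is $1$. This yields the same conclusion and fits the cumulant language already used in the paper.

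The only point requiring care is the bijection step, which must be verified to be well defined and to preserve $|\pi|$. This is routine: each block of a coarser $\pi$ is a union of blocks of $\rho$, and distinct blocks of $\pi$ use disjoint sets of $\rho$-blocks. I therefore expect no genuine obstacle. The main decision is whether to present the generating-function route or the constant-random-variable route. I would choose the latter for brevity and consistency with \eqref{eq:momentcum}.
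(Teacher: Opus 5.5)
Your proposal is correct, and you describe two routes. The generating-function argument you sketch first is the paper's proof essentially verbatim: the same bijection to partitions of the $k=|\rho|$ blocks, the same Stirling grouping, and the same identity $\sum_{j\ge1}\frac{(-1)^{j-1}}{j}(e^x-1)^j=x$. Your bivariate EGF with $y^j\mapsto(-1)^{j-1}(j-1)!$ just repackages $\sum_{k\ge j}S(k,j)\frac{x^k}{k!}=\frac{(e^x-1)^j}{j!}$.

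The route you say you would actually present is genuinely different, and it is also valid. After the same reduction to $S_k$, apply \eqref{eq:momentcum} to $k$ variables with $n_1=\dots=n_k=0$, so every $H_0\equiv 1$; its right-hand side is then exactly $S_k$. On the left you use that mixed derivatives of order $k\ge2$ of $\log e^{t_1+\cdots+t_k}=\sum_i t_i$ vanish. This buys brevity, and it makes the lemma read as ``higher cumulants of constants vanish,'' which is the language in which the paper uses it.

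What it costs is self-containedness. All the combinatorial content is pushed into \eqref{eq:momentcum}, which the paper quotes without proof. The argument is non-circular only if that formula is derived independently of this lemma. One way is the multivariate Fa\`a di Bruno formula applied to $\log\mathbb{E}\,e^{\sum_i t_iX_i}$, using $\log^{(j)}(1)=(-1)^{j-1}(j-1)!$. The textbook derivation by M\"obius inversion on the partition lattice would not do: there you need $\mu(\pi,\hat 1)=(-1)^{|\pi|-1}(|\pi|-1)!$ (with $\hat 1$ the one-block partition), and that is precisely the statement being proved.

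Viewed through Fa\`a di Bruno, your argument is the multivariate form of the paper's identity $\log(1+(e^x-1))=x$, so both proofs rest on the same fact. The paper's version has the advantage of needing only the Stirling EGF.
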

\begin{proof}
Let $k=|\rho|$ be the number of blocks of $\rho$. Any partition
$\pi$ such that $\rho\preceq \pi$ is obtained by merging the $k$ blocks
of $\rho$. Therefore such $\pi$ are in bijection with partitions of a
$k$-element set.
Thus the sum in teh lemma depends only on $k$ and equals
\[
S_k
=
\sum_{j=1}^k
\left\{ {k \atop j} \right\}
(j-1)!(-1)^{j-1}.
\]
where $\left\{ {k \atop j} \right\}$ is teh Stirling number (the number of partitions of a $k$ element set into $j$ blocks).
Using the exponential generating function of Stirling's numbers:
\[
\sum_{k\ge j}
\left\{ {k \atop j} \right\}
\frac{x^k}{k!}
=
\frac{(e^x-1)^j}{j!},
\]
we obtain
\[
\sum_{k\ge 1} S_k\frac{x^k}{k!}
=
\sum_{j\ge 1}
\frac{(-1)^{j-1}}{j}
(e^x-1)^j
=
\log(e^x)=x.
\]
Hence $S_1=1$ and $S_k=0$ for $k\ge2$.
\end{proof}

\subsection{Mean and variance of cumulants $\kappa_{(l_1, l_2)}$}\label{appndx_sec:cum_momemts}
In this section, we derive bounds for the mean and variance of $\kappa_{(l_1,l_2)}$, the joint cumulants of $\rat{w_1^T}{d}\man{\rat{M}{D}\xi}$, and $\rat{w_2^T}{d}\man{\rat{M}{D}\xi}$, for $l_1\ge 0, l_2\ge 0$, and $r=l_1+l_2\ge 3$. Recall that $\kappa_{(l_1, l_2)}$ is given by 
$$
\kappa_{(l_1, l_2)} = \sum_{i_1\cdots i_{l_1} j_1\cdots j_{l_2}}Q_{i_1\cdots i_{l_1}j_1\cdots j_{l_2}}^{(r)} \frac{w_1^{i_1}}{\sqrt d}\cdots\frac{w_1^{i_{l_1}}}{\sqrt d}
\frac{w_2^{j_1}}{\sqrt d}\cdots\frac{w_2^{j_{l_2}}}{\sqrt d}
$$
where $Q_{i_1\cdots i_{l_1}j_1\cdots j_{l_2}}^{(r)}$ is the symmetric tensor of order $l_1+l_2 = r$ given by
\begin{align}
Q^{(r)}_{i_1\cdots i_r} & = \frac{\partial^r}{\partial\beta_{i_1}\cdots \partial\beta_{i_r}} \log\mathbb{E}[e^{\beta^T\sigma(\frac{M\xi}{\sqrt D})}]_{\beta =0},
\\ &
= {\rm Cum}\left[\man{\frac{(M\xi)_{i_1}}{\sqrt D}}, \man{\frac{(M\xi)_{i_2}}{\sqrt D}}, \dots, \man{\frac{(M\xi)_{i_r}}{\sqrt D}}\right].
\end{align}
\subsubsection{Mean}
Consider the first moment of $\kappa_{(l_1,l_2)}$. It is zero when $l_1$ or $l_2$ is odd. Otherwise, we have
\begin{align*}
    \bE{}{\kappa_{(l_1,l_2)}} &= \frac{1}{d^{r/2}}\sum_{i_1\cdots i_{l_1} j_1\cdots j_{l_2}}Q_{i_1\cdots i_{l_1}j_1\cdots j_{l_2}}^{(r)} \bE{}{w_1^{i_1}\cdots w_1^{i_{l_1}}}
\bE{}{w_2^{j_1}\cdots w_2^{j_{l_2}}}.
\end{align*}
Using Wick's theorem, the expectations can be written as the sum over pairings of indices. Let $\cK_{l}$ be the set of all pairings of $(1,\cdots,l)$. Then,
\begin{align*}
    \bE{}{\kappa_{(l_1,l_2)}} &= \frac{1}{d^{r/2}}\sum_{i_1\cdots i_{l_1} j_1\cdots j_{l_2}}Q_{i_1\cdots i_{l_1}j_1\cdots j_{l_2}}^{(r)} \left(\sum_{\pi_1\in\cK_{l_1}}\prod_{(a_1,b_1)\in\pi_1}\bE{}{w_1^{i_{a_1}}w_1^{i_{b_1}}}\right)
\left(\sum_{\pi_2\in\cK_{l_2}}\prod_{(a_2,b_2)\in\pi_2}\bE{}{w_2^{j_{a_2}}w_2^{j_{b_2}}}\right),\\
&= \frac{1}{d^{r/2}}\sum_{i_1\cdots i_{l_1} j_1\cdots j_{l_2}}Q_{i_1\cdots i_{l_1}j_1\cdots j_{l_2}}^{(r)} \sum_{\substack{\pi_1\in\cK_{l_1}\\\pi_2\in\cK_{l_2}}}\prod_{(a_1,b_1)\in\pi_1}\delta_{i_{a_1}i_{b_1}}\prod_{(a_2,b_2)\in\pi_2}\delta_{j_{a_2}j_{b_2}},\\
&= \frac{1}{d^{r/2}}\sum_{\substack{\pi_1\in\cK_{l_1}\\\pi_2\in\cK_{l_2}}}\sum_{i_1\cdots i_{l_1} j_1\cdots j_{l_2}}Q_{i_1\cdots i_{l_1}j_1\cdots j_{l_2}}^{(r)} \prod_{(a_1,b_1)\in\pi_1}\delta_{i_{a_1}i_{b_1}}\prod_{(a_2,b_2)\in\pi_2}\delta_{j_{a_2}j_{b_2}}.
\end{align*}
Each pairing in the above equation leads to same value due to the symmetric nature of $Q^{(r)}$. Thus,
\begin{align*}
    \bE{}{\kappa_{(l_1,l_2)}} &=  C_r\frac{1}{d^{r/2}}\sum_{i_1\cdots i_{l_1/2} j_1\cdots j_{l_2/2}}Q_{i_1i_1i_2i_2\cdots i_{l_1/2}i_{l_1/2}j_1j_1\cdots j_{l_2/2}j_{l_2/2}}^{(r)},
\end{align*}
where $C_r = \frac{l_1!l_2!}{2^{r/2}(l_1/2)!(l_2/2)!}$. We have
\begin{align}
Q^{(r)}_{i_1\cdots i_r} &= {\rm Cum}\left[\man{\frac{(M\xi)_{i_1}}{\sqrt D}}, \man{\frac{(M\xi)_{i_2}}{\sqrt D}}, \dots, \man{\frac{(M\xi)_{i_r}}{\sqrt D}}\right],\\
&= \sum_{n_1,n_2,\cdots,n_r}\nu_{n_1}\nu_{n_2}\cdots\nu_{n_r} {\rm Cum}[H_{n_1}(X_{i_1}), H_{n_2}(X_{i_2}), \dots, H_{n_r}(X_{i_r})]
\end{align}
where we have set $X_i = \frac{(M\xi)_{i}}{\sqrt D}$.

From the diagrammatic analysis in Appendix~\ref{appndxsec:cum_of_hermite}, the leading scaling in $d$ is determined by the minimum number of edges required to connect the distinct indices appearing among $(i_1,\cdots,i_r)$. If there are $m$ distinct indices, the minimal connected graph has $m-1$ edges, leading to scaling
\begin{equation*}
    Q^{(r)}_{i_1\cdots i_r} = \bigO{d^{-(m-1)/2}}.
\end{equation*}

For $Q_{i_1i_1i_2i_2\cdots i_{l_1/2}i_{l_1/2}j_1j_1\cdots j_{l_2/2}j_{l_2/2}}^{(r)}$, since there are only $r/2$ distinct indices at most, which leads to
\begin{align*}
    \bE{}{\kappa_{(l_1,l_2)}} & = C_r\frac{1}{d^{r/2}}\sum_{i_1\cdots i_{l_1/2} j_1\cdots j_{l_2/2}}Q_{i_1i_1i_2i_2\cdots i_{l_1/2}i_{l_1/2}j_1j_1\cdots j_{l_2/2}j_{l_2/2}}^{(r)}\\
& = C_r\frac{1}{d^{r/2}} d^{r/2}\bigO{{d}^{-(r/2-1)/2}} = \bigO{{d}^{-(r/2-1)/2}}.
\end{align*}
Note that in the above expression, we considered only the terms with distinct indices since they make the leading order contribution.
Thus, for $r\ge 6$, $\bE{}{\kappa_{(l_1,l_2)}} = \bigO{\frac{1}{d}}$.
When $l_1=l_2 =2$, 
\begin{align*}
    \bE{}{\kappa_{(2,2)}} &= \frac{1}{d^2}\sum_{i_1i_2j_1j_2}Q_{i_1i_2j_1j_2}^{(r)} \bE{}{w_1^{i_1}w_1^{i_2}}
\bE{}{w_2^{j_1}w_2^{j_2}}\\
& = \frac{1}{d^2}\sum_{i_1j_1}Q_{i_1i_1j_1j_1}^{(r)},\\
&= \frac{1}{d^2}\sum_{i_1j_1}\frac{M_{i_1}^TM_{j_1}}{D} + \bigO{\frac{1}{d}},\\
&= \frac{1}{d^2}\sum_{i_1\neq j_1}\frac{M_{i_1}^TM_{j_1}}{D} + \frac{1}{d^2}\sum_{i_1}\frac{M_{i_1}^TM_{i_1}}{D} + \bigO{\frac{1}{d}},\\
&= \frac{1}{d^2}\sum_{i_1}\frac{M_{i_1}^T\sum_{j_1\neq i_1}M_{j_1}}{D} + \bigO{\frac{1}{d}},\\
&= \bigO{\frac{1}{d}}.
\end{align*}
A similar argument shows that $\bE{}{\kappa_{4,0}} = \bE{}{\kappa_{0,4}} = \bigO{\frac{1}{d}}$. Hence, for any $r=l_1+l_2\ge 3$, we have $\bE{}{\kappa_{(l_1,l_2)}} = \bigO{\frac{1}{d}}$.

\subsubsection{Variance}
Next, we consider variance of $\kappa_{(l_1,l_2)}$ for $r=l_1+l_2\ge 3$. We can expand to get
\begin{align*}
    \mathbb{V}[\kappa_{(l_1,l_2)}] &= \frac{1}{d^r}\sum_{\substack{i_1\cdots i_{l_1} j_1\cdots j_{l_2}\\i_1'\cdots i_{l_1}' j_1'\cdots j_{l_2}'}}Q_{i_1\cdots i_{l_1}j_1\cdots j_{l_2}}^{(r)} Q_{i_1'\cdots i_{l_1}'j_1'\cdots j_{l_2}'}^{(r)}\text{Cov}\left(w_1^{i_1}\cdots w_1^{i_{l_1}}
w_2^{j_1}\cdots w_2^{j_{l_2}},w_1^{i_1'}\cdots w_1^{i_{l_1}'}
w_2^{j_1'}\cdots w_2^{j_{l_2}'}\right)
\end{align*}
Applying Wick's theorem to the covariance, we see that the contributions to the above summation will depend on the the number of internal pairings within two sets of vertices: $(i_1,\cdots,i_{l_1},j_1,\cdots,j_{l_2})$ and $(i_1',\cdots,i_{l_1}',j_1',\cdots,j_{l_2}')$. Let $\cK_{s}$ denote the set of all pairings of indices $(i_1,\cdots i_{l_1},j_1,\cdots j_{l_2},i_1',\cdots i_{l_1}',j_1',\cdots j_{l_2}')$ with 1) $s$ pairings internally within $(i_1,\cdots i_{l_1},j_1,\cdots j_{l_2})$ (and $(i_1',\cdots i_{l_1}',j_1',\cdots j_{l_2}')$), 2) no pairings between $(i_1,\cdots i_{l_1})$ and $(j_1,\cdots j_{l_2})$, and also no pairings between $(i_1',\cdots i_{l_1}')$ and $(j_1',\cdots j_{l_2}')$, 3) at least one pairing between $(i_1,\cdots i_{l_1},j_1,\cdots j_{l_2})$ and $(i_1',\cdots i_{l_1}',j_1',\cdots j_{l_2}')$. Then,
\begin{align*}
    \mathbb{V}[\kappa_{(l_1,l_2)}] &= \frac{1}{d^r}\sum_{\substack{i_1\cdots i_{l_1} j_1\cdots j_{l_2}\\i_1'\cdots i_{l_1}' j_1'\cdots j_{l_2}'}}Q_{i_1\cdots i_{l_1}j_1\cdots j_{l_2}}^{(r)} Q_{i_1'\cdots i_{l_1}'j_1'\cdots j_{l_2}'}^{(r)}\sum_{s=0}^{\lfloor\frac{r-1}{2}\rfloor} \sum_{\pi\in\cK_s} \prod_{(a,b)\in\pi}\delta_{ab},\\
    &= \sum_{s=0}^{\lfloor\frac{r-1}{2}\rfloor} \mathbb{V}[\kappa_{(l_1,l_2)}]_s,
\end{align*}
where $\mathbb{V}[\kappa_{(l_1,l_2)}]_s$ denote the contribution to the variance due to graphs with $s$ internal pairings. Let $C_{r,s}'$ denote the number of such graphs. Then,
\begin{align*}
    \mathbb{V}[\kappa_{(l_1,l_2)}]_s &= C_{r,s}'\frac{1}{d^r}\sum_{\substack{i_1\cdots i_{l_1} j_1\cdots j_{l_2}\\i_1'\cdots i_{l_1}' j_1'\cdots j_{l_2}'\\ s \text{ internal pairings}} }Q_{i_1\cdots i_{l_1}j_1\cdots j_{l_2}}^{(r)} Q_{i_1'\cdots i_{l_1}'j_1'\cdots j_{l_2}'}^{(r)},\\
    &=C_{r,s}'\frac{1}{d^r} \sum_{r-2s \text{ variables}}\left(\sum_{s \text{ variables}}Q_{i_1\cdots i_{l_1}j_1\cdots j_{l_2}}^{(r)}\right)^2,\\
    &\le \frac{1}{d^r} \sum_{r-s \text{ variables}}d^s {Q_{i_1\cdots i_{l_1}j_1\cdots j_{l_2}}^{(r)}}^2,\\
    &= \frac{1}{d^r}d^{r-s}d^s\bigO{\frac{1}{d^{r-s-1}}} = \bigO{\frac{1}{d^{r-s-1}}},
\end{align*}
where we used the fact that leading order in $Q^{(r)}$ with $s$ indices same is determined by the minimum number of edges in a connected graph with $r-s$ vertices, which is  $r-s-1$.
Note that $s$ is at most $r/2-1$ for even $r$ and $(r-1)/2$ for odd $r$. Thus, for $r\ge 4$, variance is $\bigO{\frac{1}{d^2}}$. Even in the case when $r=3$, and there are no internal pairings, the variance is $\bigO{\frac{1}{d^2}}$. When $r=3$ and $s=1$, we have
\begin{align*}
    \mathbb{V}[\kappa_{(l_1,l_2)}]_1 &= \frac{1}{d^3}\sum_{j}\left(\sum_{i}Q_{iij}\right)^2,\\
    &= \frac{1}{d^3}\sum_{j}\left(\sum_{i}\left(\frac{M_i^TM_j}{D}+\bigO{\frac{1}{d}}\right)\right)^2,\\
    &= \frac{1}{d^3}\sum_{j}\left(\frac{(\sum_{i\neq j}M_i)^TM_j}{D}+\bigO{1}\right)^2,\\
    &= \frac{1}{d^3}\sum_{j}\bigO{1},\\
    &= \bigO{\frac{1}{d^2}}.
\end{align*}
Hence, we have shown that for any $l_1+l_2=r\ge 3$, $\bE{}{\kappa_{(l_1,l_2)}} = \bigO{\frac{1}{d}}$ and $\mathbb{V}[\kappa_{(l_1,l_2)}] = \bigO{\frac{1}{d^2}}$.

\subsection{Proof of Lemma~\ref{lemma:norm_score}}\label{apndx_sec:proof_lemma_norm_score}
\begin{lemma*}
The test error for the exact score function can be expressed as
\begin{equation}
        \cE_{\text{test}}^\ast = \frac{1}{d}\shortexpect_{x,z}{\norm{\sqrt{h_t}\nabla\log P_t(a_tx+\sqrt{h_t}z)+z}^2} = 1-\frac{h_t}{d}\shortexpect_{y\sim P_t}{\norm{\nabla\log P_t(y)}^2}.
    \end{equation}
    Moreover,
    \begin{equation}
        \frac{1}{d}\shortexpect_{y\sim P_t}{\norm{\nabla\log P_t(y)}^2} = \frac{1}{h_t}\left(1-\frac{a_t^2}{h_t}\frac{1}{d}\text{MMSE}(x|y)\right),
    \end{equation}
    where $\text{MMSE}(x|y) = \bE{X\sim P_0,\; Y = a_tX+\sqrt{h_t}Z}{\norm{X-\bE{}{X|Y}}^2}$.
\end{lemma*}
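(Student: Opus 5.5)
We prove both identities with two standard ingredients. The first is Tweedie's formula for the Gaussian channel $y = a_t x + \sqrt{h_t} z$. The second is Stein's identity (Gaussian integration by parts) in the form $\shortexpect[z\mid y] = -\sqrt{h_t}\,\nabla\log P_t(y)$. The second form follows from the first. Differentiating \eqref{eqn:marginal_prob_forward} under the integral gives
\begin{equation*}
\nabla\log P_t(y) = -\frac{y - a_t\shortexpect[x\mid y]}{h_t}.
\end{equation*}
Since $y - a_t\shortexpect[x\mid y] = \sqrt{h_t}\,\shortexpect[z\mid y]$, the claimed form of Stein's identity holds.

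For the first identity, we expand the square:
\begin{equation*}
\frac1d\shortexpect\norm{\sqrt{h_t}\nabla\log P_t(y)+z}^2 = \frac{h_t}{d}\shortexpect\norm{\nabla\log P_t(y)}^2 + \frac{2\sqrt{h_t}}{d}\shortexpect\langle \nabla\log P_t(y), z\rangle + 1 .
\end{equation*}
We condition the cross term on $y$ and use $\shortexpect[z\mid y] = -\sqrt{h_t}\nabla\log P_t(y)$. This gives
\begin{equation*}
\frac{2\sqrt{h_t}}{d}\shortexpect\langle \nabla\log P_t(y), z\rangle = -\frac{2h_t}{d}\shortexpect\norm{\nabla\log P_t(y)}^2 .
\end{equation*}
Combining the terms yields $1-\frac{h_t}{d}\shortexpect\norm{\nabla\log P_t}^2$. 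An alternative route avoids the conditional expectation: apply Gaussian integration by parts to $\shortexpect_z\langle \nabla\log P_t(a_tx+\sqrt{h_t}z), z\rangle = \sqrt{h_t}\,\shortexpect\,\Delta\log P_t$, and then use $\shortexpect_{P_t}\Delta\log P_t = -\shortexpect_{P_t}\norm{\nabla\log P_t}^2$. The latter is integration by parts against $P_t$, with boundary terms vanishing thanks to the Gaussian tails of $P_t$.

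For the second identity, we use Tweedie's formula to write
\begin{equation*}
h_t^2\,\shortexpect\norm{\nabla\log P_t(y)}^2 = \shortexpect\norm{y-a_t\shortexpect[x\mid y]}^2 .
\end{equation*}
Write $y - a_t\shortexpect[x\mid y] = \sqrt{h_t}z + a_t(x-\shortexpect[x\mid y])$. Expanding the square gives three pieces:
\begin{equation*}
h_t d \;+\; a_t^2\,\text{MMSE} \;+\; 2a_t\sqrt{h_t}\,\shortexpect\langle z, x-\shortexpect[x\mid y]\rangle .
\end{equation*}
For the cross term, $x$ is independent of $z$, so $\shortexpect\langle z,x\rangle = 0$. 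Next, substitute $\sqrt{h_t}z = y - a_tx$ in the remaining part:
\begin{equation*}
\sqrt{h_t}\,\shortexpect\langle z,\shortexpect[x\mid y]\rangle = \shortexpect\langle y - a_tx, \shortexpect[x\mid y]\rangle .
\end{equation*}
By the tower property $\shortexpect\langle y,\shortexpect[x\mid y]\rangle = \shortexpect\langle y,x\rangle$. By orthogonality of the error, $\shortexpect\langle x,\shortexpect[x\mid y]\rangle = \shortexpect\norm{\shortexpect[x\mid y]}^2 = \shortexpect\norm{x}^2-\text{MMSE}$. Moreover, $\shortexpect\langle y,x\rangle = a_t\shortexpect\norm{x}^2$. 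Hence $\sqrt{h_t}\,\shortexpect\langle z,\shortexpect[x\mid y]\rangle = a_t\,\text{MMSE}$, and the cross term equals $-2a_t^2\,\text{MMSE}$.

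Summing the three pieces gives
\begin{equation*}
h_t^2\,\shortexpect\norm{\nabla\log P_t}^2 = h_t d - a_t^2\,\text{MMSE}.
\end{equation*}
Dividing by $dh_t^2$ gives the claimed formula.

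The main obstacle is justifying the interchanges: differentiation under the integral in $P_t$, and finiteness of the second moments. Both are routine here. The factor $e^{-\norm{y-a_tx}^2/2h_t}$ is smooth with Gaussian tails for $h_t>0$. Also, $x=\man{Mx/\sqrt D}$ with Lipschitz $\sigma$ has finite second moment, so all expectations are finite and dominated convergence applies.
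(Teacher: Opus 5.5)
Your argument is correct and takes essentially the paper's route: you expand the square and handle the cross term with $\mathbb{E}[z\mid y] = -\sqrt{h_t}\,\nabla\log P_t(y)$, then combine Tweedie's formula with the tower property and the orthogonality of the MMSE error. The only difference is cosmetic: the paper expands $\norm{a_t\mathbb{E}[x\mid y]-y}^2$ directly and uses $\mathbb{E}\norm{y}^2 = a_t^2\mathbb{E}\norm{x}^2 + h_t d$, whereas you split $y-a_t\mathbb{E}[x\mid y] = \sqrt{h_t}\,z + a_t(x-\mathbb{E}[x\mid y])$ before expanding.
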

\begin{proof}
The first statement follows by expanding the l.h.s.
    \begin{gather*}
        \frac{1}{d}\shortexpect_{x,z}{\norm{\sqrt{h_t}\nabla\log P_t(a_tx+\sqrt{h_t}z)+z}^2} = \frac{h_t}{d}\shortexpect_{y}{\norm{\nabla\log P_t(y)}^2}+\frac{2\sqrt{h_t}}{d}\bE{x,z}{z^T\nabla\log P_t(y)}+1,\\ 
        = \frac{h_t}{d}\shortexpect_{y}{\norm{\nabla\log P_t(y)}^2}+\frac{2\sqrt{h_t}}{d}\bE{x,z}{\expectCond{z}{y}^T\nabla\log P_t(y)}+1 = 1-\frac{h_t}{d}\shortexpect_{y}{\norm{\nabla\log P_t(y)}^2}.
    \end{gather*}
    Using $\nabla\log P_t(y) = \frac{a\expectCond{x}{a_tx+\sqrt{h_t}z=y}-y}{h_t}$, we have
    \begin{gather*}
        \frac{1}{d}\shortexpect_{y\sim P_t}{\norm{\nabla\log P_t(y)}^2} = \frac{1}{dh_t^2}\shortexpect_{y\sim P_t}{\norm{a_t\expectCond{x}{y}-y}^2},\\ = \frac{1}{dh_t^2}\bE{y\sim P_t}{a_t^2\norm{\expectCond{x}{y}}^2-2a_t\expectCond{y^Tx}{y}+\norm{y}^2}
        = \frac{1}{dh_t^2}\left\{a_t^2\shortexpect{\norm{\expectCond{x}{y}}^2}-2a_t\shortexpect{y^Tx}+\shortexpect{\norm{y}^2}\right\},\\ = \frac{1}{dh_t^2}\left\{a_t^2\left(\shortexpect{\norm{\expectCond{x}{y}}^2}-\shortexpect{\norm{x}^2}\right)+h_td\right\}
        = \frac{1}{h_t}\left(1-\frac{a_t^2}{h_t}\frac{1}{d}\text{MMSE}(x|y)\right).
    \end{gather*}
\end{proof}

\end{document}